\pgfplotsset{compat=1.11}
\newtheorem*{theorem*}{Theorem}
\newcommand{\bc}{\boldsymbol{c}}
\newcommand{\bg}{\boldsymbol{g}}
\newcommand{\bx}{\boldsymbol{x}}
\newcommand{\bu}{\boldsymbol{u}}
\newcommand{\by}{\boldsymbol{y}}
\newcommand{\bw}{\boldsymbol{w}}
\newcommand{\bv}{\boldsymbol{v}}
\newcommand{\bbeta}{\boldsymbol{\beta}}
\newcommand{\argmin}{\mathop{\mathrm{argmin}}}
\newcommand{\interior}{\mathop{\mathrm{int}}}
\newcommand{\dom}{\mathop{\mathrm{dom}}}
\newcommand{\field}[1]{\mathbb{#1}}
\newcommand{\R}{\field{R}}
\newcommand{\btheta}{\boldsymbol{\theta}}
\DeclareMathOperator{\Regret}{Regret}
\DeclareMathOperator{\Wealth}{Wealth}
\title[]{Implicit Parameter-free Online Learning with Truncated Linear Models}
\begin{document}

\maketitle

\begin{abstract}
Parameter-free algorithms are online learning algorithms that do not require setting learning rates.
They achieve optimal regret with respect to the distance between the initial point and \emph{any} competitor.
Yet, parameter-free algorithms do not take into account the geometry of the losses.
Recently, in the stochastic optimization literature, it has been proposed to instead use \emph{truncated} linear lower bounds, which produce better performance by more closely modeling the losses. In particular, truncated linear models greatly reduce the problem of overshooting the minimum of the loss function. Unfortunately, truncated linear models cannot be used with parameter-free algorithms because the updates become very expensive to compute.
In this paper, we propose new parameter-free algorithms that can take advantage of truncated linear models through a new update that has an ``implicit'' flavor. Based on a \emph{novel decomposition of the regret}, the new update is efficient, requires only one gradient at each step, never overshoots the minimum of the truncated model, and retains the favorable parameter-free properties. We also conduct an empirical study demonstrating the practical utility of our algorithms.
\end{abstract}

\begin{keywords}
Online convex optimization,  Regret,  Truncated linear models,  Parameter-free
\end{keywords}

\section{Introduction}

In this paper, we study Online Convex Optimization (OCO)~\citep{Gordon99,Zinkevich03}.
In this setting, for each of $T$ steps, a learner produces a prediction $\bw_t \in V$ in each step $t$, where $V \subseteq \R^d$ is the feasible convex set. After each prediction, an adversary reveals a convex loss function $\ell_t : V \to \R$ and the learner pays $\ell_t(\bw_t)$. The aim of the learner is to minimize its \emph{regret} with respect to any fixed prediction $\bu \in V$, defined as
\[
\Regret_T(\bu) \triangleq  \sum_{t=1}^T \ell_t(\bw_t) - \sum_{t=1}^T \ell_t(\bu)~.
\]

Depending on the assumptions on the feasible set and the losses, there are many OCO algorithms that achieve optimal regret.
The two main families of OCO algorithms are based on Online Mirror Descent (OMD)~\citep{NemirovskyY83,Warmuth97} and Follow-The-Regularized-Leader (FTRL)~\citep{Shalev-Shwartz07,AbernethyHR08,HazanK08}. For the particular case where $V\equiv \R^d$, \emph{parameter-free} algorithms are minimax optimal~\citep{OrabonaP16,CutkoskyO18}. The key feature of parameter-free algorithms is that the $\Regret_T(\bu)$ scales nearly linearly in $\|\bu\|$, and is \emph{constant} for $\bu=0$. This guarantee is only obtainable by popular strategies like Online Subgradient Descent~\citep{Zinkevich03} if the learning rate is carefully tuned to the (unknown!) value of $\|\bu\|$. This lack of tuning learning rates motivates the name ``parameter-free''. Yet, even this favorable minimax optimality might not be satisfactory.

In particular, most OCO algorithms simply approximate the losses using linear functions, ignoring their geometry. This approach is justified by the fact that the worst-case losses are indeed just linear ones. However, in the extremely common case that the losses are not actually linear, that is they are not worst-case, the algorithm is wasting potentially useful information.
More generally, too much focus on worst-case analyses and asymptotic rates can prevent the design of better algorithms.

In an effort to address this issue and go beyond focusing only on asymptotic rates, \citet{AsiD19} have proposed the use of \emph{truncated linear models} instead of linear models to obtain better stochastic Mirror Descent algorithms with negligible additional computational complexity. Truncated linear models are tighter lower bounds to the original function that do not require additional curvature while still yielding a closed form update for Mirror Descent algorithms. For example, a common issue with standard gradient descent methods is that they can \emph{overshoot} the minimum of the loss during any given iteration. The use of truncated linear models significantly mitigates this concern by providing a ``signal'' that gradient descent might overshoot, allowing the learning algorithm to take a more conservative step. The same idea can be applied to the online (rather than stochastic) setting, but only for OMD.  It is unknown how to use truncated linear models in parameter-free algorithms without having an explosion in the computational time.

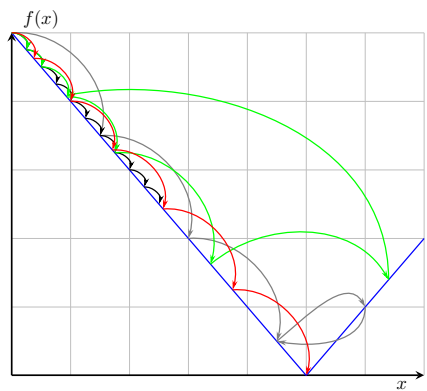
\begin{wrapfigure}{r}{0.4\textwidth}

\centering
\begin{tikzpicture}[scale=0.8, every node/.style={scale=0.8}]
\begin{axis}[axis line style = thick,
    domain = 0:14,
    samples = 200,
    axis x line = middle,
    axis y line = left,
    every axis x label/.style={at={(current axis.right of origin)},anchor=west,below = 2mm, right = 10mm},
    every axis y label/.style={at={(current axis.north west)},above=17mm,right = 1mm},
    xlabel = {$x$},
    ylabel = {$f(x)$},
    ticks = none,
    grid = major
    ]
    \addplot[blue,semithick] {abs(x-10)} [yshift=3pt] node[pos=.95,left]{};  
    \draw[-{Stealth[scale=0.8,angle'=30]},semithick, gray](axis cs:0,10) to [out=5,in=75] (axis cs:3,7){};
    \draw[-{Stealth[scale=0.8,angle'=30]},semithick, gray](axis cs:3,7) to [out=5,in=75] (axis cs:6,4){};
    \draw[-{Stealth[scale=0.8,angle'=30]},semithick, gray](axis cs:6,4) to [out=5,in=75] (axis cs:9,1){};
    \draw[-{Stealth[scale=0.8,angle'=30]},semithick, gray](axis cs:9,1) to [out=40,in=120] (axis cs:12,2){};
    \draw[-{Stealth[scale=0.8,angle'=30]},semithick, gray](axis cs:12,2) to [out=270,in=350] (axis cs:9,1){};    
    \draw[-{Stealth[scale=0.8,angle'=30]},semithick, black](axis cs:0,10) to [out=5,in=75] (axis cs:0.5,9.5){};
    \draw[-{Stealth[scale=0.8,angle'=30]},semithick, black](axis cs:0.5,9.5) to [out=5,in=75] (axis cs:1,9){};
    \draw[-{Stealth[scale=0.8,angle'=30]},semithick, black](axis cs:1,9) to [out=5,in=75] (axis cs:1.5,8.5){};
    \draw[-{Stealth[scale=0.8,angle'=30]},semithick, black](axis cs:1.5,8.5) to [out=5,in=75] (axis cs:2,8){};
    \draw[-{Stealth[scale=0.8,angle'=30]},semithick, black](axis cs:2,8) to [out=5,in=75] (axis cs:2.5,7.5){};
    \draw[-{Stealth[scale=0.8,angle'=30]},semithick, black](axis cs:2.5,7.5) to [out=5,in=75] (axis cs:3,7){};
    \draw[-{Stealth[scale=0.8,angle'=30]},semithick, black](axis cs:3,7) to [out=5,in=75] (axis cs:3.5,6.5){};
    \draw[-{Stealth[scale=0.8,angle'=30]},semithick, black](axis cs:3.5,6.5) to [out=5,in=75] (axis cs:4,6){};
    \draw[-{Stealth[scale=0.8,angle'=30]},semithick, black](axis cs:4,6) to [out=5,in=75] (axis cs:4.5,5.5){};
    \draw[-{Stealth[scale=0.8,angle'=30]},semithick, black](axis cs:4.5,5.5) to [out=5,in=75] (axis cs:5,5){};    
    \draw[-{Stealth[scale=0.8,angle'=30]},semithick, green](axis cs:0,10) to [out=5,in=75] (axis cs:0.5,9.5){};
    \draw[-{Stealth[scale=0.8,angle'=30]},semithick, green](axis cs:0.5,9.5) to [out=5,in=75] (axis cs:1,9){};
    \draw[-{Stealth[scale=0.8,angle'=30]},semithick, green](axis cs:1,9) to [out=5,in=75] (axis cs:15/8,10-15/8){};
    \draw[-{Stealth[scale=0.8,angle'=30]},semithick, green](axis cs:15/8,10-15/8) to [out=5,in=75] (axis cs:7/2,10-7/2){};
    \draw[-{Stealth[scale=0.8,angle'=30]},semithick, green](axis cs:7/2,10-7/2) to [out=5,in=75] (axis cs:27/4,10-27/4){};
    \draw[-{Stealth[scale=0.8,angle'=30]},semithick, green](axis cs:27/4,10-27/4) to [out=40,in=120] (axis cs:117/8*7/8,117/8*7/8-10){};
    \draw[-{Stealth[scale=0.8,angle'=30]},semithick, green](axis cs:117/8*7/8,117/8*7/8-10) to [out=90,in=10] (axis cs:117/64,10-117/64){};
    \draw[-{Stealth[scale=0.8,angle'=30]},semithick, red](axis cs:0,10) to [out=5,in=75] (axis cs:0.75,9.25){};
    \draw[-{Stealth[scale=0.8,angle'=30]},semithick, red](axis cs:0.75,9.25) to [out=5,in=75] (axis cs:2,8){};
    \draw[-{Stealth[scale=0.8,angle'=30]},semithick, red](axis cs:2,8) to [out=5,in=75] (axis cs:3.42,6.58){};
    \draw[-{Stealth[scale=0.8,angle'=30]},semithick, red](axis cs:3.42,6.58) to [out=5,in=75] (axis cs:5.14,4.86){};
    \draw[-{Stealth[scale=0.8,angle'=30]},semithick, red](axis cs:5.14,4.86) to [out=5,in=75] (axis cs:7.5,2.5){};
    \draw[-{Stealth[scale=0.8,angle'=30]},semithick, red](axis cs:7.5,2.5) to [out=5,in=75] (axis cs:10,0){};   
    \end{axis}
\end{tikzpicture}

\caption{Coin-Betting with truncated linear models (Red), Coin-Betting with linear models (Green), OGD with large constant stepsizes (Grey), and OGD with small constant stepsizes (Black). }
\label{fig:illustration}

\end{wrapfigure}

In this work, we propose new parameter-free algorithms that are able to take advantage of truncated linear models. Note that any optimization algorithm based on linear models can overshoot the optimum, but parameter-free algorithms may be even more prone to overshooting because their iterates can move \emph{exponentially far} between iterations. Instead, our new algorithms effectively alleviate this problem, see Figure~\ref{fig:illustration}. Our algorithms are based on a \emph{new decomposition of the regret that takes advantage of the geometry of truncated linear losses} that might be of independent interest.

In summary, our primary contribution is a new algorithm that maintains optimal parameter-free regret bounds \emph{but also} incorporates additional geometric information about the loss functions. While such an improvement is not visible in \emph{worst-case} rates, we demonstrate through an ``implicit-style'' regret bound that the algorithm could perform \emph{significantly better} in practice, and verify this behavior in an empirical study.

The rest of the paper is organized as follows: in Section~\ref{sec:rel} we discuss related work and in Section~\ref{sec:def} we review some definitions and background knowledge. In Section~\ref{sec:difficulty}, we show the difficulties in using truncated linear models in parameter-free algorithms. In Section~\ref{sec:imp_coin}, we present our solution and prove a bound on its regret. Since this algorithm does not have a closed form update rule, in Section~\ref{sec:closed_form_sol} we propose a more efficiently computable variant while still retaining the same theoretical guarantee. In Section~\ref{sec:coor}, we present a coordinate-wise extension that obtains a tighter bound as well as better empirical performance. Finally, in Section~\ref{sec:exp}, we empirically validate our algorithm.

\section{Related work}
\label{sec:rel}

\noindent\textbf{Parameter-free OCO Algorithms}
Parameter-free OCO algorithms are motivated by a desire to avoid choosing a step size and can achieve optimal theoretical regret bounds~\citep[e.g.,][]{McMahanO14,Orabona14,OrabonaP16,CutkoskyB17, FosterRS18,CutkoskyO18,Kotlowski19,KempkaKW19,CutkoskyS19,JunO19, vanderHoeven19, MhammediK20, OrabonaP21, ChenLW21}. Some of them are based on the FTRL framework~\citep{Shalev-Shwartz07,AbernethyHR08,HazanK08} (sometimes indirectly through methods such as coin-betting). The closest work to our algorithms is the CODE algorithm~\citep{ChenLO22} which is the first attempt to combine parameter-free methods with truncated losses. Inspired by the Importance Weight Awareness updates in~\citet{KarampatziakisL11}, CODE models the optimization algorithm with an ODE, and solves the ODE in a closed form to make infinitely many infinitesimal parameter-free updates on truncated losses. While CODE solves the ODE in closed form, it does not have any theoretical guarantee. In our work, in each step $t$, we consider the loss in two points only: on the current prediction and the updated one. This gives rise to an implicit equation that we can solve for truncated losses and to an optimal regret guarantee.

\noindent\textbf{Truncated Linear Models and Implicit Updates}
Truncated linear models were proposed in \citet{AsiD19} to create a tighter surrogate model for optimization.
While the use of convex linear lower bounds is also the core method in OCO algorithms~\citep[see, e.g.,][]{Orabona19}, 
we are not aware of any other online learning algorithm with a regret guarantee based on truncated linear models.
\citet{AsiD19} incorporate truncated linear models into the Mirror Descent update~\citep{NemirovskyY83}, forming a
\emph{proximal/implicit} update~\citep{Moreau65,Martinet70,Rockafellar76,KivinenW97,ParikhB14}.
In online learning, \citet{KulisB10} provides the first regret bounds for implicit updates that
match those of OMD, while \citet{McMahan10} makes the first attempt to quantify the advantage of the implicit updates in the
regret bound. \citet{SongLLJZ18} generalize the results in \citet{McMahan10} to Bregman divergences and strongly convex
functions, and quantify the gain differently in the regret bound. Finally, \citet{CampolongoO20} show that implicit
updates give rise to regret guarantees that depend on the temporal variability of the losses as well.
We will match the dependency on the subgradients in our final results to the one of FTRL with implicit
updates~\citep{McMahan10}, which underlines the ``implicit'' nature of our algorithm.

\section{Preliminary}
\label{sec:def}

In this section, we introduce some of the needed background and definition. 

\noindent \textbf{Convex Analysis Definitions}
For a function $f:\R^d \to \R \cup \{+\infty\}$, we define a subgradient of $f$ in $\bx \in \R^d$ as a vector $\bg \in \R^d$ that satisfies $f(\by)\geq f(\bx) + \langle \bg, \by-\bx\rangle, \ \forall \by \in \R^d$.
We denote the set of subgradients of $f$ at $\bx$ by $\partial f(\bx)$.
A function $f : \R^d \to \R \cup \{+\infty\}$ is $\mu$-strongly convex over a convex set $V \subseteq \interior \dom f$ w.r.t. $\|\cdot\|$ if $\forall \bx, \by \in V$, we have $\bg \in \partial f(\bx)$,  $f(\by) \geq f(\bx) + \langle \bg , \by - \bx \rangle + \frac{\mu}{2} \| \bx - \by \|^2$. 
The Fenchel conjugate $f^\star$ of a function $f:\R^d\to\R$ is defined as $f^\star (\btheta) = \sup_{\bx} \langle \btheta, \bx\rangle - f(\bx)$. We denote the projection of a vector $\bx$ onto a convex set $B$ as $\Pi_B(\bx)\triangleq\argmin_{\bu \in B} \ \|\bx-\bu\|^2$.

\noindent\textbf{Coin-Betting and Online Learning}
We now explain the coin-betting framework for parameter-free algorithm design~\citep{OrabonaP16}, which operates through convex duality. We consider a vector-valued ``coin'' $\bc_t\in \R^d$ with $\|\bc_t\|\le 1$ provided to a gambler in response to a ``bet'' $\bx_t\in \R^d$. The gambler earns $\langle \bc_t, \bx_t\rangle$ dollars, for a total wealth of $\Wealth_t = \epsilon + \sum_{i=1}^t \langle \bc_i, \bx_i\rangle$ at time $t$, assuming an initial endowment of $\epsilon$. We enforce that $\bx_t = \bbeta_t \Wealth_{t-1}$ for some \emph{betting fraction} $\|\bbeta_t\|\le 1$, which intuitively corresponds to preventing the gambler from betting more money than the gambler has: $\Wealth_t\ge 0$ for all $t$. The goal of the gambler is of course to make the wealth as high as possible. To use this gambling game in online learning, set $\bc_t \in -\partial \ell_t(\bx_t)$, and let the learner's $\bw_t\in \R^d$ be simply equal to the gambler's $\bx_t$. To analyze the regret, suppose that $\Wealth_T\ge H\left(\sum_{t=1}^T \bc_t\right)$ for some arbitrary function $H$. Then we have:
\begin{align*}
    \sum_{t=1}^T \ell_t(\bw_t) - \ell_t(\bu)&\le \sum_{t=1}^T \langle \bc_t,  \bu - \bw_t \rangle =\epsilon+\sum_{t=1}^T \langle \bc_t,\bu\rangle - \Wealth_T\\
    &\le \epsilon+\sum_{t=1}^T \langle \bc_t,\bu\rangle  - H\left(\sum_{t=1}^T \bc_t\right)\le \epsilon+\sup_{G\in \R^d} \langle G,\bu\rangle - H(G)=\epsilon+H^\star(\bu)
\end{align*}
where in the first inequality we use the definition of the subgradient, in the second the assumption on $H$, and the last equality the definition of Fenchel conjugate $H^\star$.

Critically, notice that the wealth lower-bound $\Wealth_T\ge H\left(\sum_{t=1}^T \bc_t\right)$ does not involve $\bu$. Instead, $\bu$ appears only in analysis through Fenchel duality, which provides the parameter-free property. Hence, \emph{we can use any betting algorithm that guarantees a high wealth to design a parameter-free optimization algorithm.} 

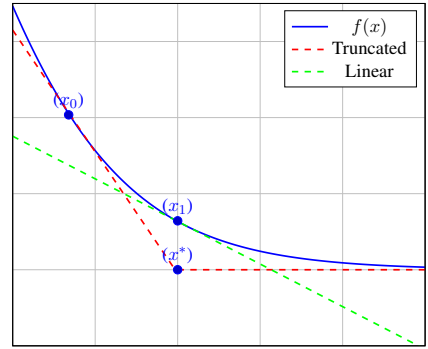
\begin{wrapfigure}{r}{0.4\textwidth}

\centering
\begin{tikzpicture}[scale=0.8, every node/.style={scale=0.8}]
\begin{axis}[
    domain = 0:5,
    samples = 200,
    ticks = none,
    grid = major,
    xmin=0, xmax=5,
    ymin=-0.2, ymax=0.7,
    ]
    \addplot[blue,thick] {ln(1+e^(-x))} [yshift=0pt] node[pos=.95,left]{}; 
    \addplot[red,thick,dashed] {max(0.47-0.32*(x-0.5),0)} [yshift=0pt] node[pos=.95,left]{};
    \addplot[green,thick,dashed] {0.127-0.112*(x-2)} [yshift=0pt] node[pos=.95,left]{};
    \legend{$f(x)$,Truncated,Linear}
    \addplot+[only marks,forget plot,blue] coordinates {(2,0)};
\end{axis}

\begin{axis}[nodes near coords,enlargelimits=0.2,
    domain = 0:5,
    samples = 200,
    ticks = none,
    grid = none,
    xmin=0, xmax=5,
    ymin=-0.2, ymax=0.7,
    ]
	\addplot+[only marks,
		point meta=explicit symbolic] 
	coordinates {
		(-0.05,0.47) [($x_0$)]
		(1.8,0.08 ) [($x_1$)]
		(1.8,-0.1) [($x^{*}$)]
	};
\end{axis}
\end{tikzpicture}
\caption{Models of the function $f(x)=\log (1+e^{-x})$: a truncated linear model (Red) built around the point $x_0$, and a linear model (Green) built around the point $x_1$. $x^{*}$ is the hinge corner.}
\label{fig:trunc}

\end{wrapfigure}

\noindent\textbf{Truncated Linear model}
For any $\hat{\ell}_t$ such that $\hat{\ell}_t(\bw_t) = \ell_t(\bw_t)$ and $\hat{\ell}_t(\bw) \leq \ell_t(\bw),~ \forall \bw$, we have $\sum_{t=1}^T (\ell_t(\bw_t) - \ell_t(\bu)) 
\leq \sum_{t=1}^T (\hat{\ell}_t(\bw_t) - \hat{\ell}_t(\bu))$. Linear models $\hat{\ell}_t(\bw) = \ell_t(\bw_t) + \langle \bg_t, \bw - \bw_t\rangle$ satisfy this property with $\bg_t \in \partial \ell_t(\bw_t)$, which motivates the popularity of online \emph{linear} optimization. However, we might think to design tighter approximations. In particular, \citet{AsiD19} proposed truncated linear models: 
\[
\hat{\ell}_t(\bw) \triangleq \max\{\ell_t(\bw_t)+\langle\bg_t, \bw -\bw_t\rangle, \inf_{\bw} \ell_t(\bw)\}~.
\] 
Besides the property above, truncated linear models satisfy: i) $\hat{\ell}_t(\bw)$ is convex and subdifferentiable on the domain; ii) For any $\bw$, we have $\bg^+=h \bg_t$ where $\bg^+ \in \partial \hat{\ell}_t(\bw)$ and $h \in [0,1]$;
iii) $\hat{\ell}_t(\bw) \geq \inf_{\bw} \ell_t(\bw)$.
In the following, we will assume that $\inf_{\bw} \ell_t(\bw)=0$, w.l.o.g. for loss functions bounded from below.

\section{Difficulties in Using Truncated Models in Parameter-free Algorithms}
\label{sec:difficulty}

Many parameter-free algorithms are based on FTRL. Hence, it is natural to ask whether it is possible to directly use truncated linear models instead of linear models in FTRL to utilize truncated linear models in parameter-free algorithms.
This approach immediately runs into significant problems. Specifically, FTRL algorithms usually maintain the sum of the losses observed so far.

One can easily store this sum when the losses are linear, but truncated losses would require $\mathcal{O}(T)$ space and $\mathcal{O}(poly(T))$ time for every update.
Thus, using truncated linear models with FTRL has the same computational cost as using the original cost functions - 
the simplification to using truncated linear models does not appear to help.
In contrast, our solution has the same computational and space complexity of online gradient descent.

Another possibility is to adapt the coin-betting design of parameter-free algorithms~\citep{OrabonaP16}
to truncated linear models. A moment of thinking should convince the reader this is far from simple:
The reduction from optimization to coin-betting described in
Section~\ref{sec:def} works by transforming subgradients into coin outcomes, but
the subgradient of a truncated loss is exactly the same as the subgradient of the original function!
Thus, simply using the reduction as-is on the truncated linear model would provide
\emph{no benefit} over using the simpler linear model.

Another method that seems possible is using truncated linear models in online gradient descent,
and then make online gradient descent parameter-free via some application of
the doubling trick~\citep[2.3.1]{Shalev-Shwartz12}. 
Perhaps surprisingly, employing the doubling trick in this way is quite difficult.
The only known application of the doubling trick is in \citet{StreeterM12}, but it
does not achieve the optimal regret bound, and more importantly, employs a delicate identity relating the regret
and sums of gradients that may fail for the truncated linear model.
Furthermore, the doubling trick usually has terrible empirical performance,
which completely defeats the purpose of using truncated linear models.

\section{Parameter-free OCO with Truncated Linear Models}
\label{sec:imp_coin}

In this section, we introduce our novel parameter-free algorithms for truncated linear models.

We overcome the difficulties described above through a multi-step process, during which we will introduce three separate algorithms.
First, we introduce a new regret decomposition to take advantage of truncated losses while still only requiring storage of a few vectors. 
As an illustration of the key principles, we will use this new regret decomposition to design Algorithm~\ref{alg:imp_coin}.
While the theoretical guarantee of Algorithm~\ref{alg:imp_coin} matches our desiderata, the update does not have a closed form.
Hence, we then show how to slightly change our algorithm to obtain a closed form update in Algorithm~\ref{alg:imp_coin_closed_form}. Finally, in Section~\ref{sec:coor}, we consider each coordinate as a separate 1-d problem to obtain a coordinate-wise variant that achieves better performance both theoretically and empirically. 

As mentioned previously, using truncated linear models in an FTRL-based parameter-free algorithm would result in an inefficient update.

Therefore, in the following, we show a different approach inspired by the idea of implicit updates~\citep{KivinenWH06,KulisB10,McMahan10}. Our method introduces a new variation on the standard approach to bounding online convex optimization with online linear optimization, and an accompanying update to the regret/reward duality.

\noindent\textbf{A New Regret Decomposition}
We are interested in upper bounding the terms $\ell_t(\bw_t) - \ell_t(\bu)$ for any $\bu \in \R^d$. The usual method~\citep{Zinkevich03} is to upper bound the regret by linear terms, and then proceed to bound the regret on the linear losses as follows:
\[\Regret_T(\bu) 
=\sum_{t=1}^T \ell_t(\bw_t) - \ell_t(\bu) 
\leq \sum_{t=1}^T \langle \bg_t, \bw_t - \bu \rangle,
\]
where $\bg_t \in \partial \ell_t(\bw_t)$. While this approach gives worst-case optimal upper bounds, it completely ignores the geometry of the loss functions $\ell_t$. In contrast, we consider upper bounding the term $\ell_t(\bw_t) - \ell_t(\bu)$ with the truncated linear loss $\hat{\ell}_t$, and decompose the regret on the truncated linear losses from an ``implicit'' point of view for a tighter bound. Specifically, for any $\bu \in\R^d$ we have
\begin{equation}
\label{eq:decomposition}
\begin{aligned}
\ell_t(\bw_t) - \ell_t(\bu) &\leq  \hat{\ell}_t(\bw_t) - \hat{\ell}_t(\bu) =\hat{\ell}_t(\bw_t) - \hat{\ell}_t(\bw_{t+1})+\hat{\ell}_t(\bw_{t+1})-\hat{\ell}_t(\bu)\\ 
&\leq \langle \bg_t,\bw_t - \bw_{t+1} \rangle + \langle \bg_t^+ , \bw_{t+1}-\bu \rangle~, 
\end{aligned}
\end{equation}
where $\bg_t \in \partial \ell_t(\bw_t)$, $\bg_t \in \partial \hat{\ell}_t(\bw_t)$, $ \bg_t^+ \in \partial \hat{\ell}_t(\bw_{t+1})$, the first inequality is true by the property of the truncated linear model, and the second inequality is from the convexity.
This decomposition can \emph{take into account part of the geometry of the function through $\bg_t^+$}, which quantifies how far we are from the infimum of $\ell_t$. Note that the decomposition itself is very general and does \emph{not} require $\hat \ell$ to be a truncated linear loss: this structure is primarily used to form more efficient algorithms.

\noindent\textbf{Implicit Coin-Betting}
To leverage this decomposition, we now define a modified notion of the \emph{wealth} quantity described in Section~\ref{sec:def}.
Our key idea is to realize that \emph{the regret/reward duality is more general than previously thought}. In particular, we define
$\Wealth_0=\epsilon$ and $\Wealth_T \triangleq \Wealth_{T-1} - \langle \bg_t,\bw_t - \bw_{t+1} \rangle - \langle \bg_t^+,\bw_{t+1}\rangle$, to have
\begin{equation}
\label{eq:wealth}
\Wealth_T 
= \epsilon - \sum_{t=1}^T (\langle \bg_t,\bw_t - \bw_{t+1} \rangle + \langle \bg_t^+,\bw_{t+1}\rangle)~.
\end{equation}
This implies $\Regret_T(\bu) \leq \epsilon + \left\langle -\sum_{t=1}^T \bg_t^+,\bu \right\rangle - \Wealth_T$.
Suppose that we obtain a bound $\Wealth_T\geq \psi_T \left(-\sum_{t=1}^T \bg_t^+\right)$ for some $\psi_T$. Then, we can still use the Fenchel conjugate:
\begin{align*}
\Regret_T(\bu) - \epsilon 
\leq -\left\langle \sum_{t=1}^T \bg_t^+,\bu \right\rangle - \psi_T\left(-\sum_{t=1}^T \bg_t^+ \right)\leq \sup_{\by} \ \langle \by,\bu \rangle - \psi_T(\by)
=\psi_T^\star(\bu)~.
\end{align*}
Hence, it suffices to design an algorithm that guarantees a lower bound on $\Wealth_T$ to achieve a regret upper bound, \emph{even for our modified notion of wealth}. Moreover, given that our regret decomposition takes into account the geometry of the truncated linear losses, we can expect a regret guarantee that becomes tighter when we are close to the infimum of the functions $\ell_t$.

\begin{algorithm}[t]
   \caption{Parameter-free OCO with Truncated Linear Models}
   \label{alg:imp_coin}
\begin{algorithmic}[1]
   \STATE Initialize $\bbeta_1 \leftarrow \boldsymbol{0},\Wealth_0\leftarrow1,\eta_1 \leftarrow 1/3$
   \FOR{$t=1$ {\bfseries to} $T$}
   	\STATE Predict $\bw_{t} \leftarrow \bbeta_t \Wealth_{t-1}$
   	\STATE Receive $\ell_t(\bw_t)$ and $\bg_t \in \partial \ell_t(\bw_t)$
   	\STATE Calculate $\bg_t^+$ (see Section~\ref{sec:comp_h}). $\bg_t^+=h_t\bg_t$ by property (ii) of truncated linear models 
   	\STATE $\hat{\bbeta}_{t+1} \leftarrow \bbeta_t - \frac{\bg_t^+ + 2\bbeta_t(\|\bg_t\|^2-\|\bg_t^+-\bg_t\|^2)}{1/\eta_1+2\sum_{i=1}^{t-1} (\|\bg_{i}\|^2 - \|\bg_{i}^+ - \bg_{i}\|^2)}$
   	\STATE $ \bbeta_{t+1} \leftarrow \hat{\bbeta}_{t+1}/\max\left(1, 2\|\hat{\bbeta}_{t+1}\|\right)$
   	\STATE $\Wealth_t \leftarrow \Wealth_{t-1}\frac{1-\langle \bg_t, \bbeta_t\rangle}{1+(h_t-1)\langle \bg_t,\bbeta_{t+1}\rangle}$
   \ENDFOR
\end{algorithmic}
\end{algorithm}

We designed Algorithm~\ref{alg:imp_coin} to maximize the wealth in \eqref{eq:wealth}, yielding a regret bound in Theorem~\ref{thm:main}.

\begin{theorem}
\label{thm:main}
Assume $\ell_t(\bx),  t=1,\dots,T$, to be convex functions. Set $\Wealth_0 = \epsilon = 1$ and assume that $\|\bg_t\|\leq 1$ and $\bg_t^+ = h_t \bg_t$ where $h_t \in [0,1]$. 

Then, Algorithm~\ref{alg:imp_coin} satisfies

\begin{align*}
\Regret_T(\bu) = &\mathcal{O}\left(  \max\left\{ \|\bu\| \ln\left(\|\bu\|\left(1+\sum_{t=1}^T \|\bg_t\|\|\bg_t^+\| \right)\right)\right.\right. ,\\
& \left.\left.\|\bu\|\sqrt{\sum_{t=1}^T (2\|\bg_t\|\|\bg_t^+\|-\|\bg_t^+\|^2) \cdot \ln\left(1+\|\bu\|\sum_{t=1}^T (2\|\bg_t\|\|\bg_t^+\|-\|\bg_t^+\|^2)\right)} \right\}  \right).
\end{align*}
\end{theorem}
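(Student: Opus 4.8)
The plan is to exploit the regret/reward duality already established in the excerpt: it suffices to produce a function $\psi_T$ with $\Wealth_T \ge \psi_T(-\sum_{t=1}^T \bg_t^+)$, after which $\Regret_T(\bu) \le \epsilon + \psi_T^\star(\bu)$. First I would unroll the wealth recursion together with the prediction rule $\bw_t = \bbeta_t \Wealth_{t-1}$ to obtain the multiplicative form $\Wealth_T = \epsilon \prod_{t=1}^T \frac{1-\langle\bg_t,\bbeta_t\rangle}{1-(1-h_t)\langle\bg_t,\bbeta_{t+1}\rangle}$, which is exactly the wealth update of Algorithm~\ref{alg:imp_coin} once we use $\bg_t^+ = h_t\bg_t$. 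Taking logarithms reduces the task to lower-bounding $\sum_{t=1}^T \delta_t$ with $\delta_t = \ln(1-\langle\bg_t,\bbeta_t\rangle) - \ln(1-(1-h_t)\langle\bg_t,\bbeta_{t+1}\rangle)$. The normalization $\bbeta_{t+1}=\hat\bbeta_{t+1}/\max(1,2\|\hat\bbeta_{t+1}\|)$ guarantees $\|\bbeta_t\|\le\tfrac12$, so with $\|\bg_t\|\le 1$ every inner product lies in $[-\tfrac12,\tfrac12]$ and the elementary bounds $\ln(1-x)\ge -x-x^2$ and $-\ln(1-y)\ge y$ are valid, giving the per-step estimate $\delta_t \ge \langle\bg_t,\bbeta_{t+1}-\bbeta_t\rangle - \langle\bg_t,\bbeta_t\rangle^2 - \langle\bg_t^+,\bbeta_{t+1}\rangle$.

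Next I would identify the betting-fraction update as FTRL. Writing $a_t \triangleq \|\bg_t\|^2 - \|\bg_t^+-\bg_t\|^2 = 2\|\bg_t\|\|\bg_t^+\| - \|\bg_t^+\|^2$ (the very quantity appearing in the theorem), the $\hat\bbeta_{t+1}$ update is precisely the incremental form of $\bbeta_{t+1} = \argmin_{\|\bbeta\|\le 1/2} \sum_{i=1}^t f_i(\bbeta) + \frac{1}{2\eta_1}\|\bbeta\|^2$ with the strongly convex surrogates $f_t(\bbeta) = \langle\bg_t^+,\bbeta\rangle + a_t\|\bbeta\|^2$, the projection being the $\max(1,2\|\cdot\|)$ rescaling. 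The crucial ``implicit'' feature is that $\bbeta_{t+1}$, the leader after $f_t$ has been seen, is what enters $\delta_t$. I would therefore invoke the Be-The-Leader inequality for this regularized FTRL, namely $\sum_t f_t(\bbeta_{t+1}) \le \sum_t f_t(\bbeta) + \frac{1}{2\eta_1}\|\bbeta\|^2$ for every $\|\bbeta\|\le\tfrac12$, to swap the implicit iterate for an arbitrary comparator. Summing the per-step estimate and substituting yields $\ln(\Wealth_T/\epsilon) \ge \langle -\sum_t\bg_t^+,\bbeta\rangle - \big(\tfrac{1}{2\eta_1}+\sum_t a_t\big)\|\bbeta\|^2 + L$, where $L = \sum_t\langle\bg_t,\bbeta_{t+1}-\bbeta_t\rangle - \sum_t\langle\bg_t,\bbeta_t\rangle^2 + \sum_t a_t\|\bbeta_{t+1}\|^2$ collects the stability residual.

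The main obstacle is controlling $L$: the term $\sum_t a_t\|\bbeta_{t+1}\|^2$ is nonnegative and helps, but $\sum_t\langle\bg_t,\bbeta_{t+1}-\bbeta_t\rangle$ and $-\sum_t\langle\bg_t,\bbeta_t\rangle^2$ may be negative, and I must show $L \ge -C$ for an absolute constant $C$ rather than a quantity growing with $T$. This is exactly the FTRL stability analysis specialized to the implicit coupling: I would bound the one-step movement $\|\bbeta_{t+1}-\bbeta_t\|$ through the denominator $\tfrac{1}{\eta_1}+2\sum_{i<t}a_i$, show that $\sum_t\langle\bg_t,\bbeta_{t+1}-\bbeta_t\rangle$ telescopes/converges, and check that the initialization $\eta_1=1/3$ renders the first steps harmless. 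This bookkeeping, together with verifying that the implicit evaluation point genuinely improves rather than degrades the bound, is the delicate heart of the argument and the place where the novel decomposition must pull its weight.

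Finally I would take the supremum over $\|\bbeta\|\le\tfrac12$ to define $\psi_T$. With $V_T \triangleq \tfrac{1}{2\eta_1}+\sum_t a_t$ and $\theta_T = -\sum_t\bg_t^+$, maximizing $\langle\theta_T,\bbeta\rangle - V_T\|\bbeta\|^2$ over the ball has two regimes: an interior optimum $\|\theta_T\|^2/(4V_T)$ when $\|\theta_T\|\le V_T$, and a boundary optimum $\|\theta_T\|/2 - V_T/4$ otherwise, so $\Wealth_T \ge \epsilon\, e^{-C}\max\{\exp(\|\theta_T\|^2/(4V_T)),\exp(\|\theta_T\|/2 - V_T/4)\}$. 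Substituting this $\psi_T$ into $\Regret_T(\bu)\le \epsilon + \psi_T^\star(\bu)$ and computing the Fenchel conjugate of each branch (the Gaussian-type branch yielding the $\|\bu\|\sqrt{\sum_t a_t\,\ln(\cdots)}$ term and the exponential-of-linear branch the $\|\bu\|\ln(\cdots)$ term) produces the stated maximum of two terms; this last conjugate computation is routine and follows the standard parameter-free template, so I expect no real difficulty there.
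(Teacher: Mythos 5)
Your plan reproduces the paper's outer skeleton (multiplicative wealth recursion, logarithm, wealth-duality, Fenchel conjugate), but it breaks at the per-step logarithmic inequality, which is exactly where the theorem's improvement must be earned. You bound the two logarithms \emph{separately}, via $\ln(1-x)\ge -x-x^2$ and $-\ln(1-y)\ge y$. Both inequalities are valid on the relevant range, but the resulting penalty $-\langle \bg_t,\bbeta_t\rangle^2$ has coefficient $\|\bg_t\|^2$ \emph{independently of $h_t$}. The paper's Lemma~\ref{lemma:one_step} instead analyzes the two logarithms \emph{jointly}: after moving $\bbeta_{t+1}$ to $\bbeta_t$ at cost $2\|\bg_t\|\,\|\bbeta_{t+1}-\bbeta_t\|$, it studies $f(x)=\ln(1-x)-\ln(1+(h_t-1)x)$ as a single function and shows the quadratic penalty is $-(\|\bg_t\|^2-\|\bg_t^+-\bg_t\|^2)\|\bbeta_t\|^2$, whose coefficient $\|\bg_t\|^2(2h_t-h_t^2)$ vanishes as $h_t\to 0$ because the two logarithms then cancel each other. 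Your separate bounds destroy this cancellation, and the loss is not recoverable downstream: your residual $L$ contains $-\sum_t\langle\bg_t,\bbeta_t\rangle^2$, compensated only by $\sum_t a_t\|\bbeta_{t+1}\|^2$ with $a_t=\|\bg_t\|^2(2h_t-h_t^2)$. Consider an adversary that first plays rounds with $h_t=1$ until $\|\bbeta_t\|$ is near $1/2$, and then plays many rounds whose hinge corner sits essentially at $\bw_t$ (so $h_t\approx 0$) with $\bg_t$ aligned with $\bbeta_t$: each such round contributes roughly $-1/4$ to $L$ while $a_t\approx 0$ contributes nothing, so $L=-\Omega(T)$ and your claim $L\ge -C$ is false. (The true wealth is unharmed in this scenario---the two logarithms nearly cancel---it is only your lower bound that collapses.) At best, your estimate yields a bound with $\sum_t\|\bg_t\|^2$ in place of $\sum_t(2\|\bg_t\|\|\bg_t^+\|-\|\bg_t^+\|^2)$, i.e.\ the standard parameter-free bound rather than the claimed theorem.

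There are also two structural problems. First, the update of Algorithm~\ref{alg:imp_coin} is online \emph{gradient descent} on the surrogates $f_t$, not regularized FTRL: at $t=1$ the algorithm produces $\hat{\bbeta}_{2}=-\eta_1\bg_1^+$, whereas your regularized leader would be $-\bg_1^+/(1/\eta_1+2a_1)$. Hence the Be-The-Leader inequality you invoke does not apply to the algorithm's iterates. The paper instead uses a strongly convex OGD regret lemma evaluated at $\bbeta_t$ (Lemma~\ref{lemma:OGD}, Corollary~\ref{corol:OGD}), combined with the movement bound $\|\bbeta_{t+1}-\bbeta_t\|\le 3\|\bg_t^+\|/(1+2\sum_{i=1}^{t}\|\bg_i\|\|\bg_i^+\|)$ (Lemma~\ref{lemma:log}); note also that the additive loss in $\ln\Wealth_T$ is \emph{logarithmic} in $\sum_t\|\bg_t\|\|\bg_t^+\|$, not constant (this is the $(1+2\sum_t\|\bg_t\|\|\bg_t^+\|)^{7.25}$ factor dividing the wealth), so demanding an absolute constant $C$ is both unattainable and unnecessary. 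Second, your final wealth bound should take the \emph{minimum}, not the maximum, of the two regimes: the constrained supremum of $\langle\theta_T,\bbeta\rangle-V_T\|\bbeta\|^2$ over the ball equals the interior value $\|\theta_T\|^2/(4V_T)$ only when the unconstrained optimizer lies inside the ball, and otherwise equals the smaller boundary value; taking the max of the two closed forms overclaims precisely in the boundary regime, which is why the paper's Lemma~\ref{lemma:proj} is stated with a min.
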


To convey the main ideas, here we present a proof sketch, the full proof is included in the Appendix.

\begin{proof}[Proof sketch]
We first lower bound the wealth of the algorithm. From the definition of the wealth \eqref{eq:wealth} and the fact that the algorithm predicts with $\bw_t=\bbeta_t\Wealth_{t-1}$, we have
\begin{equation}
\label{eq:wealth_def}
\Wealth_t = \Wealth_{t-1} - \langle \bg_t, \bw_t-\bw_{t+1}\rangle - \langle \bg_t^+, \bw_{t+1}\rangle 
\Rightarrow \Wealth_t = \frac{\Wealth_{t-1}(1-\langle \bg_t,\bbeta_{t}\rangle)}{1+\langle \bg_t^+ - \bg_t,\bbeta_{t+1}\rangle}~.
\end{equation}
This implies that $\ln \Wealth_T =\ln \epsilon + \sum_{t=1}^T (\ln(1-\langle \bg_t, \bbeta_{t}\rangle) - \ln(1+\langle \bg_t^+-\bg_t,\bbeta_{t+1}\rangle))$. It is possible to show that $\ln \Wealth_T -\ln \epsilon$ can be lower bounded as
\begin{equation}
\label{eq:log_wealth}
\begin{aligned}
\sum_{t=1}^T &\left(\ln(1-\langle \bg_t,\bbeta_{t}\rangle) - \ln(1+\langle \bg_t^+-\bg_t,\bbeta_{t+1}\rangle) \right) \\
&\geq \sum_{t=1}^T \left[-\langle \bg^+_t, \bbeta_t\rangle -  (\|\bg_t\|^2 - \|\bg_t^+ - \bg_t\|^2 )\|\bbeta_t\|^2 -2\|\bg_t\| \|\bbeta_{t+1}-\bbeta_t\|\right]~. 
\end{aligned}
\end{equation}
So, $\bbeta_t$ is designed to be the output of running OGD (Online Gradient Descent) on $\mu_t$ strongly-convex losses $f_t(\bbeta) \triangleq \langle \bg_t^+,\bbeta\rangle + \frac{\mu_t}{2}\| \bbeta_t \|^2$, where $\mu_t=2(\|\bg_t\|^2 - \|\bg_t-\bg_t^+\|^2)$ with $\bbeta\in B$, $B=\{ \bx | \|\bx\|\leq 1/2\}$, and stepsizes $\eta_t = \frac{1}{1/\eta_1 + \sum_{i=1}^{t-1}\mu_i}$. Standard OGD analysis provides the lower bound for $\sum_{t=1}^T \langle \bg^+_t, \bbeta_t\rangle + (\|\bg_t\|^2 - \|\bg_t^+ - \bg_t\|^2 )\|\bbeta_t\|^2$. Besides, $\| \bbeta_{t+1} -\bbeta_t \|$ is upper bounded by $\frac{3\|\bg_t^+\|}{1+2\sum_{i=1}^t \|\bg_i\|\|\bg_i^+\|}$. Combining all pieces together leads to the lower bound 
\[
\ln\Wealth_T \geq -3/2 -7.25\ln \left(1+2\sum_{t=1}^T \|\bg_t\|\| \bg_t^+\| \right) + \min\left\{\frac{\left\|\sum_{t=1}^{T}\bg_t^+\right\|}{4}, \frac{\left\|\sum_{t=1}^T \bg_t^+\right\|^2}{2\sum_{t=1}^T \mu_t} \right\}~.
\]

A lower bound on $\Wealth_T$ indicates an upper bound on regret. Now, we derive the upper bound on the regret from the Fenchel conjugate of the function above.
\end{proof}

Note that the bounded subgradient assumption is a known requirement shared by all parameter-free algorithms, see lower bound in \citet{CutkoskyB17}. However, the limitation is milder than it seems at first blush: this Lipschitz bound can actually be over-estimated by a factor of $\sqrt{T}$ before significant damage is done to the regret bound. This can be seen by observing that other than an $O(\log{T})$ term, our regret bounds scale with the observed norms of the gradients. Thus, the limitation is actually rather benign - we simply assume a bound of 1 to simplify equations.

\noindent\textbf{Comparison with Parameter-Free Bounds}
Previous work of \citet{CutkoskyO18} achieved a regret bound of $\mathcal{O}\left(\|\bu\|\sqrt{ \sum_{t=1}^T \|\bg_t\|^2}\right)$ which has the optimal worst-case dependence on $\|\bg_t\|$~\citep{AbernethyBRT08,Cutkosky18}. In Theorem~\ref{thm:main}, we obtain a regret bound depending on $\|\bg_t\|^2-\|\bg_t^+ - \bg_t\|^2=\|\bg_t\|^2(2h_t-h_t^2)$. So, as long as the algorithm goes close to the hinge corner of the truncated linear model, it will yield an $h_t < 1$ and a smaller regret. Intuitively, this should be expected to occur whenever it is possible to obtain small loss as obtaining small loss requires reaching the hinge of the truncated linear model.

\noindent\textbf{Relation to Implicit Updates}
Truncated linear models were introduced as an approximation of the implicit updates~\citep{AsiD19}.
In this view, it is instructive to compare the dependency on the subgradients in Theorem~\ref{thm:main} and the regret bounds for implicit updates. For example, \citet[Theorem 2]{McMahan10} gives a regret guarantee for FTRL with implicit updates and non-adaptive regularizer that depends on $\langle \bg_t-\frac{1}{2}\bg^+_t,\bg_t^+\rangle$. This quantity is exactly $\frac{1}{2}(\|\bg_t\|^2 - \|\bg_t^+-\bg_t\|^2)$ that appears in Theorem~\ref{thm:main}. This supports the idea that the decomposition in \eqref{eq:decomposition} ``emulates'' the idea of implicit updates in parameter-free algorithms. However, there is a subtle difference: in standard implicit updates $\bg_t^+ \in \partial \ell_t(\bw_{t+1})$ is a subgradient of the original loss function. Instead, here $\bg_t^+ \in \partial \hat{\ell}_t(\bw_{t+1})$, so it is a subgradient of the truncated linear model. We can see this as a price we pay to obtain a smaller computational complexity compared to standard implicit updates. 

\noindent \textbf{Comparison with OMD with truncated linear model} To the best of our knowledge, there are actually no regret guarantees with OMD with truncated linear models in the literature (\citet{AsiD19} do not consider the adversarial setting). However, it is quite likely that OMD with truncated models can achieve an implicit regret similar to that reported by \citet{McMahan10} \emph{subject to oracle tuning of the learning rates}. Our results match this benchmark in the dependency on $\bg_t$ and $\bg_t^+$ and improve in the dependency on $\|\bu\|$ since we do not require oracle tuning of the learning rate. 

\noindent\textbf{No Overshooting Property} We now prove that the proposed algorithm never overshoots the minimum of the truncated linear loss. Moreover, in the case that the minimum of $\hat{\ell}_t$ coincides with the minimum of $\ell_t$, we end up exactly in the minimum, as illustrated by Figure~\ref{fig:illustration}.

\begin{theorem}\label{thm:noovershoot}
Under the assumptions of Theorem~\ref{thm:main} and the notation of Algorithm~\ref{alg:imp_coin}, assume that $\bg_t\neq \boldsymbol{0}$. Then, the update $\bw_{t+1}$ can never land on the flat part of the loss $\hat{\ell}_t$, but only on its linear part or in the corner. 
\end{theorem}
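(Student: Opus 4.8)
The plan is to argue by contradiction, exploiting the implicit self-consistency that \emph{defines} $\bg_t^+$. Recall from property (ii) that the algorithm produces $h_t \in [0,1]$ so that $\bg_t^+ = h_t \bg_t$ is a genuine subgradient of $\hat{\ell}_t$ at the point $\bw_{t+1}$ it returns, i.e.\ $\bg_t^+ \in \partial \hat{\ell}_t(\bw_{t+1})$. First I would characterize $\partial \hat{\ell}_t$ on the three regions determined by the affine piece $m(\bw) \triangleq \ell_t(\bw_t) + \langle \bg_t, \bw - \bw_t\rangle$. Since $\bg_t \neq \bzero$, these regions genuinely partition $\R^d$: on the open flat part $\{m < 0\}$ we have $\hat{\ell}_t \equiv 0$ locally, so $\partial \hat{\ell}_t = \{\bzero\}$ and the only admissible choice is $h_t = 0$; on the linear part $\{m > 0\}$ we have $\partial \hat{\ell}_t = \{\bg_t\}$, forcing $h_t = 1$; and on the corner hyperplane $\{m = 0\}$ the subdifferential is $\conv\{\bzero, \bg_t\}$, allowing any $h_t \in [0,1]$.

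Next, suppose toward a contradiction that $\bw_{t+1}$ lands strictly on the flat part, i.e.\ $m(\bw_{t+1}) < 0$. By the characterization above, self-consistency forces $\bg_t^+ = \bzero$, hence $h_t = 0$. I would then substitute $h_t = 0$ into the update and watch it collapse. With $\bg_t^+ = \bzero$ we have $\|\bg_t\|^2 - \|\bg_t^+ - \bg_t\|^2 = 0$, so the entire numerator of the fraction defining $\hat{\bbeta}_{t+1}$ vanishes and $\hat{\bbeta}_{t+1} = \bbeta_t$; because the betting fractions are confined to $B = \{\bx : \|\bx\| \le 1/2\}$, the cap $\max(1, 2\|\hat{\bbeta}_{t+1}\|)$ equals $1$, so $\bbeta_{t+1} = \bbeta_t$. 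The wealth recursion with $h_t = 0$ and $\bbeta_{t+1} = \bbeta_t$ reduces to $\Wealth_t = \Wealth_{t-1}(1 - \langle \bg_t, \bbeta_t\rangle)/(1 - \langle \bg_t, \bbeta_t\rangle) = \Wealth_{t-1}$. Consequently $\bw_{t+1} = \bbeta_{t+1}\Wealth_t = \bbeta_t \Wealth_{t-1} = \bw_t$.

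This yields the contradiction: at its own prediction the truncated loss satisfies $\hat{\ell}_t(\bw_t) = \max\{\ell_t(\bw_t), 0\} = \ell_t(\bw_t) \ge \inf_{\bw}\ell_t = 0$, equivalently $m(\bw_t) = \ell_t(\bw_t) \ge 0$, so $\bw_t$ cannot lie strictly on the flat part. Since $\bw_{t+1} = \bw_t$, this contradicts $m(\bw_{t+1}) < 0$. Hence $\bw_{t+1}$ never lands strictly on the flat part, and by the partition of $\R^d$ it must lie on the linear part or at the corner.

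I anticipate the main delicacy is conceptual rather than computational: one must correctly invoke the implicit self-consistency $\bg_t^+ \in \partial \hat{\ell}_t(\bw_{t+1})$ as the hypothesis that pins $h_t = 0$ on the flat part, since the ``no overshoot'' conclusion is really a fixed-point statement about the single $\bw_{t+1}$ the algorithm actually emits. The remaining checks are routine given the setup, namely verifying that $h_t = 0$ nullifies the numerator (which uses $\|\bg_t^+ - \bg_t\| = \|\bg_t\|$) and that the projection onto $B$ is inactive at $\bbeta_t$ because $\|\bbeta_t\| \le 1/2$.
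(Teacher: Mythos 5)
Your proof is correct and follows essentially the same route as the paper's: assume $\bw_{t+1}$ lands strictly on the flat part, use $\bg_t^+ \in \partial \hat{\ell}_t(\bw_{t+1})$ to force $h_t = 0$, observe that the update then collapses to $\bbeta_{t+1} = \bbeta_t$ and $\Wealth_t = \Wealth_{t-1}$ so that $\bw_{t+1} = \bw_t$, and conclude by contradiction. If anything, your closing step is more explicit than the paper's terse ``impossible because $\bg_t \neq \bzero$'': you correctly ground the contradiction in $m(\bw_t) = \ell_t(\bw_t) \geq \inf_{\bw} \ell_t(\bw) = 0$, which is the precise reason $\bw_t$ itself cannot lie strictly on the flat part.
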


\begin{proof}
The statement is equivalent to showing that $\ell_t(\bw_t) + \langle \bg_t, \bw_{t+1} -\bw_t\rangle \geq 0$ from the definition of $\hat{\ell}_t$. We prove it by contradiction. Let's assume that $\ell_t(\bw_t) + \langle \bg_t, \bw_{t+1} -\bw_t\rangle<0$. Then, we would have $\hat{\ell}_t(\bw_{t+1})=0$ and $\bg_t^+=\boldsymbol{0}$ (equivalently $h_t=0$). In turn, this would imply $\bbeta_{t+1}=\bbeta_t$ and $\Wealth_t=\Wealth_{t-1}$. So, we would have $\bw_{t+1}=\bw_t$ which is impossible because $\bg_t\neq \boldsymbol{0}$.
\end{proof}

Note that we assume $\bg_t\neq \boldsymbol{0}$ in Theorem~\ref{thm:noovershoot} since when $\bg_t = \boldsymbol{0}$ the algorithm is already in the corner.

\subsection{Computation of $h_t$}
\label{sec:comp_h}

The next challenge is how to find $h_t$. This is the only part of the Algorithm that uses the truncated linear model structure: the analysis Theorem~\ref{thm:main} actually applies to \emph{any} losses for which $\bg_t^+=h_t\bg_t$. Truncated linear models combine this favorable property with the additional property that it is possible to efficiently compute $h_t$. The argument of Theorem \ref{thm:noovershoot} shows that we cannot be in the flat region. By inspection of the updates, there are two achievable cases: in the first case $h_t=1$ and we are not in the corner of the truncated model, while in the second case, we are in the corner. Hence, as a first step, we posit that $h_t=1$, calculate $\bw_{t+1}$ and see if indeed $\bg_t^+=\bg_t$. If this is not the case, then the solution must be in the corner and $h_t \in[0,1)$. By definition $\bw_{t+1}=\bbeta_{t+1}\Wealth_t$, where

\[
\bbeta_{t+1}= \prod_B (\bbeta_t - \eta_t (h_t\bg_t+2\bbeta_t \|\bg_t\|^2 (2h_t-h^2_t) )
\text{ and }
\Wealth_t = \Wealth_{t-1} \frac{1-\bg_t\bbeta_t}{1+(\bg_t^+ -\bg_t)\bbeta_{t+1}}~.
\]

Thus, $\bw_{t+1}$ is a function of $h_t$. Assuming $\inf_{\bw} \ell_t(\bw)=0$ w.l.o.g. for loss functions bounded from below, we are looking for $h_t$ that makes 
\begin{equation}
\label{eq:h_eq}
\ell_t(\bw_t)+\langle \bg_t, \bw_{t+1} - \bw_t\rangle = 0~.
\end{equation}
Although we could solve for $h_t$ via bisection, there is no closed form solution due to the projection of $\bbeta_{t+1}$. Thus, we next propose a more complex algorithm with a closed form equation for $h_t$.

\section{Variant with Closed-form Update}
\label{sec:closed_form_sol}

In this section, we introduce the Algorithm~\ref{alg:imp_coin_closed_form}: a variant of Algorithm~\ref{alg:imp_coin} that has a closed form update. The key steps are still the same, but here we want to remove the projection step on $\bbeta_t$.
In this way, the expression of $\bw_{t+1}$  depends on a simple polynomial in $h_t$. In turn, to remove the projection step, we change the update of $\bbeta_t$ so that its norm is always assured to be bounded. 
\begin{algorithm}[t]
\caption{Parameter-free OCO with Truncated Linear Models -- closed form update}
\label{alg:imp_coin_closed_form}
\begin{algorithmic}[1]
\STATE Initialize $\bbeta_1 \leftarrow \boldsymbol{0},\eta_1 \leftarrow \frac{1}{2C}, C \leftarrow 9, \Wealth_0 \leftarrow 1$
\FOR{$t=1$ {\bfseries to} $T$}
    \STATE Predict $\bw_{t} \leftarrow \bbeta_t \Wealth_{t-1}$
    \STATE Receive $\ell_t(\bw_t)$ and $\bg_t \in \partial \ell_t(\bw_t)$
    \STATE Calculate $h_t$ (see Section~\ref{sec:closed_form_update})
    \IF{$\|\bbeta_t\| < \frac{3}{8}$}
    \STATE $\bbeta_{t+1} \leftarrow \bbeta_t - \eta_t(\bg_t^+ + 2\bbeta_t(2\|\bg_t\| \|\bg_t^+\| - \|\bg_t^+\|^2))$
    \STATE $\frac{1}{\eta_{t+1}} \leftarrow \frac{1}{\eta_t}+2(\|\bg_t\|^2 - \| \bg_t^+ - \bg_t\|^2)$
    \ELSE
    \STATE $\bbeta_{t+1} \leftarrow \bbeta_t - \eta_t 2C \| \bg_t^+ \| \bbeta_t$
    \STATE $\frac{1}{\eta_{t+1}} \leftarrow \frac{1}{\eta_t}+ 2C\|\bg_t^+\|$
    \ENDIF
    \STATE $\Wealth_t \leftarrow \Wealth_{t-1}\frac{1-\langle \bg_t,\bbeta_t\rangle}{1+(h_t-1)\langle \bg_t,\bbeta_{t+1}\rangle}$
\ENDFOR
\end{algorithmic}
\end{algorithm}
For Algorithm~\ref{alg:imp_coin_closed_form}, we can prove the following guarantee. We present a proof sketch, while the full proof is in the Appendix.
\begin{theorem}
\label{thm:main_closed_form}
Assume $\ell_t(x),  t=1,\dots,T$, to be convex functions. Set $\Wealth_0 = \epsilon = 1$, $C=9$ and $1/\eta_1=2C$, and assume that $\|\bg_t\|\leq 1$ and $\bg_t^+ = h_t \bg_t$ where $h_t \in [0,1]$. 

Then, for all $\bu\in \R^d$, Algorithm~\ref{alg:imp_coin_closed_form} satisfies the same bound as in Theorem~\ref{thm:main}, up to constants hidden in the big O notation.
\end{theorem}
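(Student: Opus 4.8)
The plan is to mirror the proof of Theorem~\ref{thm:main} as closely as possible, isolating the only two places where Algorithm~\ref{alg:imp_coin_closed_form} differs from Algorithm~\ref{alg:imp_coin}: the replacement of the projection $\bbeta_{t+1}\leftarrow\hat{\bbeta}_{t+1}/\max(1,2\|\hat{\bbeta}_{t+1}\|)$ by the branched update, and the resulting need to re-establish the wealth lower bound. The wealth identity \eqref{eq:wealth_def} and its log-linearization \eqref{eq:log_wealth} depend only on the pair $(\bbeta_t,\bbeta_{t+1})$ and not on how they are produced, so they carry over verbatim; thus it again suffices to lower bound $-\sum_t f_t(\bbeta_t)-2\sum_t\|\bg_t\|\,\|\bbeta_{t+1}-\bbeta_t\|$, where $f_t(\bbeta)=\langle\bg_t^+,\bbeta\rangle+(\|\bg_t\|^2-\|\bg_t^+-\bg_t\|^2)\|\bbeta\|^2$, and then dualize via the Fenchel conjugate as in Section~\ref{sec:imp_coin}.

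First I would establish, by induction on $t$, that $\|\bbeta_t\|\le 1/2$, which is exactly the feasibility that the projection used to enforce. In the branch $\|\bbeta_t\|<3/8$, the update gradient has norm at most $\|\bg_t^+\|+2\|\bbeta_t\|(2\|\bg_t\|\|\bg_t^+\|-\|\bg_t^+\|^2)\le 1+2\cdot\tfrac38\cdot1=\tfrac74$, and since $\eta_t\le\eta_1=1/(2C)=1/18$ the step moves $\bbeta_t$ by at most $7/72$, keeping $\|\bbeta_{t+1}\|<3/8+7/72<1/2$. In the branch $\|\bbeta_t\|\ge 3/8$, the update is the multiplicative contraction $\bbeta_{t+1}=\bbeta_t(1-\eta_t\,2C\|\bg_t^+\|)$ with factor $\eta_t\,2C\|\bg_t^+\|\in[0,1]$, so $\|\bbeta_{t+1}\|\le\|\bbeta_t\|\le 1/2$. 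The precise values $C=9$ and $\eta_1=1/(2C)$ are exactly what make both cases close.

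The heart of the proof is to reinterpret the branched $\bbeta_t$ as online gradient descent on a single sequence of strongly convex surrogate losses $\tilde f_t$, where $\tilde f_t=f_t$ on the small branch and $\tilde f_t(\bbeta)=g_t(\bbeta)\triangleq C\|\bg_t^+\|\,\|\bbeta\|^2$ on the large branch; the gradient steps in lines 7 and 10 are precisely $\nabla\tilde f_t(\bbeta_t)$, and the step-size recursions in lines 8 and 11 are exactly the schedule $1/\eta_{t+1}=1/\eta_t+\mu_t$ for the strong-convexity moduli of these two surrogates. The role of the constant $C=9$ is the pointwise bound $f_t(\bbeta_t)\le g_t(\bbeta_t)$ whenever $\|\bbeta_t\|\ge 3/8$: on this branch $\langle\bg_t^+,\bbeta_t\rangle\le\tfrac83\|\bg_t^+\|\,\|\bbeta_t\|^2$, while the quadratic coefficient of $f_t$ is at most $2\|\bg_t^+\|$, so $f_t(\bbeta_t)\le\tfrac{14}{3}\|\bg_t^+\|\,\|\bbeta_t\|^2\le 9\|\bg_t^+\|\,\|\bbeta_t\|^2=g_t(\bbeta_t)$. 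Consequently $\sum_t f_t(\bbeta_t)\le\sum_t\tilde f_t(\bbeta_t)$, and I can apply the standard strongly convex OGD regret bound to $\tilde f_t$, together with per-branch displacement bounds on $\|\bbeta_{t+1}-\bbeta_t\|$, to lower bound $\ln\Wealth_T$.

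The main obstacle I expect is the comparator step: against a fixed $\bbeta^\star$ the surrogate evaluates to $\tilde f_t(\bbeta^\star)=g_t(\bbeta^\star)\ge0$ on the large branch, which both discards the linear reward $\langle\bg_t^+,\bbeta^\star\rangle$ and adds a quadratic penalty $C\|\bg_t^+\|\,\|\bbeta^\star\|^2$, so the naive bound is stated in terms of $\sum_{\text{small}}\bg_t^+$ rather than the full $\sum_t\bg_t^+$ that Theorem~\ref{thm:main} requires. The crux is therefore to show the large-branch contributions are only lower-order: using $\|\bbeta^\star\|\le 1/2$ to convert both the discarded reward and the extra penalty into multiples of $\sum_{\text{large}}\|\bg_t^+\|$, and then controlling this sum through the step-size telescoping together with the fact that every large-branch step starts at $\|\bbeta_t\|\ge 3/8$ (so sustained large-branch runs force a log-scale contraction of $\|\bbeta_t\|$ that must be repaid by small-branch growth). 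Once these terms are absorbed into the constants and the $\ln(\cdot)$ factors, the Fenchel conjugate of the resulting $\psi_T$ reproduces the same $\min\{\cdot,\cdot\}$ dual structure in $\|\sum_t\bg_t^+\|$ as in Theorem~\ref{thm:main}, giving the claimed bound up to constants.
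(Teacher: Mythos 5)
Most of your plan matches the paper's proof: the wealth identity and its log-linearization do carry over verbatim, the induction showing $\|\bbeta_t\|\le 1/2$ is exactly the paper's Lemma~\ref{lemma:no_proj}, and the reinterpretation of the branched update as OGD with step sizes $1/\eta_{t+1}=1/\eta_t+\mu_t$ on the surrogate losses $\phi_t$ (your $\tilde f_t$) is precisely the paper's construction. The gap is in how you use $C=9$ and, consequently, in the comparator step that you yourself flag as ``the crux.''

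You establish only the \emph{pointwise} dominance $f_t(\bbeta_t)\le \phi_t(\bbeta_t)$ on large-branch rounds, which forces you to compare against $\sum_t \phi_t(\bbeta^\star)$; on every large-branch round the comparator value $C\|\bg_t^+\|\|\bbeta^\star\|^2$ has no linear term, so the reward $\langle \bg_t^+,\bbeta^\star\rangle$ for those rounds is simply lost, leaving an error of order $\sum_{\mathrm{large}}\|\bg_t^+\|$. Your proposed repair---arguing that sustained large-branch runs force a contraction that must be ``repaid,'' so this sum is lower-order---fails: the iterate can hover around the threshold $\|\bbeta_t\|=3/8$ indefinitely (a large-branch contraction of size $O(\eta_t)$ followed by a couple of small-branch steps of size $O(\eta_t)$ pushing back across the boundary), so a constant fraction of rounds can be large-branch with $\|\bg_t^+\|=\Theta(1)$, making $\sum_{\mathrm{large}}\|\bg_t^+\|=\Theta(T)$. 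Since the regret decomposition requires a wealth lower bound that is a function of the \emph{full} sum $\sum_{t=1}^T\bg_t^+$, an error linear in $T$ cannot be absorbed into constants or logarithms, and the resulting bound is vacuous. The paper avoids this by proving a per-round \emph{difference} dominance, Lemma~\ref{lemma:f_h_rel}: $f_t(\bbeta_t)-f_t(\bbeta_1^\star)\le \phi_t(\bbeta_t)-\phi_t(\bbeta_1^\star)$, where the comparator is restricted to the smaller ball $B_1=\{\bx:\|\bx\|\le 1/4\}$. Because $f_t(\bbeta_1^\star)$ appears explicitly for \emph{every} $t$, the linear reward is retained on all rounds; the restriction to $B_1$ creates the gap $\|\bbeta_t\|-\|\bbeta_1^\star\|\ge 3/8-1/4=1/8$ on large-branch rounds, which is exactly what makes the difference inequality hold with $C\ge \frac{1}{\|\bbeta_t\|-\|\bbeta_1^\star\|}+1 = 9$ (your pointwise computation $\frac{14}{3}\le 9$ is a different, and insufficient, explanation of the constant). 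One then applies the OGD regret bound (Lemma~\ref{lemma:no_proj_OGD}) to $\phi_t$ and Lemma~\ref{lemma:proj} with $r=1/4$, which costs only a constant (the term $\|\sum_t\bg_t^+\|/8$ in place of $\|\sum_t\bg_t^+\|/4$), and the rest of your outline goes through.
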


\begin{proof}[Proof sketch]

As we stated above,  due to the projection step in line 6 of Algorithm~\ref{alg:imp_coin}, $h_t$ can not be solved in a closed form.  To overcome this, we design a new update rule that guarantees that $\bbeta_t$ will always end up in the ball $B = \{\bx : \|\bx\|\leq \frac{1}{2}\}$ to avoid the projection step. 

In Algorithm 1, $\beta_t$ is the output of running OGD on strongly-convex losses $f_t(\beta)$. However, when $\beta_t$ is close to 1/2, the next iteration, $\beta_{t+1}$, could go too far so that a projection step can be necessary. To avoid this, we intricately design an update rule that when $\| \beta_t \| \geq 3/8$ indicating that $\beta_t$ is close to the boundary of the ball, the next iteration will shrink it a little bit, to make sure that it stays in the ball $B$. The new update rule is the output of running OGD on strongly-convex losses $\phi_t(\beta)$. 
In the following, we introduce the sketch of the proof. 

$\bbeta_t$ is the output of OGD with $\eta_t = \frac{1}{1/\eta_1+\sum_{i=1}^{t-1}\mu_i}$ on the strongly-convex losses $\phi_t(\bbeta)$:
\[
\phi_t(\bbeta) =
\begin{cases}
f_t(\bbeta), & \text{ if } \| \bbeta_t\| < \frac{3}{8},\\
C\|\bg_t^+\| \|\bbeta\|^2, & \text{ if }  \frac{3}{8} \leq \|\bbeta_t\| \leq \frac{1}{2},
\end{cases}
\]
where $\bbeta_t\in B$ and $B = \{\bx : \|\bx\|\leq \frac{1}{2}\}$. $\phi_t(\bbeta)$ is $\mu_t$ strongly convex. $\bbeta_1^\star \triangleq \arg\min_{\bbeta\in B_1}\sum_{t=1}^T f_t(\bbeta)$, where $B_1 = \{\bx : \|\bx\|\leq \frac{1}{4}\}$. The intricate design of $\phi_t(\bbeta)$ allows to say that if $\bbeta_t\geq 3/8$, $\|\bbeta_{t+1}\| = (1-\eta_t 2C \|\bg_t^+\|)\|\bbeta_t\|$ will shrink if $\eta_t\leq 1/{2C}$; if $\bbeta_t\leq 3/8$, $\|\bbeta_{t+1}\|$ will stay in $B$ if $\eta_t$ is small enough. Therefore, Algorithm~\ref{alg:imp_coin_closed_form} guarantees $\|\bbeta_t\| \leq \frac{1}{2}$ for all $t = 1, \dots,T$, which removes the projection step on $\bbeta_t$, and gives rise to the closed form updates. 

Furthermore, for $C \geq 9$, the regret of $\phi_t(\bbeta_t)$ upper bounds the regret of $f_t(\bbeta_t)$: $f_t(\bbeta_t) - f_t(\bbeta_1^\star) \leq \phi_t(\bbeta_t) - \phi_t(\bbeta_1^\star)$.
We upper bound $\sum_{t=1}^T \phi_t(\bbeta_t)- \phi_t(\bbeta_1^\star)$ by standard OGD analysis, which implies a bound on $\sum_{t=1}^T f_t(\bbeta_t)$.
Combining the upper bound of $\|\bbeta_{t+1} - \bbeta_t \|$ and $\sum_{t=1}^T f_t(\bbeta_t)$ lower bounds wealth by \eqref{eq:log_wealth}.
The rest of the proof is similar to the proof of Theorem~\ref{thm:main}.
\end{proof}

We also note that the non-overshooting property holds for this algorithm too. The proof is exactly the same as before and it is omitted.

\begin{theorem}
\label{thm:noovershoot2}
Under the assumptions of Theorem~\ref{thm:main_closed_form} and the notation of Algorithm~\ref{alg:imp_coin_closed_form}, assume that $\bg_t\neq \boldsymbol{0}$. Then, the update $\bw_{t+1}$ can never land on the flat part of the loss $\hat{\ell}_t$, but only on its linear part or on the corner. 
\end{theorem}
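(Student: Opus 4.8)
The plan is to mirror exactly the contradiction argument used in the proof of Theorem~\ref{thm:noovershoot}, since the authors already announced that ``the proof is exactly the same as before.'' The key observation is that the non-overshooting property is a structural fact about the update rule, not about the specific mechanism by which $\bbeta_t$ is computed. What actually drives the proof of Theorem~\ref{thm:noovershoot} is: (i) the translation of the geometric statement into the algebraic condition $\ell_t(\bw_t)+\langle \bg_t, \bw_{t+1}-\bw_t\rangle \geq 0$; (ii) the implication that if this fails, then $\bw_{t+1}$ lands on the flat part of $\hat{\ell}_t$, forcing $\bg_t^+=\boldsymbol{0}$ and hence $h_t=0$; and (iii) the fact that $h_t=0$ collapses the update so that $\bw_{t+1}=\bw_t$, contradicting $\bg_t\neq\boldsymbol{0}$. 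The only thing I must verify is that step (iii) still goes through for the modified update of Algorithm~\ref{alg:imp_coin_closed_form}.

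First I would restate the goal as proving $\ell_t(\bw_t)+\langle \bg_t, \bw_{t+1}-\bw_t\rangle \geq 0$, using that $\inf_{\bw}\ell_t(\bw)=0$ and the definition of $\hat{\ell}_t$, exactly as in Theorem~\ref{thm:noovershoot}. Then I would argue by contradiction: suppose the quantity is strictly negative. In that case $\bw_{t+1}$ lands strictly inside the flat region, so $\hat{\ell}_t(\bw_{t+1})=0$ and every subgradient $\bg_t^+\in\partial\hat{\ell}_t(\bw_{t+1})$ equals $\boldsymbol{0}$, i.e.\ $h_t=0$. This part is identical to before because it depends only on the geometry of truncated linear models and on the relation $\bg_t^+=h_t\bg_t$, both of which hold under the assumptions of Theorem~\ref{thm:main_closed_form}.

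The remaining step is to check that $h_t=0$ forces $\bw_{t+1}=\bw_t$ under the new update. With $h_t=0$ we have $\bg_t^+=\boldsymbol{0}$, so $\|\bg_t^+\|=0$. Inspecting both branches of Algorithm~\ref{alg:imp_coin_closed_form}: in the branch $\|\bbeta_t\|<3/8$ the update becomes $\bbeta_{t+1}=\bbeta_t-\eta_t(\boldsymbol{0}+2\bbeta_t\cdot 0)=\bbeta_t$, and in the branch $\|\bbeta_t\|\geq 3/8$ it becomes $\bbeta_{t+1}=\bbeta_t-\eta_t 2C\cdot 0\cdot\bbeta_t=\bbeta_t$; either way $\bbeta_{t+1}=\bbeta_t$. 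Likewise the wealth update gives $\Wealth_t=\Wealth_{t-1}\frac{1-\langle\bg_t,\bbeta_t\rangle}{1+(0-1)\langle\bg_t,\bbeta_{t+1}\rangle}=\Wealth_{t-1}\frac{1-\langle\bg_t,\bbeta_t\rangle}{1-\langle\bg_t,\bbeta_t\rangle}=\Wealth_{t-1}$, using $\bbeta_{t+1}=\bbeta_t$. Hence $\bw_{t+1}=\bbeta_{t+1}\Wealth_t=\bbeta_t\Wealth_{t-1}=\bw_t$, contradicting $\bg_t\neq\boldsymbol{0}$ (the prediction cannot be stationary when a nonzero subgradient would move the corner computation, matching the reasoning in Theorem~\ref{thm:noovershoot}).

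I do not anticipate a genuine obstacle here; the statement is essentially a verification that the case-split in the closed-form update is inert when $\|\bg_t^+\|=0$. The only thing warranting care is confirming that $h_t=0$ collapses \emph{both} branches identically, since Algorithm~\ref{alg:imp_coin_closed_form} replaced the single projected update of Algorithm~\ref{alg:imp_coin} with a conditional one; once one notes that $\|\bg_t^+\|=0$ zeroes out the update term in each case, the argument closes verbatim. This is precisely why the authors could assert the proof is unchanged and omit it.
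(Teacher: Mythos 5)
Your proposal is correct and follows exactly the route the paper intends: the paper omits this proof precisely because it is the contradiction argument of Theorem~\ref{thm:noovershoot} verbatim, and your only added work---checking that $h_t=0$ (i.e., $\bg_t^+=\boldsymbol{0}$) makes both branches of the conditional update in Algorithm~\ref{alg:imp_coin_closed_form} yield $\bbeta_{t+1}=\bbeta_t$ and $\Wealth_t=\Wealth_{t-1}$, hence $\bw_{t+1}=\bw_t$---is exactly the verification the authors had in mind when asserting the proof is unchanged.
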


\subsection{Computation of $h_t$ with Closed Form Solution}
\label{sec:closed_form_update}

Now, we show how to obtain a closed form expression for the update in Algorithm~\ref{alg:imp_coin_closed_form}. As before, first we tentatively set $h_t=1$ and check if $\bg^+_t=\bg_t$. If yes, then $h_t=1$ and we can compute $\bw_{t+1}$. If not, thanks to Theorem~\ref{thm:noovershoot2}, we know that $\bw_{t+1}$ lands in the corner of $\hat{\ell}_t$ and we need to compute $h_t \in [0,1)$. In this case we are looking for the $h_t$ such that $\bw_{t+1}$ satisfies \eqref{eq:h_eq}.

Let $A = \langle \bg_t,\bw_t\rangle - \ell_t(\bw_t)$, $B=\Wealth_t(1-\langle \bg_t,\bbeta_t\rangle)$. We consider two cases based on $\bbeta_t$.

If $\|\bbeta_t\|\leq \frac{3}{8}$, we have that $\langle \bg_t,\bbeta_{t+1}\rangle = \langle \bg_t, \bbeta_t-\eta_t(\bg_t^+ + 2\bbeta_t(2h_t\|\bg_t\|^2- h_t^2\|\bg_t\|^2))\rangle$. Let $D = 2\eta_t\|\bg_t\|^2\langle \bg_t,\bbeta_t\rangle$, so \eqref{eq:h_eq} becomes a cubic equation of $h_t$ that has closed form solution:
\begin{align*}
&-AD h_t^3 +(2AD+A\eta_t\|\bg_t\|^2+(A+B)D)h_t^2+(-(A+B)\eta_t\|\bg_t\|^2-2(A+B)D\\
&\quad -A\langle \bg_t,\bbeta_t\rangle)h_t+ (A+B)\langle \bg_t,\bbeta_t\rangle - A=0~.
\end{align*}

If $ \frac{3}{8} \leq\|\bbeta_t\|\leq \frac{1}{2}$, we have $\langle \bg_t,\bbeta_{t+1}\rangle = \langle \bg_t,\bbeta_t\rangle(1-2C\eta_t\|\bg_t\|h_t)$. Let $D = 2C\eta_t\|\bg_t\| \langle \bg_t,\bbeta_t\rangle$. So, \eqref{eq:h_eq} can be rewritten as the following quadratic equation of $h_t$ and again it has closed form solution:
$
AD h_t^2+(-A\langle \bg_t,\bbeta_t\rangle - (A+B)D)h_t + (A+B)\langle \bg_t,\bbeta_t\rangle-A =0
$.

\section{Tighter Regret Guarantee through Coordinate-wise Updates}
\label{sec:coor}

In this section, we introduce a coordinate-wise variant of Parameter-free OCO with truncated linear models. This is a simple extension of Algorithm~\ref{alg:imp_coin_closed_form} by considering each coordinate as a different 1-d OCO algorithm. The advantage is that this regret bound is even \emph{tighter} than the bound of Theorem~\ref{thm:main}. Here we present the Theorem~\ref{thm:coor}. (The proof can be found in the Appendix.)
We use $\beta_{t,i},~u_i,~g_{t,i}\in \mathbb{R}$ to represent the i-th element of the vector correspondingly.  

\begin{algorithm}[t]
\caption{Parameter-free OCO with Truncated Linear Models -- coordinate-wise update}
\label{alg:imp_coin_coor}
\begin{algorithmic}[1]

\STATE Initialize $\bbeta_1 \leftarrow \boldsymbol{0},\eta_1 \leftarrow \frac{1}{2C}\cdot \boldsymbol{1}, C \leftarrow 9, \Wealth_0 \leftarrow \boldsymbol{1}\in \mathbb{R}^d$
\FOR{$t=1$ {\bfseries to} $T$}
    \STATE Predict $\bw_{t} \leftarrow \bbeta_{t} \odot \Wealth_{t-1}$
    \STATE Receive $\ell_t(\bw_t)$ and $\bg_t \in \partial \ell_t(\bw_t)$
    \STATE Calculate $h_t$ s.t. $
\ell_t(\bw_t)+\langle \bg_t, \bw_{t+1} - \bw_t\rangle = 0$ (see Section~\ref{sec:comp_h})
    \FOR{$i=1$ {\bfseries to} $d$}
    \IF{$|\beta_{t,i}| < \frac{3}{8}$}
    \STATE $\beta_{t+1,i} \leftarrow \beta_{t,i} - \eta_{t,i}(g_{t,i}^+ + 2\beta_{t,i}(2g_{t,i}g_{t,i}^+ - (g_{t,i}^+)^2))$
    \STATE $\frac{1}{\eta_{t+1,i}} \leftarrow \frac{1}{\eta_{t,i}}+2(g_{t,i}^2 - (g_{t,i}^+ - g_{t,i})^2)$
    \ELSE
    \STATE $\beta_{t+1,i} \leftarrow \beta_{t,i} - \eta_{t,i} 2C | g_{t,i}^+ | \beta_{t,i},$
    \STATE $ \frac{1}{\eta_{t+1,i}} \leftarrow \frac{1}{\eta_{t,i}}+ 2C|g_{t,i}^+|$
    \ENDIF
    \STATE $\Wealth_{t,i} \leftarrow \Wealth_{t-1,i}\frac{1-g_{t,i}\beta_{t,i}}{1+(h_t-1) g_{t,i}\beta_{t+1,i}}$
	\ENDFOR
\ENDFOR
\end{algorithmic}
\end{algorithm}

\begin{theorem}
\label{thm:coor}
Assume $\ell_t(x),  t=1,\dots,T$, to be convex functions. Set $\Wealth_0 = \epsilon = \boldsymbol{1}$, $C=9$. For $i=1,\dots,d$, $1/\eta_{1,i}=2C$ and assume that $|g_{t,i}|\leq 1$, $\bg_t^+ = h_t \bg_t$ where $h_t \in [0,1]$. 
Then, for all $\bu\in \R^d$, Algorithm~\ref{alg:imp_coin_coor} guarantees
\begin{align*}
\Regret_T(\bu)&= \sum_{i=1}^{d} \epsilon + \mathcal{O}\left(  \max\left\{ |u_i| \ln\left(|u_i|\left(1+\sum_{t=1}^T |g_{t,i}| |g_{t,i}^+| \right)\right)\right.\right. ,\\
& \quad \left.\left.|u_i|\sqrt{\sum_{t=1}^T (2|g_{t,i}| |g_{t,i}^+|-|g_{t,i}^+|^2) \cdot \ln\left(1+|u_i|\sum_{t=1}^T (2|g_{t,i}||g_{t,i}^+|-|g_{t,i}^+|^2)\right)} \right\}  \right)~.
\end{align*}
\end{theorem}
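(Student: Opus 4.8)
The plan is to reduce the $d$-dimensional problem to $d$ independent one-dimensional instances of Algorithm~\ref{alg:imp_coin_closed_form} and then invoke the scalar form of Theorem~\ref{thm:main_closed_form} coordinate by coordinate. First I would apply the regret decomposition \eqref{eq:decomposition}, which holds for \emph{any} choice of $\bw_{t+1}$ and uses only convexity of $\hat\ell_t$ together with $\bg_t\in\partial\hat\ell_t(\bw_t)$ and $\bg_t^+\in\partial\hat\ell_t(\bw_{t+1})$, to obtain
\[
\Regret_T(\bu) \le \sum_{t=1}^T \left(\langle \bg_t, \bw_t-\bw_{t+1}\rangle + \langle \bg_t^+, \bw_{t+1}-\bu\rangle\right)~.
\]
Since every inner product splits across coordinates, I would rewrite the right-hand side as $\sum_{i=1}^d R_i$ with $R_i \triangleq \sum_{t=1}^T\left(g_{t,i}(w_{t,i}-w_{t+1,i}) + g_{t,i}^+(w_{t+1,i}-u_i)\right)$. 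Thus the total regret is controlled by a sum of $d$ scalar regrets, and it suffices to bound each $R_i$ separately.

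Next I would introduce a per-coordinate wealth $\Wealth_{t,i} \triangleq \epsilon - \sum_{s=1}^t (g_{s,i}(w_{s,i}-w_{s+1,i}) + g_{s,i}^+ w_{s+1,i})$ and check, exactly as in \eqref{eq:wealth_def}, that substituting $w_{t,i}=\beta_{t,i}\Wealth_{t-1,i}$ and $w_{t+1,i}=\beta_{t+1,i}\Wealth_{t,i}$ recovers the multiplicative recursion $\Wealth_{t,i}=\Wealth_{t-1,i}\frac{1-g_{t,i}\beta_{t,i}}{1+(h_t-1)g_{t,i}\beta_{t+1,i}}$ of line~15 of Algorithm~\ref{alg:imp_coin_coor}. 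Unrolling then gives $R_i = \epsilon + u_i\bigl(-\sum_{t}g_{t,i}^+\bigr) - \Wealth_{T,i}$, so any lower bound $\Wealth_{T,i}\ge \psi_{T,i}\bigl(-\sum_t g_{t,i}^+\bigr)$ yields $R_i\le \epsilon + \psi_{T,i}^\star(u_i)$ through the same Fenchel-duality argument used in Section~\ref{sec:imp_coin}.

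The crux is the third step: I would argue that each coordinate runs \emph{precisely} the one-dimensional specialization of Algorithm~\ref{alg:imp_coin_closed_form}. Here I would use $g_{t,i}^+ = h_t g_{t,i}$ with $h_t \ge 0$ to identify $g_{t,i}g_{t,i}^+ = |g_{t,i}||g_{t,i}^+|$, so that the coordinate updates of $\beta_{t,i}$ and of $1/\eta_{t,i}$ coincide term for term with the scalar updates underlying Theorem~\ref{thm:main_closed_form}. The one point needing care is that the single scalar $h_t$ is shared across all coordinates rather than recomputed per coordinate; however, the wealth lower bound in the proof of Theorem~\ref{thm:main_closed_form} is derived for an \emph{arbitrary} admissible sequence with $h_t\in[0,1]$ and $|g_{t,i}|\le 1$, so it applies verbatim to each $i$, giving
\[
\ln\Wealth_{T,i}\ge -\tfrac{3}{2} - 7.25\ln\Bigl(1+2\textstyle\sum_t |g_{t,i}||g_{t,i}^+|\Bigr) + \min\Bigl\{\tfrac{|\sum_t g_{t,i}^+|}{4}, \tfrac{(\sum_t g_{t,i}^+)^2}{2\sum_t\mu_{t,i}}\Bigr\}~,
\]
independent of the coupling through $h_t$.

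Finally I would take the Fenchel conjugate $\psi_{T,i}^\star$ of this scalar wealth bound to produce a one-dimensional regret bound identical in form to Theorem~\ref{thm:main} with $\|\bg_t\|,\|\bg_t^+\|$ replaced by $|g_{t,i}|,|g_{t,i}^+|$, and sum over $i=1,\dots,d$ to conclude. I expect the main obstacle to be the bookkeeping in the third step: making the reduction rigorous requires confirming that every inequality in the proof of Theorem~\ref{thm:main_closed_form} — the guarantee $|\beta_{t,i}|\le 1/2$ that removes the projection, the regret comparison $f_t\le\phi_t$ valid for $C\ge 9$, and the bound on $|\beta_{t+1,i}-\beta_{t,i}|$ — depends on the coins only through the admissible ranges $h_t\in[0,1]$ and $|g_{t,i}|\le 1$, and therefore survives the per-coordinate restriction with a shared $h_t$. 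The tightening relative to Theorem~\ref{thm:main} is then purely a consequence of summing $d$ separate one-dimensional bounds rather than bounding Euclidean norms jointly.
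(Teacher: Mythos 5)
Your proposal is correct and follows essentially the same route as the paper's proof: decompose the regret via \eqref{eq:decomposition}, split across coordinates, define per-coordinate wealth and apply Fenchel duality, then invoke the one-dimensional guarantee of Theorem~\ref{thm:main_closed_form} coordinate-wise. In fact, you are more careful than the paper on the one subtle point—the scalar $h_t$ being shared across coordinates rather than recomputed per coordinate—which the paper glosses over by simply asserting that each coordinate is ``a specific case of running Algorithm~\ref{alg:imp_coin_closed_form} with $d=1$''; your observation that the wealth lower bound only needs $h_t\in[0,1]$ and $|g_{t,i}|\le 1$ is exactly the justification needed.
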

To see how this bound is more desirable than Theorem~\ref{thm:main}, first notice that the bound obtains some adaptivity to  $L_1$ geometry: if $G_\infty=\max\|g_t\|_{\infty}$, then the bound is at most $\tilde{\mathcal{O}}(\|u\|_1G_\infty{\sqrt{T}})$. 
Furthermore, by application of Cauchy-Schwarz (twice!) we can see that the bound is at most an additive $\epsilon d$ larger than Theorem~\ref{thm:main} - and could be much smaller if either application of Cauchy-Schwarz is loose. Thus, this bound is never much worse than that of Theorem~\ref{thm:main}, but has the further desirable property that one can add a large number of ``irrelevant'' dimensions for which $u_i=0$ without harming the bound.

Unfortunately, we no longer have a closed form expression for $h_t$ due to the coupling of the coordinates. However, at each iteration, we can find a $\delta$-approximation to $h_t$ using $\mathcal{O}(\log(1/\delta))$ steps of bisection and a single gradient oracle call.  By ``bisection'' we mean a binary-search style algorithm: given a guess for $h_t$, we can compute if the true value is lower or higher than the guess by computing what the update would be if the guess were correct and checking if we have overshot the corner of the truncated bound. Though this inflates the cost of an update by $\mathcal{O}(\log(1/\delta))$, this is still \emph{significantly} more efficient than the $\text{poly}(T)$ oracle calls required to run FTRL with truncated linear models directly at each step. This small extra computation cost is a price we pay for better theoretical as well as empirical results.

\section{Empirical Evaluation}
\label{sec:exp}

While our main contribution is theoretical, here we evaluate the empirical performance of Algorithm~\ref{alg:imp_coin_closed_form} and Algorithm~\ref{alg:imp_coin_coor} to show their practical potential.
We will denote the algorithms as Implicit Coin, and Coordinate-wise Implicit Coin. We would also like to stress that we used Algorithm~\ref{alg:imp_coin_closed_form} and Algorithm~\ref{alg:imp_coin_coor} as they are, with the choice of the hyperparameters directly given by theory, i.e., $\eta_0$, $C$, $\Wealth_0$. It is quite possible that these choices were not optimal. We do this on purpose: we want to demonstrate how robust parameter-free algorithms are, even with theory-derived constants.

We compare SGD, SGD with truncated models (aProx)~\citep{AsiD19}, SGD with Importance Weight Aware updates (IWA)~\citep{KarampatziakisL11}, Coin-betting algorithm (Coin)~\citep{OrabonaP16}, Coin-betting with ODE updates (CODE)~\citep{ChenLO22}, COntinuous COin Betting (COCOB)~\citep{OrabonaT17}. 

We tested the algorithms on real-world datasets from the LIBSVM website~\citep{ChangL01} and OpenML~\citep{VanschorenVBT2013}. 2dPlane, CPU-act, and Houses are classification tasks, Rainfall, Bank32nh, and House-8L are regression tasks. (More information about datasets is in Appendix). We standardize and pre-process the samples, normalizing them to unit norm vectors. We shuffle the data and separate into a training set ($70\%$), validation set ($15\%$), and test set ($15\%$).

\begin{figure}[t]
\centering
\includegraphics[width=0.33\textwidth]{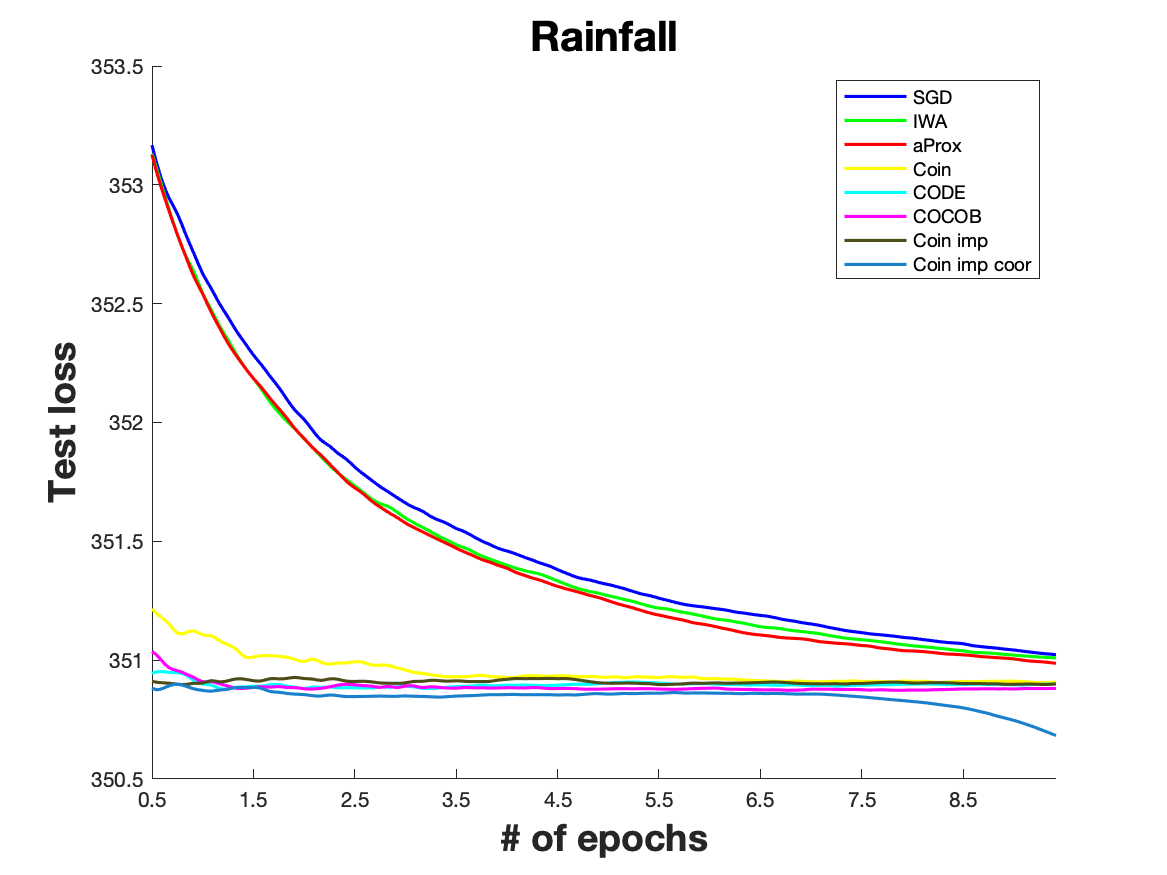} \hfill
\includegraphics[width=0.33\textwidth]{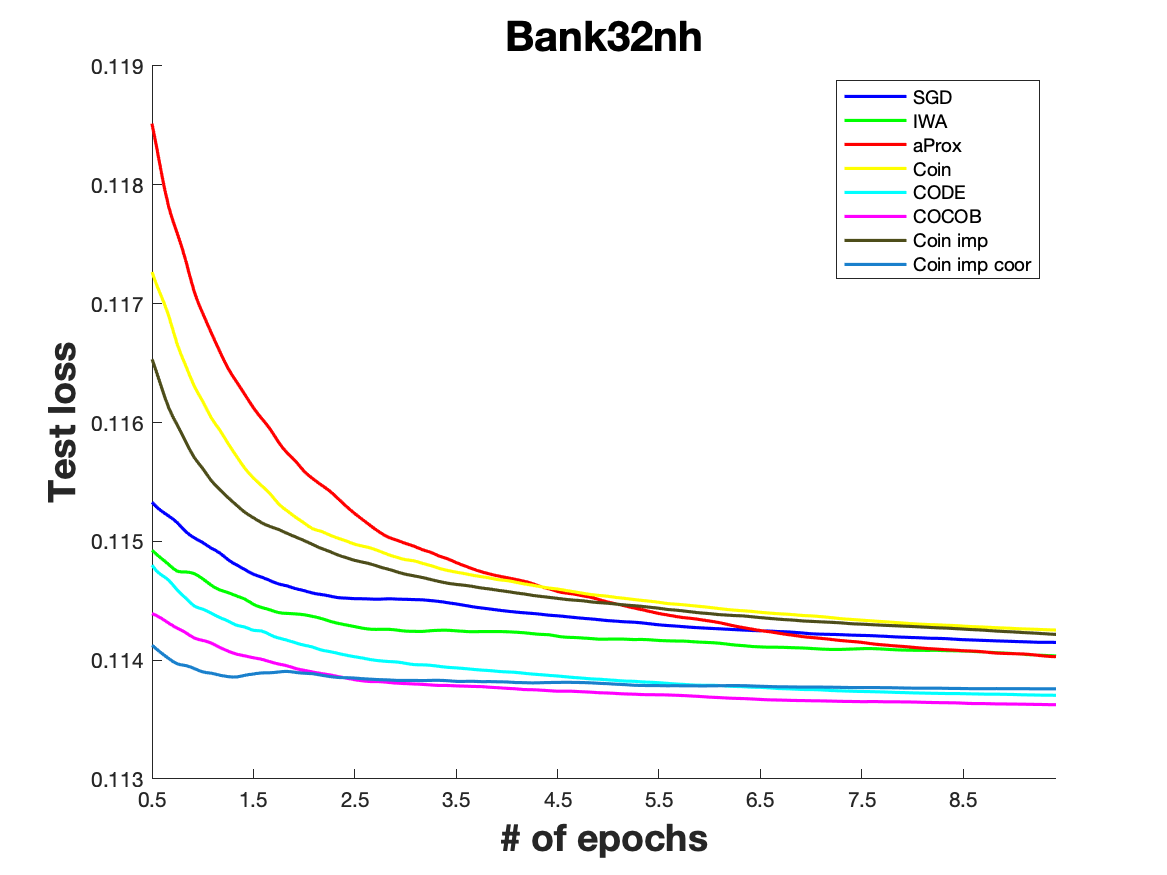}\hfill
\includegraphics[width=0.33\textwidth]{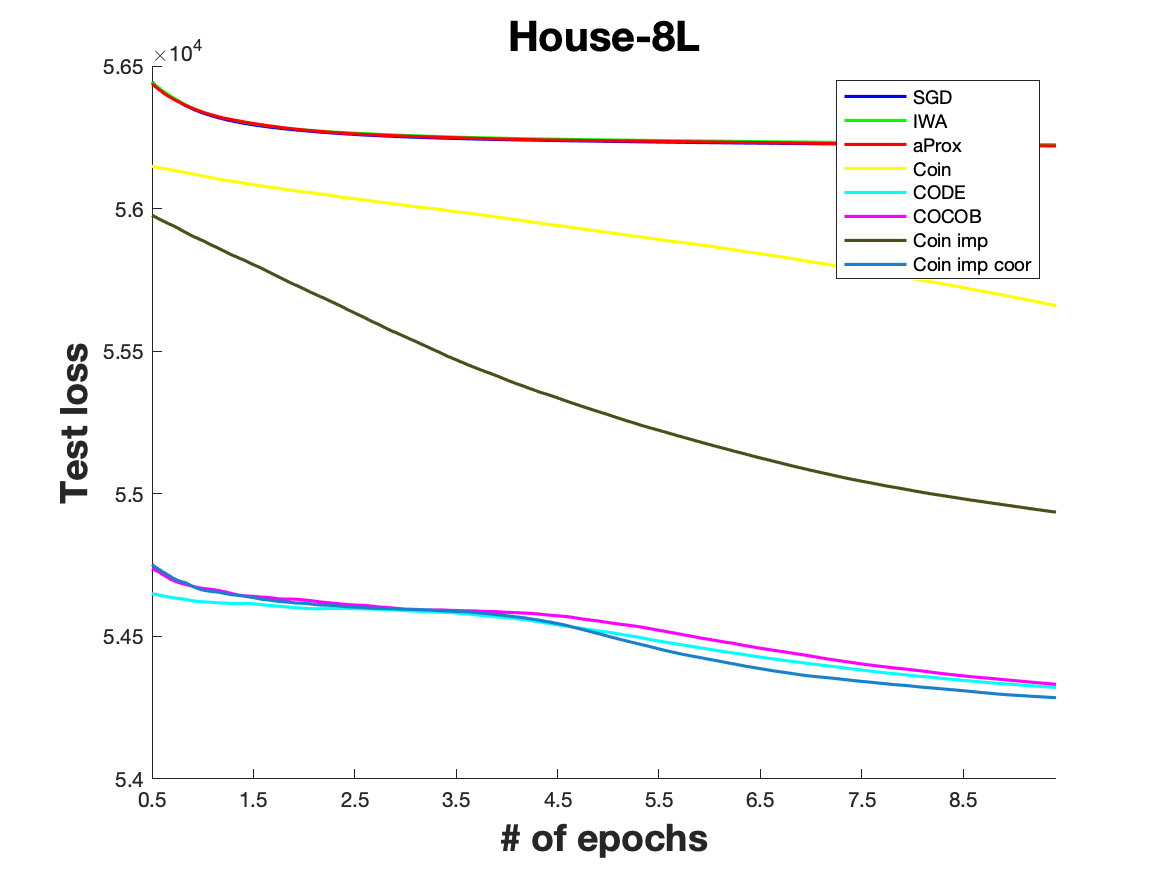}
\caption{Regression tasks: Mean test loss versus epochs.}
\label{fig:reg_plot}
\end{figure}

\begin{figure}[h]
\centering
\includegraphics[width=0.33\textwidth]{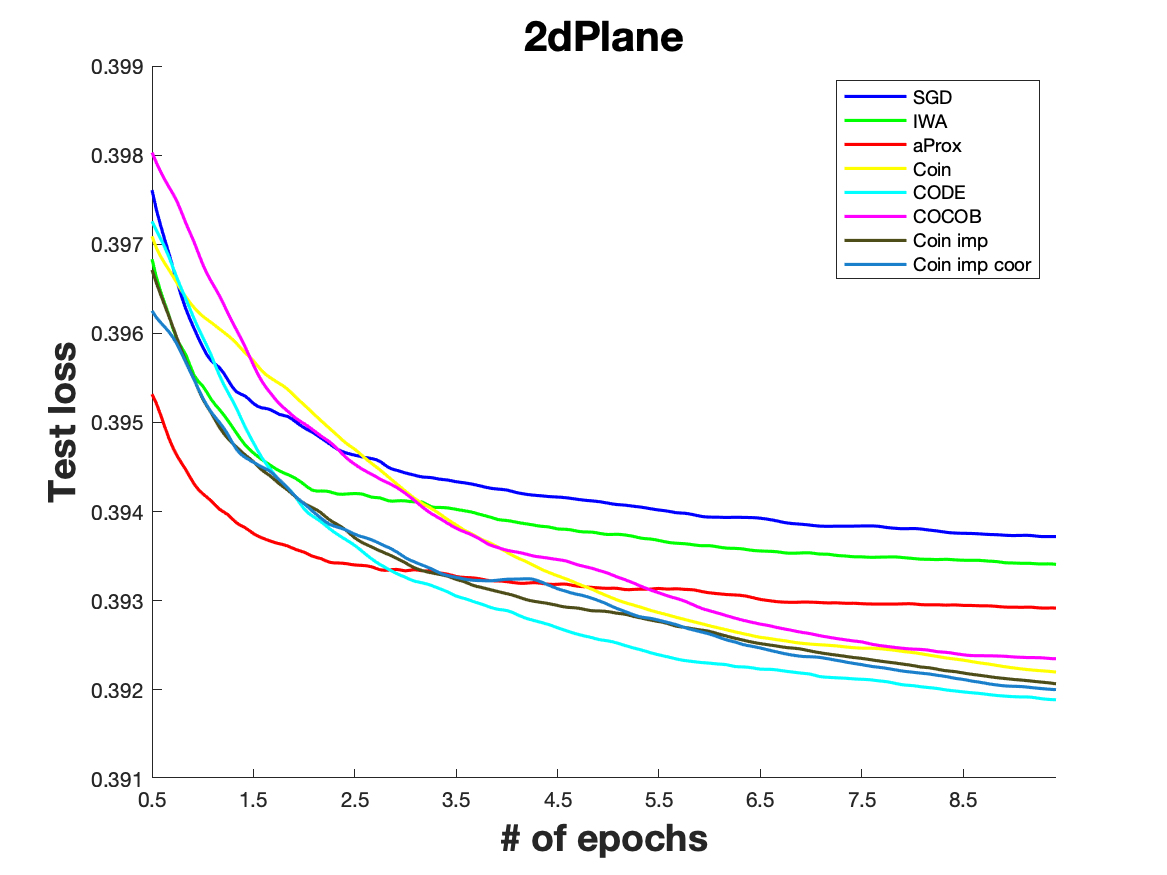} \hfill
\includegraphics[width=0.33\textwidth]{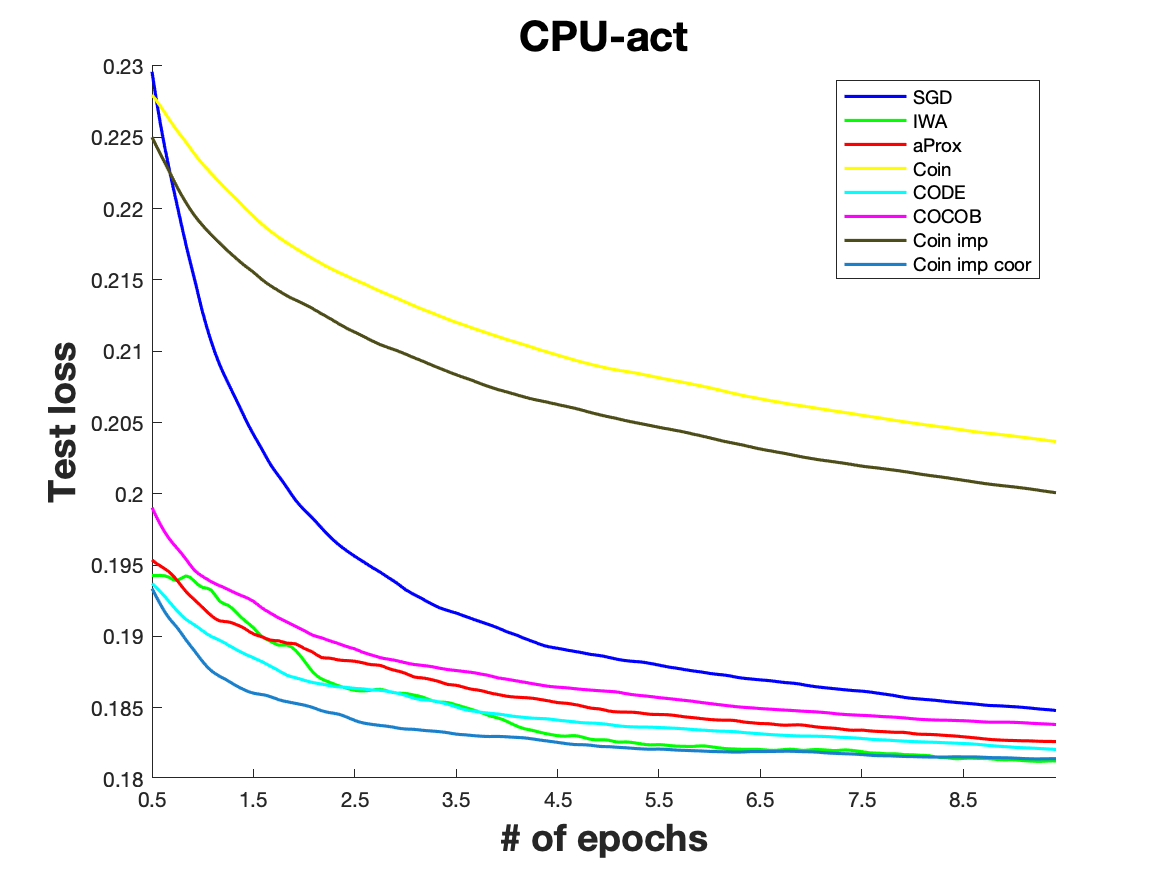}\hfill
\includegraphics[width=0.33\textwidth]{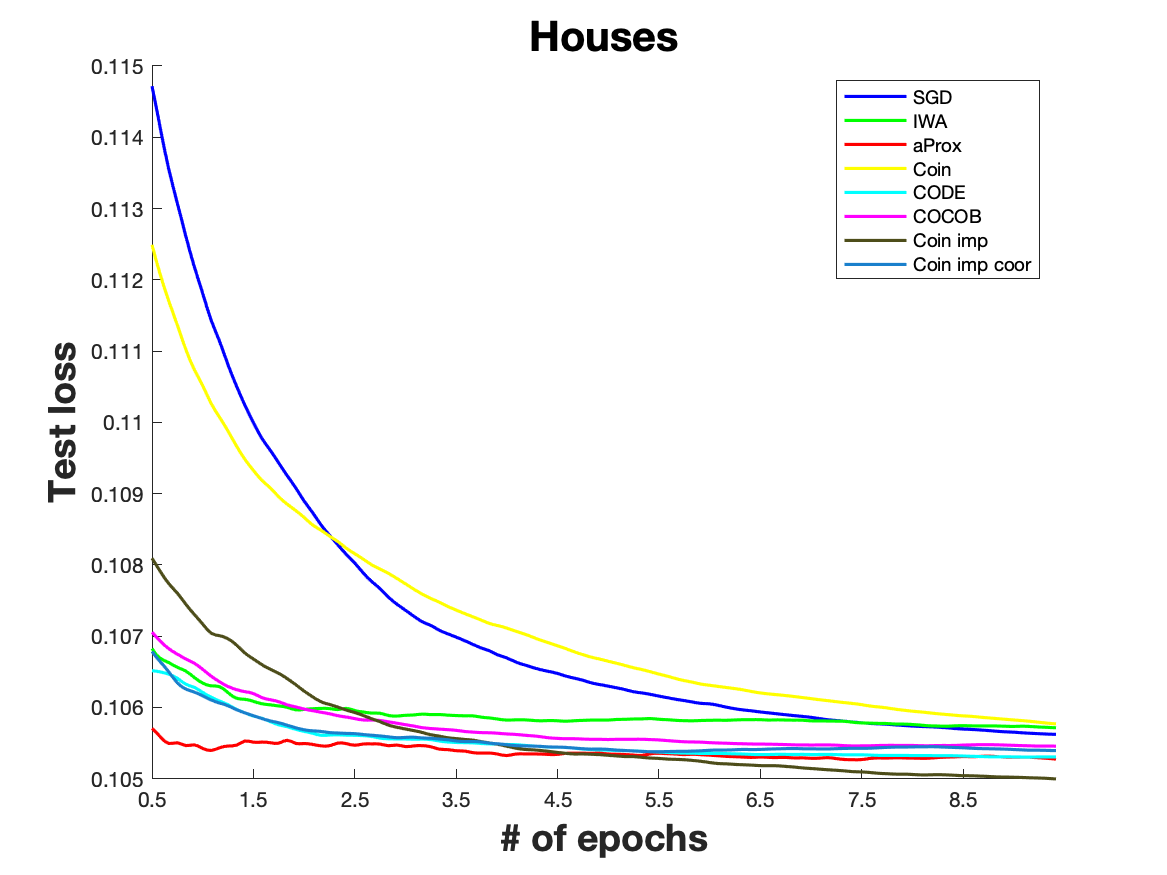}
\caption{Classification tasks: Mean test loss versus epochs.}
\label{fig:class_plot}
\end{figure}

For SGD, aProx and IWA, we tune the initial step size $\eta_0$ and consider stepsize sequence of the form: $\eta_k = \eta_0 / \sqrt{k}$. All the parameter-free algorithms do not have parameters to tune. Implicit Coin has a closed form solution for $h_t$, so the implementation is as efficient as SGD. For CODE and Coordinate-wise Implicit Coin, we used a bisection algorithm to find $h_t$.

We considered linear predictors trained with the hinge loss for classification, and with the absolute loss for regression.
We trained all algorithms for 10 epochs over the training data. Each epoch requires running through the total shuffled training set. All the experiments are repeated 3 times, we report the average of the 3 repetitions. For all the algorithms that require a learning rate, on every dataset and for each repetition, we use the validation loss to choose the best learning rate, train using that learning rate, test on the test set, and report the test loss averaged over 3 repetitions. 

Figure~\ref{fig:reg_plot} and Figure~\ref{fig:class_plot} show the average of test losses versus number of epochs. In all experiments, Coordinate-wise Implicit Coin has a performance that is superior or comparable to all the competitors. It is worth remembering that the algorithms with a learning rate were tuned on the validation set, while the parameter-free algorithms were not tuned in any way. Moreover, in all cases, Implicit Coin has a better performance than Coin. Given that their key difference is the truncated models in Implicit coin, this directly supports the advantage of these updates over linearized ones. 

More in detail, on regression tasks (Figures~\ref{fig:reg_plot}), the performance of Coordinate-wise Implicit Coin is superior to the other competitors at the end of the training on Rainfall and House-8L. COCOB, CODE, and Coordinate-wise Implicit Coin outperform the other competitors significantly on Houses-8L, and their performances are close to each other on Bank32nh. On the classification tasks (Figure~\ref{fig:class_plot}), Coordinate-wise Implicit Coin and Implicit Coin achieve essentially the optimal performance on CPU-act and Houses correspondingly. On 2dPlane, Coordinate-wise Implicit Coin, Implicit Coin, and CODE perform almost equally satisfying, and they outperform the other competitors. 

\section{Conclusion}
\label{sec:conc}

In this paper, we present new parameter-free algorithms utilizing a better convex lower bound: the truncated linear model. \emph{We overcome the theoretical difficulties of using truncated linear models in parameter-free algorithms with a new regret decomposition}. Our regret bounds are analogous to bounds achieved by implicit methods. Besides, we propose a variant of our algorithm that has a very efficient closed form update rule, matching the runtime of gradient descent. Finally, we provide a coordinate-wise variant with tighter regret bounds.

In the future, we would like to explore the additional possibilities offered by the new decomposition of regret. For example, we would like to overcome the limitation of the current per-coordinate formulation and explore the possibility to design a per-coordinate variant that uses truncated linear models, retaining the closed form update. Besides, considering the good empirical performance of CODE and the similarity in the spirit of CODE and Implicit Coin, we are interested in the possibility of studying the theoretical properties of CODE.

\acks{This material is based upon work supported by the National Science Foundation under the grants no. 1908111 ``AF: Small:
Collaborative Research: New Representations for Learning Algorithms and Secure Computation'' and no. 2046096 ``CAREER: Parameter-free Optimization Algorithms for Machine Learning''.}

\bibliography{../../../../learning}

\begin{thebibliography}{42}
\providecommand{\natexlab}[1]{#1}
\providecommand{\url}[1]{\texttt{#1}}
\expandafter\ifx\csname urlstyle\endcsname\relax
  \providecommand{\doi}[1]{doi: #1}\else
  \providecommand{\doi}{doi: \begingroup \urlstyle{rm}\Url}\fi

\bibitem[Abernethy et~al.(2008{\natexlab{a}})Abernethy, Bartlett, Rakhlin, and
  Tewari]{AbernethyBRT08}
J.~Abernethy, P.~L. Bartlett, A.~Rakhlin, and A.~Tewari.
\newblock Optimal strategies and lower bounds for online convex games.
\newblock In \emph{Proceedings of the 21st Annual Conference on Learning
  Theory}, pages 414--424. Omnipress, 2008{\natexlab{a}}.

\bibitem[Abernethy et~al.(2008{\natexlab{b}})Abernethy, Hazan, and
  Rakhlin]{AbernethyHR08}
J.~D. Abernethy, E.~Hazan, and A.~Rakhlin.
\newblock Competing in the dark: An efficient algorithm for bandit linear
  optimization.
\newblock In Rocco~A. Servedio and Tong Zhang, editors, \emph{Proc. of
  Conference on Learning Theory (COLT)}, pages 263--274. Omnipress,
  2008{\natexlab{b}}.

\bibitem[Asi and Duchi(2019)]{AsiD19}
H.~Asi and J.~C. Duchi.
\newblock Stochastic (approximate) proximal point methods: Convergence,
  optimality, and adaptivity.
\newblock \emph{SIAM Journal on Optimization}, 29\penalty0 (3):\penalty0
  2257--2290, 2019.

\bibitem[Campolongo and Orabona(2020)]{CampolongoO20}
N.~Campolongo and F.~Orabona.
\newblock Temporal variability in implicit online learning.
\newblock In \emph{Advances in Neural Information Processing Systems},
  volume~33. Curran Associates, Inc., 2020.

\bibitem[Chang and Lin(2001)]{ChangL01}
C.-C. Chang and C.-J. Lin.
\newblock \emph{{LIBSVM}: a library for support vector machines}, 2001.
\newblock Software available at \url{http://www.csie.ntu.edu.tw/~cjlin/libsvm}.

\bibitem[Chen et~al.(2022)Chen, Langford, and Orabona]{ChenLO22}
K.~Chen, J.~Langford, and F.~Orabona.
\newblock Better parameter-free stochastic optimization with {ODE} updates for
  coin-betting.
\newblock In \emph{Proceedings of the AAAI Conference on Artificial
  Intelligence}, 2022.

\bibitem[Chen et~al.(2021)Chen, Luo, and Wei]{ChenLW21}
L.~Chen, H.~Luo, and C.-Y. Wei.
\newblock Impossible tuning made possible: A new expert algorithm and its
  applications.
\newblock In \emph{Proc. of the Conference on Learning Theory}, 2021.

\bibitem[Cutkosky(2018)]{Cutkosky18}
A.~Cutkosky.
\newblock \emph{Algorithms and Lower Bounds for Parameter-free Online
  Learning}.
\newblock PhD thesis, Stanford University, 2018.

\bibitem[Cutkosky and Boahen(2017)]{CutkoskyB17}
A.~Cutkosky and K.~Boahen.
\newblock Online learning without prior information.
\newblock In \emph{Proc. of the 2017 Conference on Learning Theory}, volume~65
  of \emph{Proc. of Machine Learning Research}, pages 643--677, Amsterdam,
  Netherlands, 07--10 Jul 2017. PMLR.

\bibitem[Cutkosky and Orabona(2018)]{CutkoskyO18}
A.~Cutkosky and F.~Orabona.
\newblock Black-box reductions for parameter-free online learning in {Banach}
  spaces.
\newblock In \emph{Proc. of the Conference on Learning Theory (COLT)}, 2018.

\bibitem[Cutkosky and Sarlos(2019)]{CutkoskyS19}
A.~Cutkosky and T.~Sarlos.
\newblock Matrix-free preconditioning in online learning.
\newblock In \emph{International Conference on Machine Learning}, pages
  1455--1464. PMLR, 2019.

\bibitem[Foster et~al.(2018)Foster, Rakhlin, and Sridharan]{FosterRS18}
D.~J. Foster, A.~Rakhlin, and K.~Sridharan.
\newblock Online learning: Sufficient statistics and the {Burkholder} method.
\newblock In \emph{Proc. of the Conference on Learning Theory}, 2018.

\bibitem[Gordon(1999)]{Gordon99}
G.~J. Gordon.
\newblock Regret bounds for prediction problems.
\newblock In \emph{Proc. of the twelfth annual conference on Computational
  learning theory (COLT)}, pages 29--40, 1999.

\bibitem[Hazan and Kale(2008)]{HazanK08}
E.~Hazan and S.~Kale.
\newblock Extracting certainty from uncertainty: Regret bounded by variation in
  costs.
\newblock In \emph{Proc. of the 21st Conference on Learning Theory}, 2008.

\bibitem[Jun and Orabona(2019)]{JunO19}
K.-S. Jun and F.~Orabona.
\newblock Parameter-free online convex optimization with sub-exponential noise.
\newblock In \emph{Proc. of the Conference on Learning Theory (COLT)}, 2019.

\bibitem[Karampatziakis and Langford(2011)]{KarampatziakisL11}
N.~Karampatziakis and J.~Langford.
\newblock Online importance weight aware updates.
\newblock In \emph{Proc. of the Twenty-Seventh Conference on Uncertainty in
  Artificial Intelligence}, UAI'11, pages 392–--399, Arlington, Virginia,
  USA, 2011. AUAI Press.

\bibitem[Kempka et~al.(2019)Kempka, Kot{\l}owski, and Warmuth]{KempkaKW19}
M.~Kempka, W.~Kot{\l}owski, and M.~K. Warmuth.
\newblock Adaptive scale-invariant online algorithms for learning linear
  models.
\newblock In K.~Chaudhuri and R.~Salakhutdinov, editors, \emph{Proc. of the
  36th International Conference on Machine Learning}, volume~97 of \emph{Proc.
  of Machine Learning Research}, pages 3321--3330, Long Beach, California, USA,
  09--15 Jun 2019. PMLR.

\bibitem[Kivinen and Warmuth(1997)]{KivinenW97}
J.~Kivinen and M.~Warmuth.
\newblock Exponentiated gradient versus gradient descent for linear predictors.
\newblock \emph{Information and Computation}, 132\penalty0 (1):\penalty0 1--63,
  January 1997.

\bibitem[Kivinen et~al.(2006)Kivinen, Warmuth, and Hassibi]{KivinenWH06}
J.~Kivinen, M.~K. Warmuth, and B.~Hassibi.
\newblock The $p$-norm generalization of the {LMS} algorithm for adaptive
  filtering.
\newblock \emph{IEEE Transactions on Signal Processing}, 54\penalty0
  (5):\penalty0 1782--1793, 2006.

\bibitem[Kot{\l}owski(2020)]{Kotlowski19}
W.~Kot{\l}owski.
\newblock Scale-invariant unconstrained online learning.
\newblock \emph{Theoretical Computer Science}, 808:\penalty0 139--158, 2020.

\bibitem[Kulis and Bartlett(2010)]{KulisB10}
B.~Kulis and P.~L. Bartlett.
\newblock Implicit online learning.
\newblock In \emph{International Conference on Machine Learning}, pages
  575--582, 2010.

\bibitem[Martinet(1970)]{Martinet70}
B.~Martinet.
\newblock R{\'e}gularisation d’in{\'e}quations variationnelles par
  approximations successives. rev. fran{\c{c}}aise informat.
\newblock \emph{Recherche Op{\'e}rationnelle}, 4:\penalty0 154--158, 1970.

\bibitem[McMahan(2010)]{McMahan10}
H.~B. McMahan.
\newblock A unified view of regularized dual averaging and mirror descent with
  implicit updates.
\newblock \emph{arXiv preprint arXiv:1009.3240}, 2010.

\bibitem[McMahan and Orabona(2014)]{McMahanO14}
H.~B. McMahan and F.~Orabona.
\newblock Unconstrained online linear learning in {Hilbert} spaces: Minimax
  algorithms and normal approximations.
\newblock In \emph{Proc of the Annual Conference on Learning Theory, COLT},
  2014.

\bibitem[Mhammedi and Koolen(2020)]{MhammediK20}
Z.~Mhammedi and W.~M Koolen.
\newblock Lipschitz and comparator-norm adaptivity in online learning.
\newblock In \emph{Conference on Learning Theory}, pages 2858--2887. PMLR,
  2020.

\bibitem[Moreau(1965)]{Moreau65}
J.-J. Moreau.
\newblock Proximit{\'e} et dualit{\'e} dans un espace hilbertien.
\newblock \emph{Bulletin de la Soci{\'e}t{\'e} math{\'e}matique de France},
  93:\penalty0 273--299, 1965.

\bibitem[Nemirovsky and Yudin(1983)]{NemirovskyY83}
A.~S. Nemirovsky and D.~Yudin.
\newblock \emph{Problem complexity and method efficiency in optimization}.
\newblock Wiley, New York, NY, USA, 1983.

\bibitem[Orabona(2014)]{Orabona14}
F.~Orabona.
\newblock Simultaneous model selection and optimization through parameter-free
  stochastic learning.
\newblock In \emph{Advances in Neural Information Processing Systems 27}, 2014.

\bibitem[Orabona(2019)]{Orabona19}
F.~Orabona.
\newblock A modern introduction to online learning.
\newblock \emph{arXiv preprint arXiv:1912.13213}, 2019.

\bibitem[Orabona and P\'al(2016)]{OrabonaP16}
F.~Orabona and D.~P\'al.
\newblock Coin betting and parameter-free online learning.
\newblock In D.~D. Lee, M.~Sugiyama, U.~V. Luxburg, I.~Guyon, and R.~Garnett,
  editors, \emph{Advances in Neural Information Processing Systems 29}, pages
  577--585. Curran Associates, Inc., 2016.

\bibitem[Orabona and P\'al(2021)]{OrabonaP21}
F.~Orabona and D.~P\'al.
\newblock Parameter-free stochastic optimization of variationally coherent
  functions.
\newblock \emph{arXiv preprint arXiv:2102.00236}, 2021.

\bibitem[Orabona and Tommasi(2017)]{OrabonaT17}
F.~Orabona and T.~Tommasi.
\newblock Training deep networks without learning rates through coin betting.
\newblock In \emph{Advances in Neural Information Processing Systems}, pages
  2160--2170, 2017.

\bibitem[Parikh and Boyd(2014)]{ParikhB14}
N.~Parikh and S.~Boyd.
\newblock Proximal algorithms.
\newblock \emph{Foundations and Trends in optimization}, 1\penalty0
  (3):\penalty0 127--239, 2014.

\bibitem[Rockafellar(1976)]{Rockafellar76}
R.~T. Rockafellar.
\newblock Monotone operators and the proximal point algorithm.
\newblock \emph{SIAM journal on control and optimization}, 14\penalty0
  (5):\penalty0 877--898, 1976.

\bibitem[Shalev-Shwartz(2007)]{Shalev-Shwartz07}
S.~Shalev-Shwartz.
\newblock \emph{Online Learning: Theory, Algorithms, and Applications}.
\newblock PhD thesis, The Hebrew University, 2007.

\bibitem[Shalev-Shwartz(2012)]{Shalev-Shwartz12}
S.~Shalev-Shwartz.
\newblock Online learning and online convex optimization.
\newblock \emph{Foundations and Trends in Machine Learning}, 4\penalty0 (2),
  2012.

\bibitem[Song et~al.(2018)Song, Liu, Liu, Jiang, and Zhang]{SongLLJZ18}
C.~Song, J.~Liu, H.~Liu, Y.~Jiang, and T.~Zhang.
\newblock Fully implicit online learning.
\newblock \emph{arXiv preprint arXiv:1809.09350}, 2018.

\bibitem[Streeter and McMahan(2012)]{StreeterM12}
M.~Streeter and B.~McMahan.
\newblock No-regret algorithms for unconstrained online convex optimization.
\newblock In \emph{Advances in Neural Information Processing Systems 25}, pages
  2402--2410. Curran Associates, Inc., 2012.

\bibitem[van~der Hoeven(2019)]{vanderHoeven19}
D.~van~der Hoeven.
\newblock User-specified local differential privacy in unconstrained adaptive
  online learning.
\newblock In H.~Wallach, H.~Larochelle, A.~Beygelzimer, F.~d'Alch\'{e} Buc,
  E.~Fox, and R.~Garnett, editors, \emph{Advances in Neural Information
  Processing Systems}, volume~32. Curran Associates, Inc., 2019.

\bibitem[Vanschoren et~al.(2013)Vanschoren, van Rijn, Bischl, and
  Torgo]{VanschorenVBT2013}
J.~Vanschoren, J.~N. van Rijn, B.~Bischl, and L.~Torgo.
\newblock {OpenML}: Networked science in machine learning.
\newblock \emph{SIGKDD Explorations}, 15\penalty0 (2):\penalty0 49--60, 2013.

\bibitem[Warmuth and Jagota(1997)]{Warmuth97}
M.~K. Warmuth and A.~K. Jagota.
\newblock Continuous and discrete-time nonlinear gradient descent: Relative
  loss bounds and convergence.
\newblock In \emph{Electronic proceedings of the 5th International Symposium on
  Artificial Intelligence and Mathematics}, volume 326, 1997.

\bibitem[Zinkevich(2003)]{Zinkevich03}
M.~Zinkevich.
\newblock Online convex programming and generalized infinitesimal gradient
  ascent.
\newblock In \emph{Proc. of the International Conference on Machine Learning},
  pages 928--936, 2003.

\end{thebibliography}

\appendix
\label{sec:appendix}

\section{Proof of Theorem~\ref{thm:main}}
Before we prove Theorem~\ref{thm:main}, we first introduce some technical Lemmas that will be used in the proof. 

\begin{lemma}
\label{lemma:log}
Under the assumptions of Theorem~\ref{thm:main} and the notation of Algorithm~\ref{alg:imp_coin}, we have
$\|\bbeta_{t+1}-\bbeta_t\| \leq \frac{3\|\bg_t^+\|}{1+2\sum_{i=1}^t \|\bg_i\|\|\bg_i^+\|}$ for all $t = 1, \dots,T$.
\end{lemma}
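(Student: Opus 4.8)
The plan is to recognize the $\bbeta$-update in Algorithm~\ref{alg:imp_coin} as a projected online gradient descent step and to control the step length directly. Writing $\eta_t = \left(1/\eta_1 + 2\sum_{i=1}^{t-1}(\|\bg_i\|^2 - \|\bg_i^+-\bg_i\|^2)\right)^{-1}$, line~6 reads $\hat{\bbeta}_{t+1} = \bbeta_t - \eta_t(\bg_t^+ + 2\bbeta_t(\|\bg_t\|^2 - \|\bg_t^+-\bg_t\|^2))$, and line~7 is exactly the Euclidean projection $\Pi_B$ onto $B = \{\bx : \|\bx\|\le 1/2\}$. Since $\bbeta_t\in B$ already, $\Pi_B(\bbeta_t)=\bbeta_t$, so by nonexpansiveness of the projection I would first bound $\|\bbeta_{t+1}-\bbeta_t\| = \|\Pi_B(\hat{\bbeta}_{t+1}) - \Pi_B(\bbeta_t)\| \le \|\hat{\bbeta}_{t+1}-\bbeta_t\| = \eta_t\,\|\bg_t^+ + 2\bbeta_t(\|\bg_t\|^2 - \|\bg_t^+-\bg_t\|^2)\|$. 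This reduces the claim to bounding the numerator by $3\|\bg_t^+\|$ and the reciprocal stepsize $1/\eta_t$ from below by $1 + 2\sum_{i=1}^t \|\bg_i\|\|\bg_i^+\|$.

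For the numerator, I would use $\bg_t^+ = h_t\bg_t$ with $h_t\in[0,1]$, which gives $\|\bg_t^+-\bg_t\| = (1-h_t)\|\bg_t\|$ and hence $\|\bg_t\|^2 - \|\bg_t^+-\bg_t\|^2 = (2h_t - h_t^2)\|\bg_t\|^2$. The triangle inequality together with $\|\bbeta_t\|\le 1/2$ then yields $\|\bg_t^+ + 2\bbeta_t(\|\bg_t\|^2 - \|\bg_t^+-\bg_t\|^2)\| \le \|\bg_t^+\| + (2h_t - h_t^2)\|\bg_t\|^2 \le \|\bg_t^+\| + 2h_t\|\bg_t\|^2 = \|\bg_t^+\| + 2\|\bg_t\|\|\bg_t^+\|$, and since $\|\bg_t\|\le 1$ this is at most $3\|\bg_t^+\|$.

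For the denominator I would show $1/\eta_t = 1/\eta_1 + 2\sum_{i=1}^{t-1}(\|\bg_i\|^2 - \|\bg_i^+-\bg_i\|^2) \ge 1 + 2\sum_{i=1}^{t}\|\bg_i\|\|\bg_i^+\|$. The key algebraic fact is that, again from $\bg_i^+=h_i\bg_i$, each summand satisfies $\|\bg_i\|^2 - \|\bg_i^+-\bg_i\|^2 = (2h_i - h_i^2)\|\bg_i\|^2 \ge h_i\|\bg_i\|^2 = \|\bg_i\|\|\bg_i^+\|$ because $2 - h_i \ge 1$. Thus the sum to $t-1$ on the left already dominates the sum to $t-1$ on the right, and the only missing piece is the $i=t$ term $\|\bg_t\|\|\bg_t^+\|$. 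This is absorbed by the initialization slack: $1/\eta_1 = 3$ and $\|\bg_t\|\|\bg_t^+\| = h_t\|\bg_t\|^2 \le 1$ (using $\|\bg_t\|\le 1$), so $3 \ge 1 + 2\|\bg_t\|\|\bg_t^+\|$, completing the inequality. Combining the numerator and denominator bounds gives the claim.

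The step I expect to be the main obstacle is the denominator comparison: it is where both the truncated-model structure ($h_i\in[0,1]$, giving the per-term domination $\|\bg_i\|^2-\|\bg_i^+-\bg_i\|^2\ge\|\bg_i\|\|\bg_i^+\|$) and the deliberate off-by-one bookkeeping (the stepsize uses $\sum_{i=1}^{t-1}$ while the target uses $\sum_{i=1}^{t}$) must line up, with the constant $1/\eta_1=3$ chosen precisely so that the extra $t$-th term and the ``$+1$'' on the right are covered by the spare slack.
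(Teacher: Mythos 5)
Your proposal is correct and follows essentially the same route as the paper's proof: nonexpansiveness of the projection onto $B=\{\bx:\|\bx\|\le 1/2\}$, a triangle-inequality bound of the numerator by $3\|\bg_t^+\|$ using $\|\bbeta_t\|\le 1/2$ and $\|\bg_t\|\le 1$, and the per-term domination $\|\bg_i\|^2-\|\bg_i^+-\bg_i\|^2\ge\|\bg_i\|\|\bg_i^+\|$ together with the slack from $1/\eta_1=3$ to absorb the $i=t$ term in the denominator. Your write-up even makes the off-by-one bookkeeping more explicit than the paper does, but it is the same argument.
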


\begin{proof}
By the definition of $\bbeta_{t+1},~\hat{\bbeta}_{t+1},~\bbeta_t$, we have:
\begin{align*}
\| \bbeta_{t+1}-\bbeta_t \| &\leq \| \hat{\bbeta}_{t+1}-\bbeta_t \| = \frac{\|\bg_t^+ + 2\bbeta_t(\|\bg_t\|^2 - \|\bg_t^+ - \bg_t\|^2) \|}{3+2\sum_{i=1}^{t-1} (\|\bg_i\|^2 - \|\bg_i^+ - \bg_i\|^2)}\\
&\leq \frac{\| \bg_t^+ \| + 2\| \bbeta_t\| (2\|\bg_t\| \|\bg_t^+\| - \|\bg_t^+\|^2)}{3+2\sum_{i=1}^{t-1} (2\|\bg_i\|\|\bg_i^+\| - \|\bg_i^+\|^2)} \leq \frac{\| \bg_t^+ \| + 2\| \bbeta_t\| (2\|\bg_t\| \|\bg_t^+\| - \|\bg_t^+\|^2)}{3+2\sum_{i=1}^{t-1} \|\bg_i\|\|\bg_i^+\| } \\
&\leq \frac{\| \bg_t^+ \| + 2\| \bbeta_t\| \|\bg_t^+\|(2 \|\bg_t\|-h_t \|\bg_t\|) }{1+2\sum_{i=1}^{t} \|\bg_i\|\|\bg_i^+\|} \leq \frac{3 \|\bg_t^+\|}{1+2\sum_{i=1}^{t} \|\bg_i\|\|\bg_i^+\|}~,
\end{align*}
where we used the fact that $\|\bg_i\|\|\bg_i^+\|\leq 1$ in second to last inequality.
\end{proof}

\begin{lemma}
\label{lemma:int}  
Let $a_0\geq 0$ and $f :[0,+\infty) \to [0,+\infty)$ a nonincreasing function. Then,
\[
\sum_{t=1}^T a_t f\left(a_0+\sum_{i=1}^t a_i\right)
\leq \int_{a_0}^{\sum_{t=0}^T a_t} f(x)dx~.
\]
\end{lemma}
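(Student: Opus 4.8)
The plan is to prove the integral bound $\sum_{t=1}^T a_t f(a_0 + \sum_{i=1}^t a_i) \leq \int_{a_0}^{\sum_{t=0}^T a_t} f(x)\,dx$ by comparing each term in the sum with the integral of $f$ over a suitable subinterval. First, I would set up the partial sums $S_k \triangleq a_0 + \sum_{i=1}^k a_i$ for $k = 0, 1, \dots, T$, so that $S_0 = a_0$ and $S_T = \sum_{t=0}^T a_t$, and note that $S_k - S_{k-1} = a_k \geq 0$, which means the sequence $(S_k)$ is nondecreasing and the intervals $[S_{k-1}, S_k]$ partition $[a_0, S_T]$ (up to overlap at endpoints, which has measure zero).

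The key step is the per-term comparison. For each $t$, the summand is $a_t f(S_t) = (S_t - S_{t-1}) f(S_t)$. The plan is to recognize this as a right-endpoint Riemann rectangle over the interval $[S_{t-1}, S_t]$. Since $f$ is nonincreasing, for every $x \in [S_{t-1}, S_t]$ we have $f(x) \geq f(S_t)$, because $x \leq S_t$. Integrating this pointwise inequality over the interval gives
\[
\int_{S_{t-1}}^{S_t} f(x)\,dx \geq f(S_t)(S_t - S_{t-1}) = a_t f(S_t)~.
\]
This is precisely the statement that a nonincreasing function dominates its right-endpoint rectangle.

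The final step is to sum this inequality over $t = 1, \dots, T$ and telescope the integrals. By additivity of the integral over the partition,
\[
\sum_{t=1}^T \int_{S_{t-1}}^{S_t} f(x)\,dx = \int_{S_0}^{S_T} f(x)\,dx = \int_{a_0}^{\sum_{t=0}^T a_t} f(x)\,dx~,
\]
which yields the claimed bound directly. I do not anticipate a serious obstacle here; this is a clean ordering argument. The only points requiring mild care are the degenerate cases: when some $a_t = 0$ the interval $[S_{t-1}, S_t]$ is empty and its term contributes zero on both sides (consistent), and one should confirm that $f$ being nonnegative and nonincreasing on $[0,\infty)$ makes all integrals well defined and finite over the bounded range. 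One subtlety worth flagging is ensuring the partial sums stay within the domain $[0,\infty)$ of $f$, which holds automatically since $a_0 \geq 0$ and all $a_i \geq 0$.
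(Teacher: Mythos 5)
Your proposal is correct and follows essentially the same argument as the paper: both view $a_t f\left(a_0+\sum_{i=1}^t a_i\right)$ as a right-endpoint rectangle over $[s_{t-1}, s_t]$, bound it by $\int_{s_{t-1}}^{s_t} f(x)\,dx$ using that $f$ is nonincreasing, and sum the telescoping integrals. Your write-up is merely more explicit about the degenerate cases and domain considerations, which the paper leaves implicit.
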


\begin{proof}
Denote by $s_t = a_0+\sum_{i=1}^t a_i$.
\[
a_t f\left(a_0+ \sum_{i=1}^t a_i\right)
= a_t f(s_t) 
= \int_{s_{t-1}}^{s_t} f(s_t)dx \leq \int_{s_{t-1}}^{s_t} f(x) dx~.
\]
Summing over $t = 1,\dots,T$, we have the stated bound.
\end{proof}

\begin{lemma}
\label{lemma:OGD}
Let $V$ a non-empty closed convex set in $\R^d$. Assume that the functions $f_t: \R \to (-\infty,\infty]$ are $\mu_t$-strongly convex w.r.t $\|\cdot\|_2$ over $V \subset \cap_{t=1}^T \interior \dom f_t$, where $\mu_t>0$. Assume we receive subgradients $\bv_t \in \partial f_t(\bw_t)$ and set $\bw_t$ using Online Gradient Descent with stepsizes: $\eta_t = \frac{1}{1/\eta_1+\sum_{i=1}^{t-1}\mu_i}$. Then, for any $\bu\in V$, we have the following regret guarantee:
\begin{align*}
\sum_{t=1}^T (f_t(\bw_t) - f_t(\bu))
\leq \frac{\|\bw_1 - \bu\|^2}{2\eta_1} + \sum_{t=1}^T \frac{\eta_t \left\| \bv_t\right\|^2}{2}+\sum_{t=1}^T \frac{\mu_t}{2}\|\bw_{t+1}+\bw_t-2\bu\| \|\bw_{t+1}-\bw_t\| ~.
\end{align*}
\end{lemma}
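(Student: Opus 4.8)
The plan is to run the standard Online Gradient Descent analysis for $\mu_t$-strongly convex losses, but to track carefully an index mismatch created by the prescribed stepsize $\eta_t = 1/(1/\eta_1 + \sum_{i=1}^{t-1}\mu_i)$; this mismatch is precisely what generates the extra product term in the claimed bound. First I would use $\mu_t$-strong convexity of $f_t$ with $\bv_t \in \partial f_t(\bw_t)$ to write
\[
f_t(\bw_t) - f_t(\bu) \leq \langle \bv_t, \bw_t - \bu\rangle - \frac{\mu_t}{2}\|\bw_t - \bu\|^2~.
\]
Next, from the update $\bw_{t+1} = \Pi_V(\bw_t - \eta_t \bv_t)$ and nonexpansiveness of the Euclidean projection onto $V$ (using $\Pi_V(\bu)=\bu$ since $\bu\in V$), expanding $\|\bw_{t+1}-\bu\|^2 \le \|\bw_t - \eta_t\bv_t - \bu\|^2$ gives the one-step inequality
\[
\langle \bv_t, \bw_t - \bu\rangle \leq \frac{\|\bw_t - \bu\|^2 - \|\bw_{t+1}-\bu\|^2}{2\eta_t} + \frac{\eta_t}{2}\|\bv_t\|^2~.
\]
Summing both displays over $t=1,\dots,T$ isolates the term $\sum_t \frac{\eta_t}{2}\|\bv_t\|^2$ and leaves me to control $S := \sum_{t=1}^T \big[\tfrac{\|\bw_t-\bu\|^2 - \|\bw_{t+1}-\bu\|^2}{2\eta_t} - \tfrac{\mu_t}{2}\|\bw_t-\bu\|^2\big]$.

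Writing $D_t := \|\bw_t - \bu\|^2$, I would handle $S$ by Abel summation. Since $\frac{1}{\eta_t} = \frac{1}{\eta_1} + \sum_{i=1}^{t-1}\mu_i$, we have $\frac{1}{\eta_t}-\frac{1}{\eta_{t-1}} = \mu_{t-1}$, so reorganizing the telescoping sum yields
\[
\sum_{t=1}^T \frac{D_t - D_{t+1}}{2\eta_t} = \frac{D_1}{2\eta_1} + \sum_{t=2}^T \frac{\mu_{t-1}}{2}D_t - \frac{D_{T+1}}{2\eta_T}~.
\]
This is the crux: in the canonical strongly-convex analysis the stepsize uses $\sum_{i=1}^{t}\mu_i$, which would produce coefficients $\frac{\mu_t}{2}$ that cancel $-\frac{\mu_t}{2}D_t$ exactly. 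Because $\eta_1$ is left as a \emph{free} constant, the cancellation is off by one index, and after subtracting $\sum_t \frac{\mu_t}{2}D_t$ I am left with a residual $\sum_{s=1}^{T-1}\frac{\mu_s}{2}(D_{s+1}-D_s)$ together with harmless nonpositive boundary terms $-\frac{D_{T+1}}{2\eta_T}$ and $-\frac{\mu_T}{2}D_T$.

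I expect this index-mismatch residual to be the main obstacle, but it resolves cleanly. Using $D_{s+1}-D_s = \langle \bw_{s+1}-\bw_s,\, \bw_{s+1}+\bw_s - 2\bu\rangle$ and Cauchy--Schwarz bounds each term by $\|\bw_{s+1}+\bw_s - 2\bu\|\,\|\bw_{s+1}-\bw_s\|$, and extending the sum from $T-1$ to $T$ only enlarges the (nonnegative) right-hand side. Dropping the nonpositive boundary terms and keeping $\frac{D_1}{2\eta_1} = \frac{\|\bw_1-\bu\|^2}{2\eta_1}$ then gives
\[
S \leq \frac{\|\bw_1-\bu\|^2}{2\eta_1} + \sum_{t=1}^T \frac{\mu_t}{2}\|\bw_{t+1}+\bw_t-2\bu\|\,\|\bw_{t+1}-\bw_t\|~,
\]
which, combined with the already-isolated gradient term $\sum_t \frac{\eta_t}{2}\|\bv_t\|^2$, is exactly the stated bound.
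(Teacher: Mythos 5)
Your proof is correct and is essentially the paper's own argument with the algebra reorganized: the paper applies the identity $\|\bw_t-\bu\|^2 \ge \|\bw_{t+1}-\bu\|^2 - \|\bw_{t+1}+\bw_t-2\bu\|\,\|\bw_{t+1}-\bw_t\|$ round-by-round so that $\frac{1}{2\eta_t}+\frac{\mu_t}{2}=\frac{1}{2\eta_{t+1}}$ makes the sum telescope directly, while you telescope first via Abel summation and then bound the resulting residual $\sum_{s}\frac{\mu_s}{2}\left(\|\bw_{s+1}-\bu\|^2-\|\bw_s-\bu\|^2\right)$ using the same difference-of-squares plus Cauchy--Schwarz step. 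Since both routes hinge on the identical key inequality and the same stepsize relation, this is the same approach, not a genuinely different one.
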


\begin{proof}
From the strongly convexity of the function $f_t$, we have that
\[
f_t(\bw_t)-f_t(\bu)\leq \langle \bv_t, \bw_t -\bu\rangle-\frac{\mu_t}{2}\| \bw_t - \bu\|~,
\]
where $\bv_t \in \partial f_t(\bw_t)$.
From the fact that $\eta_t = \frac{1}{1/\eta_1+\sum_{i=1}^{t-1} \mu_i}$, we have
\[
\frac{1}{2\eta_{t}}+\frac{\mu_t}{2} = \frac{1}{2\eta_{t+1}}, \qquad t=1,\dots,T~.
\]
Also, observe that
\begin{align*}
\|\bw_t - \bu\|^2
&= \|\bw_{t+1} - \bu + \bw_t - \bw_{t+1}\|^2 \\
&= \|\bw_{t+1} - \bu\|^2 + 2 \langle \bw_{t+1}-\bu, \bw_t -\bw_{t+1}\rangle + \|\bw_{t} -\bw_{t+1}\|^2 \\
&= \|\bw_{t+1} - \bu\|^2 + \langle 2\bw_{t+1} - 2\bu + \bw_t-\bw_{t+1}, \bw_t -\bw_{t+1}\rangle \\
&\geq \|\bw_{t+1} - \bu\|^2 - \|\bw_{t+1} + \bw_t - 2\bu\| \|\bw_t -\bw_{t+1}\|~.
\end{align*}
Thus, we have
\begin{align*}
\sum_{t=1}^T (f_t(\bw_t) - f_t(\bu))
&\leq \sum_{t=1}^T \left(\frac{1}{2\eta_t}\|\bw_t - \bu\|^2 - \frac{1}{2\eta_t}\| \bw_{t+1} - \ \bu\|^2 -\frac{\mu_t}{2}\| \bw_t-\bu\|^2 +\frac{\eta_t}{2}\left\| \bv_t\right\|^2 \right)\\
&\leq \sum_{t=1}^T \left(\frac{1}{2\eta_t}\|\bw_t - \bu\|^2 - \frac{1}{2\eta_t}\| \bw_{t+1} - \bu\|^2 -\frac{\mu_t}{2}\| \bw_{t+1}-\bu\|^2 \right.\\
&\qquad \left. + \frac{\mu_t}{2}\|\bw_{t+1} - \bw_t\|\|\bw_{t+1}+\bw_t - 2\bu\|+\frac{\eta_t}{2}\left\| \bv_t\right\|^2 \right)\\
&= \sum_{t=1}^T \left(\frac{1}{2\eta_t}\|\bw_t - \bu\|^2 - \frac{1}{2\eta_{t+1}}\| \bw_{t+1} - \bu\|^2 \right) \\
&\qquad + \sum_{t=1}^T \frac{\mu_t}{2}\|\bw_{t+1} - \bw_t\|\|\bw_{t+1}+\bw_t - 2\bu\| + \sum_{t=1}^T\frac{\eta_t}{2}\left\|\bv_t\right\|^2~.
\end{align*}
Summing and telescoping, we obtain
\[
\sum_{t=1}^T \left( f_t(\bw_t) - f_t(\bu) \right) \leq \frac{\| \bw_1 - \bu\|^2}{2\eta_1} + \sum_{t=1}^T \frac{\mu_t}{2}\|\bw_{t+1} - \bw_t\|\|\bw_{t+1}+\bw_t - 2\bu\| + \sum_{t=1}^T\frac{\eta_t}{2}\left\| \bv_t\right\|^2~. 
\]
\end{proof}

\begin{corollary}
\label{corol:OGD}
Consider an OCO problem with losses $f_t(\bbeta) = \langle \bg_t^+, \bbeta\rangle + \left(\|\bg_t\|^2-\|\bg_t-\bg_t^+\|^2\right)\|\bbeta\|^2$, with $\bbeta\in B$, $B=\{ \bx | \|\bx\|\leq 1/2\}$.

Then, from Lemma~\ref{lemma:OGD}, we have
\[
\sum_{t=1}^T (f_t(\bbeta_t)- f_t(\bbeta^\star))
\leq 3/2 + 17/4\sum_{t=1}^T \frac{2\|\bg_t\| \|\bg_t^+\|}{1+2\sum_{i=1}^t \|\bg_i\| \|\bg_i^+\|}~.
\]
\end{corollary}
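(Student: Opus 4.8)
The plan is to recognize that the corollary is a direct instantiation of Lemma~\ref{lemma:OGD}. First I would verify that the sequence $\bbeta_t$ produced by Algorithm~\ref{alg:imp_coin} is exactly the Online Gradient Descent trajectory on the losses $f_t$: a subgradient is $\bv_t = \bg_t^+ + \mu_t \bbeta_t \in \partial f_t(\bbeta_t)$ with $\mu_t = 2(\|\bg_t\|^2 - \|\bg_t - \bg_t^+\|^2)$, the update in lines~6--7 is $\bbeta_{t+1} = \Pi_B(\bbeta_t - \eta_t \bv_t)$ with $\eta_t = 1/(1/\eta_1 + \sum_{i=1}^{t-1}\mu_i)$ and $B = \{\|\bx\| \le 1/2\}$, and $f_t$ is $\mu_t$-strongly convex. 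Using $\bg_t^+ = h_t \bg_t$ one checks $\|\bg_t\|^2 - \|\bg_t - \bg_t^+\|^2 = 2\|\bg_t\|\|\bg_t^+\| - \|\bg_t^+\|^2 \ge 0$, so $\mu_t \ge 0$ and Lemma~\ref{lemma:OGD} applies with $\bbeta^\star$ as comparator. This produces three terms to control: an initialization term $\|\bbeta_1 - \bbeta^\star\|^2/(2\eta_1)$, a gradient term $\sum_t \eta_t\|\bv_t\|^2/2$, and a ``path'' term $\sum_t \tfrac{\mu_t}{2}\|\bbeta_{t+1}+\bbeta_t-2\bbeta^\star\|\,\|\bbeta_{t+1}-\bbeta_t\|$.

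Next I would bound each term and normalize everything against $S_t \triangleq 1 + 2\sum_{i=1}^t \|\bg_i\|\|\bg_i^+\|$. Since $\bbeta_1 = \boldsymbol{0}$, $\|\bbeta^\star\|\le 1/2$ and $1/\eta_1 = 3$, the initialization term is at most $3/8$. For the stepsize, $\mu_i \ge 2\|\bg_i\|\|\bg_i^+\|$ (from $\|\bg_i^+\| \le \|\bg_i\|$) together with $\|\bg_t\|\|\bg_t^+\| \le 1$ gives $1/\eta_t = 3 + \sum_{i<t}\mu_i \ge S_t$, i.e. $\eta_t \le 1/S_t$. For the gradient term, $\|\bbeta_t\|\le 1/2$ and $\mu_t \le 4\|\bg_t\|\|\bg_t^+\|$ yield $\|\bv_t\| \le \|\bg_t^+\| + \mu_t/2 \le 3\|\bg_t^+\|$, hence $\|\bv_t\|^2 \le 9\|\bg_t^+\|^2 \le 9\|\bg_t\|\|\bg_t^+\|$, giving $\sum_t \eta_t\|\bv_t\|^2/2 \le \tfrac{9}{2}\sum_t \|\bg_t\|\|\bg_t^+\|/S_t$. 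For the path term I would use the diameter bound $\|\bbeta_{t+1}+\bbeta_t-2\bbeta^\star\| \le 2$ (all three points lie in $B$), the movement bound from Lemma~\ref{lemma:log}, $\|\bbeta_{t+1}-\bbeta_t\| \le 3\|\bg_t^+\|/S_t$, and the inequality $\mu_t\|\bg_t^+\| \le 2\|\bg_t\|\|\bg_t^+\|$, which one checks from $\mu_t\|\bg_t^+\|/(\|\bg_t\|\|\bg_t^+\|) = 2\|\bg_t\| h_t(2-h_t) \le 2$. This casts the path term into the same $\|\bg_t\|\|\bg_t^+\|/S_t$ form.

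Adding the three contributions and collecting the numerical constants yields the claimed $3/2 + \tfrac{17}{4}\sum_t 2\|\bg_t\|\|\bg_t^+\|/S_t$; landing on the precise constants $3/2$ and $17/4$ only requires tracking coefficients tightly (e.g. retaining $\|\bg_t^+\|^2$ in $\mu_t/2$ rather than relaxing it to $\|\bg_t\|\|\bg_t^+\|$ wherever it is advantageous). The main obstacle is the path term: it is the non-standard piece, absent from textbook OGD regret bounds, arising here precisely from the strong-convexity/projection mismatch captured by Lemma~\ref{lemma:OGD}. Its control hinges entirely on the $O(\|\bg_t^+\|/S_t)$ one-step movement estimate of Lemma~\ref{lemma:log}, which is the genuinely delicate ingredient; once that bound is available, combining the three $\|\bg_t\|\|\bg_t^+\|/S_t$ terms is routine algebra.
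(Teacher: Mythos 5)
Your proposal follows the paper's own proof essentially step for step: instantiate Lemma~\ref{lemma:OGD} with the losses $f_t$, bound the initialization term using $1/\eta_1=3$ and $\bbeta_1=\boldsymbol{0}$, bound the path term via the diameter bound $\|\bbeta_{t+1}+\bbeta_t-2\bbeta^\star\|\le 2$, the movement bound of Lemma~\ref{lemma:log}, and $\mu_t\le 2\|\bg_t\|$ (giving coefficient $3$ on $\sum_t \tfrac{2\|\bg_t\|\|\bg_t^+\|}{S_t}$), and bound the gradient term after verifying $\eta_t\le 1/S_t$ from $\mu_i\ge 2\|\bg_i\|\|\bg_i^+\|$ and $\|\bg_t\|\|\bg_t^+\|\le 1$. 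All of these steps are correct and identical to the paper's.

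The one place your numbers do not deliver the statement is the gradient term. Your bound $\|\bv_t\|\le 3\|\bg_t^+\|$, hence $\|\bv_t\|^2\le 9\|\bg_t\|\|\bg_t^+\|$, gives $\sum_t \tfrac{\eta_t}{2}\|\bv_t\|^2 \le \tfrac{9}{4}\sum_t \tfrac{2\|\bg_t\|\|\bg_t^+\|}{S_t}$, so your total coefficient is $3+\tfrac{9}{4}=\tfrac{21}{4}>\tfrac{17}{4}$. This is not a harmless slack: the coefficient multiplies a sum that grows like $\ln\left(1+2\sum_t \|\bg_t\|\|\bg_t^+\|\right)$ (cf.\ Lemma~\ref{lemma:int}) and is unbounded, so $\tfrac38+\tfrac{21}{4}\Sigma$ does not imply the stated $\tfrac32+\tfrac{17}{4}\Sigma$ once $\Sigma>\tfrac98$; since the corollary asserts explicit constants, this is a real gap as written. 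That said, you anticipated it, and your suggested remedy does work: keeping $\mu_t/2 = \|\bg_t\|\|\bg_t^+\|(2-h_t)$ intact gives $\|\bv_t\|\le \|\bg_t^+\|\left(1+\|\bg_t\|(2-h_t)\right)$, hence $\|\bv_t\|^2\le h_t\left(1+\|\bg_t\|(2-h_t)\right)^2\|\bg_t\|\|\bg_t^+\|\le 4\|\bg_t\|\|\bg_t^+\|$, for a total coefficient $3+1=4\le\tfrac{17}{4}$. The paper instead expands $\|\bv_t\|^2$ term by term and bounds it by $5\|\bg_t\|\|\bg_t^+\|$, which produces exactly $3+\tfrac54=\tfrac{17}{4}$ (and it absorbs the initialization term loosely as $\tfrac32$, where your tighter $\tfrac38$ is also fine). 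So: same route as the paper, with a quantitative gap in one step that you correctly diagnose and whose fix you correctly identify, but which the proposal does not actually carry out.
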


\begin{proof}
From Lemma~\ref{lemma:OGD}, 
\begin{align*}
\sum_{t=1}^T (f_t(\bbeta_t) - f_t(\bbeta^\star))
\leq \frac{\| \bbeta_1 - \bbeta^\star\|^2}{2\eta_1} + \sum_{t=1}^T \frac{\mu_t}{2}\|\bbeta_{t+1} - \bbeta_t\|\|\bbeta_{t+1}+\bbeta_t - 2\bbeta^\star\| + \sum_{t=1}^T\frac{\eta_t}{2}\left\| \bv_t\right\|^2~.
\end{align*}
Observe that $\bbeta_t, \bbeta_{t+1},\bbeta^\star \in B$, so $\|\bbeta_{t+1}+\bbeta_t - 2\bbeta^\star\|\leq 2$. Moreover, we have $\mu_t \leq 2\|\bg_t\|$.
So, summing the second term and using Lemma~\ref{lemma:log}, we obtain
\[
\sum_{t=1}^T \frac{\mu_t}{2}\|\bbeta_{t+1} - \bbeta_t\|\|\bbeta_{t+1}+\bbeta_t - 2\bbeta^\star\|
\leq \sum_{t=1}^T \mu_t\|\bbeta_{t+1} - \bbeta_t\|
\leq 3\sum_{t=1}^T \frac{2\|\bg_t\| \|\bg_t^+\|}{1+2\sum_{i=1}^t \|\bg_i\|\|\bg_i^+\|}~.
\]
Noting that $\bv_t \in \partial f_t(\bbeta_t) = \bg_t^+ + 2\bbeta_t(\|\bg_t\|^2 - \|\bg_t^+ - \bg_t\|^2)$, we have
\[
\left\|\bv_t \right\|^2 \leq \|\bg_t^+\|^2 + 4\|\bbeta_t\|\|\bg_t^+\|\|\bg_t\| + 4 \|\bbeta_t\|^2\|\bg_t\|(2\|\bg_t^+\|) 
\leq 5\|\bg_t^+\|\|\bg_t\| ~.
\]
Hence, summing the third term, we have
\[
\sum_{t=1}^T\frac{\eta_t}{2}\left\| \bv_t\right\|^2 
\leq \frac{5}{4}\sum_{t=1}^T \frac{2\|\bg_t\| \|\bg_t^+\|}{1+2\sum_{i=1}^t \|\bg_i\| \|\bg_i^+\|}~.
\]
Now using the fact that $\frac{1}{\eta_1} = 3$, we obtain
\[
\sum_{t=1}^T f_t(\bbeta_t) - f_t(\bbeta^\star)
\leq 3/2+\frac{17}{4}\sum_{t=1}^T \frac{2\|\bg_t\| \|\bg_t^+\|}{1+2\sum_{i=1}^t \|\bg_i\| \|\bg_i^+\|}~. 
\]
\end{proof}

\begin{lemma}
\label{lemma:proj}
Consider an OCO problem with losses $f_t(\bbeta) = \langle \bg_t^+, \bbeta\rangle + \frac{\mu_t}{2}\|\bbeta\|^2$ and \\
$\mu_t=2\left(\|\bg_t\|^2-\|\bg_t-\bg_t^+\|^2\right)$, with $\bbeta\in B=\{ \bx| \|\bx\|\leq r  \}$. Define $\bbeta_T^*:= \arg\min_{\bbeta\in B}\sum_{t=1}^T f_t(\bbeta)$. Then we have 
\[
\sum_{t=1}^T  f_t(\bbeta_T^\star) \leq \max \left\{ \frac{ -r\left\|\sum_{t=1}^{T} \bg_t^+\right\|}{2}, \frac{-\left\|\sum_{t=1}^T \bg_t^+\right\|^2}{2\sum_{t=1}^T \mu_t } \right\}~. 
\]
\end{lemma}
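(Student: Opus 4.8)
The plan is to observe that $\sum_{t=1}^T f_t$ is itself a single quadratic in $\bbeta$, so that computing $\bbeta_T^\star$ amounts to minimizing a quadratic over a Euclidean ball, a problem with an explicit solution. Introduce the abbreviations $\bz \triangleq \sum_{t=1}^T \bg_t^+$ and $M \triangleq \sum_{t=1}^T \mu_t$. Summing the losses gives
\[
F(\bbeta) \triangleq \sum_{t=1}^T f_t(\bbeta) = \langle \bz, \bbeta\rangle + \frac{M}{2}\|\bbeta\|^2 ,
\]
so $\bbeta_T^\star = \argmin_{\bbeta \in B} F(\bbeta)$ and the whole statement reduces to bounding this minimum value. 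I would first record that $M \geq 0$: since $\bg_t^+ = h_t \bg_t$ with $h_t \in [0,1]$, we have $\|\bg_t - \bg_t^+\|^2 = (1-h_t)^2\|\bg_t\|^2$, whence $\mu_t = 2\|\bg_t\|^2(2h_t - h_t^2) \geq 0$. The degenerate case $M = 0$ forces every $\bg_t^+ = \bzero$, so $F \equiv 0$ and the claimed bound holds with both sides equal to $0$; I therefore assume $M > 0$, which makes $F$ strictly convex.

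Next I would solve the constrained quadratic. Restricting to the sphere of radius $\rho$, the minimum of $F$ over directions is attained at $\bbeta = -\rho\,\bz/\|\bz\|$ with value $-\rho\|\bz\| + \tfrac{M}{2}\rho^2$; minimizing this scalar expression over $\rho \in [0,r]$ splits into two cases according to whether the unconstrained minimizer $-\bz/M$ lies inside $B$. In Case (i), $\|\bz\| \leq Mr$, the minimizer lies in $B$ and $F(\bbeta_T^\star) = -\|\bz\|^2/(2M)$, which is exactly the second term of the stated maximum. In Case (ii), $\|\bz\| > Mr$, the optimum sits on the boundary at $\bbeta_T^\star = -r\,\bz/\|\bz\|$, giving $F(\bbeta_T^\star) = -r\|\bz\| + \tfrac{M}{2}r^2$.

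It then remains to verify, in Case (ii), the elementary inequality
\[
-r\|\bz\| + \frac{M}{2}r^2 \;\leq\; -\frac{r\|\bz\|}{2},
\]
which rearranges to $Mr \leq \|\bz\|$ — precisely the defining hypothesis of this case. Hence $F(\bbeta_T^\star) \leq -r\|\bz\|/2$, matching the first term of the maximum. In both cases $F(\bbeta_T^\star)$ is dominated by one of the two terms, which is the claim.

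The argument is elementary, and I do not anticipate a genuine obstacle beyond two points of care: first, confirming $\mu_t \geq 0$, so the minimization is convex and the split by $\|\bz\|$ versus $Mr$ is meaningful; and second, the boundary case, where one must correctly identify that the constrained optimum is radial (along $-\bz$) and then check that the surplus $\tfrac{M}{2}r^2$ is absorbed by halving $-r\|\bz\|$, which works out exactly because $\|\bz\| > Mr$. Everything else is bookkeeping.
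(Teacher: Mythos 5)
Your proposal is correct and follows essentially the same route as the paper's proof: both reduce the problem to minimizing the aggregated quadratic $\langle \bz,\bbeta\rangle + \tfrac{M}{2}\|\bbeta\|^2$ over the ball, split on whether the unconstrained minimizer $-\bz/M$ lies in $B$, and in the boundary case use $\|\bz\| \geq Mr$ to absorb the term $\tfrac{M}{2}r^2$ into $-\tfrac{r}{2}\|\bz\|$. If anything, your derivation that the constrained optimum is radial (via the spherical restriction) is slightly more careful than the paper's, which simply takes the optimum to be the projection $\Pi_B$ of the unconstrained minimizer.
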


\begin{proof}
We consider 2 cases: By definition $\bbeta^\star = \bbeta_{T+1} = \prod_B (\hat{\bbeta}_{T+1})$, $f_t$ is $\mu_t$ strongly convex, where $\mu_t = 2(\|\bg_t\|^2 - \|\bg_t^+ -\bg_t\|^2)$.
\begin{itemize}
\item If $\hat{\bbeta}_{T+1}\in B$, that is $\bbeta_{T+1} = \hat{\bbeta}_{T+1}=\frac{-\sum_{t=1}^T \bg_t^+}{\sum_{t=1}^T \mu_t}$. In this case,
\[
\sum_{t=1}^T f_t(\bbeta^*) =\left(\sum_{t=1}^T \bg_t^+ \right)\frac{-\sum_{t=1}^T \bg_t^+ }{\sum_{t=1}^T \mu_t}+ \frac{\left\| -\sum_{t=1}^T \bg_t^+ \right\|^2}{\left(\sum_{t=1}^T \mu_t\right)^2} \frac{\sum_{t=1}^T \mu_t}{2} = -\frac{1}{2}\frac{\left\| \sum_{t=1}^T \bg_t^+ \right\|^2}{\sum_{t=1}^T \mu_t}~.
\]

\item If $\hat{\bbeta}_{T+1} \not\in B$, $\bbeta_{T+1} = \prod_B (\hat{\bbeta}_{T+1}) = r\frac{-\sum_{t=1}^T \bg_t^+}{\|-\sum_{t=1}^T \bg_t^+\|} $. In this case, 
\[
\|\hat{\bbeta}_{T+1}\| = \left\|\frac{-\sum_{t=1}^T \bg_{t}^+}{ \sum_{t=1}^T \mu_t}\right\| \geq r~. 
\]
That is, 
\[
\left\|\sum_{t=1}^T \bg_{t}^+ \right\| \geq r\sum_{t=1}^T \mu_t~.
\]
Therefore, we have 
\begin{align*}
\sum_{t=1}^T  f_t(\bbeta^*)&= -r\left\|\sum_{t=1}^T \bg_t^+\right\| + r^2 \sum_{t=1}^T \frac{\mu_t}{2} \left(\frac{\|\sum_{t=1}^T \bg_t^+\| }{\|\sum_{t=1}^T \bg_t^+\|}\right)^2 \leq -r\left\|\sum_{t=1}^T \bg_t^+\right\| +  \frac{r^2}{2} \frac{\left\|\sum_{t=1}^T \bg_t^+ \right\|}{r} \\
&=-\frac{r}{2} \left\|\sum_{t=1}^T \bg_t^+ \right\| ~.
\end{align*}
\end{itemize}

In conclusion, 
\[
\sum_{t=1}^T  f_t(\bbeta^\star) \leq \max \left\{ \frac{ -r\left\|\sum_{t=1}^{T} \bg_t^+\right\|}{2}, \frac{-\left\|\sum_{t=1}^T \bg_t^+\right\|^2}{2\sum_{t=1}^T \mu_t } \right\}~. 
\]
\end{proof}

\begin{lemma}
\label{lemma:one_step}
Let $\|\bg_t\|\leq 1$, $\bg_t^+ = h_t\bg_t$ where $h_t \in [0,1]$, for $t = 1,\dots,T$. Then, we have
\begin{align*}
\ln(1-\langle \bg_t,\bbeta_t \rangle)-\ln(1+\langle \bg_t^+ - \bg_t,\bbeta_{t+1}\rangle) 
&\geq -\langle \bg_t^+,\bbeta_t\rangle - (\|\bg_t\|^2-\| \bg_t^+ - \bg_t\|^2)\|\bbeta_t\|^2 \\
&\quad - 2\|\bg_t\| \|\bbeta_{t+1}-\bbeta_{t}\|~.
\end{align*}
\end{lemma}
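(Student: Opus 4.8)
The plan is to prove the inequality in three stages, first isolating the dependence on $\bbeta_{t+1}-\bbeta_t$ and then collapsing everything to a one-dimensional inequality in the scalar $a:=\langle\bg_t,\bbeta_t\rangle$. Throughout I will use that the iterates of Algorithm~\ref{alg:imp_coin} satisfy $\|\bbeta_t\|\le\tfrac12$ (they are projected onto $B$), so that by Cauchy--Schwarz together with $\|\bg_t\|\le 1$ every inner product appearing inside a logarithm lies in $[-\tfrac12,\tfrac12]$ and all logarithms are well defined.

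First I would peel off the displacement $\bbeta_{t+1}-\bbeta_t$ using that $x\mapsto-\ln(1+x)$ is convex. Its tangent-line bound gives $-\ln(1+b)\ge -\ln(1+b_0)-\tfrac{1}{1+b_0}(b-b_0)$, where $b=\langle\bg_t^+-\bg_t,\bbeta_{t+1}\rangle$ and $b_0=\langle\bg_t^+-\bg_t,\bbeta_t\rangle$. Since $b_0\ge-\tfrac12$ we have $\tfrac{1}{1+b_0}\le 2$, while $|b-b_0|\le\|\bg_t^+-\bg_t\|\,\|\bbeta_{t+1}-\bbeta_t\|\le\|\bg_t\|\,\|\bbeta_{t+1}-\bbeta_t\|$ because $\|\bg_t^+-\bg_t\|=(1-h_t)\|\bg_t\|$. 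This produces exactly the term $-2\|\bg_t\|\,\|\bbeta_{t+1}-\bbeta_t\|$ and reduces the claim to the ``same-point'' inequality
\begin{equation*}
\ln(1-\langle\bg_t,\bbeta_t\rangle)-\ln(1+\langle\bg_t^+-\bg_t,\bbeta_t\rangle)\ \ge\ -\langle\bg_t^+,\bbeta_t\rangle-(\|\bg_t\|^2-\|\bg_t^+-\bg_t\|^2)\|\bbeta_t\|^2 .
\end{equation*}
Using $\bg_t^+=h_t\bg_t$, set $\lambda:=1-h_t\in[0,1]$; the left side becomes $\ln(1-a)-\ln(1-\lambda a)$, and since $\|\bg_t\|^2-\|\bg_t^+-\bg_t\|^2=\|\bg_t\|^2\,h_t(2-h_t)\ge0$, the bound $a^2\le\|\bg_t\|^2\|\bbeta_t\|^2$ shows it suffices to prove the scalar inequality
\begin{equation*}
\ln(1-a)-\ln(1-\lambda a)\ \ge\ -(1-\lambda)a-(1-\lambda^2)a^2,\qquad |a|\le\tfrac12,\ \lambda\in[0,1].
\end{equation*}

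I expect this scalar inequality to be the main obstacle, because it is tight: equality holds at $\lambda=1,\ a=\tfrac12$ (i.e.\ $h_t=0$). Consequently it cannot be obtained by bounding the two logarithms separately (that route loses a factor, replacing $1-\lambda^2$ by $1+\lambda^2$), nor by global convexity of $G(a):=\ln(1-a)-\ln(1-\lambda a)+(1-\lambda)a+(1-\lambda^2)a^2$, whose second derivative changes sign on $[-\tfrac12,\tfrac12]$.

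To handle it I would analyze $G$ directly. One checks $G(0)=G'(0)=0$ and, after clearing denominators, that $G'(a)=\tfrac{a(1-\lambda)\,N(a)}{(1-a)(1-\lambda a)}$ with the explicit quadratic $N(a)=(1+\lambda)+\big(\lambda-2(1+\lambda)^2\big)a+2\lambda(1+\lambda)a^2$. The denominator is positive on the interval, so $\operatorname{sign}G'=\operatorname{sign}\big(a\,N(a)\big)$. Since the vertex of $N$ lies at $a^\star\ge\tfrac12$, the map $N$ is nonincreasing on $[-\tfrac12,\tfrac12]$; as $N(0)=1+\lambda>0$ and $N(\tfrac12)=-\lambda^2/2\le0$, we get $N>0$ on $[-\tfrac12,0]$ (so $G$ is nonincreasing there and $G\ge G(0)=0$) and $N$ is positive then negative on $[0,\tfrac12]$ (so $G$ increases then decreases, attaining its minimum at an endpoint). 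It remains to verify $G(\tfrac12)\ge0$, which I would do by setting $p(\lambda):=G(\tfrac12)$, computing $p'(\lambda)=\tfrac{1}{2-\lambda}-\tfrac{1+\lambda}{2}$, observing its only zeros are $\lambda\in\{0,1\}$ and $p'<0$ on $(0,1)$, and using $p(1)=0$ to conclude $p(\lambda)\ge p(1)=0$. Combining the three stages delivers the stated bound.
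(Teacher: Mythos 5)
Your proof is correct, and I verified the key computations: the factorization $G'(a)=\frac{(1-\lambda)\,a\,N(a)}{(1-a)(1-\lambda a)}$ with $N(a)=(1+\lambda)+\bigl(\lambda-2(1+\lambda)^2\bigr)a+2\lambda(1+\lambda)a^2$, the values $N(0)=1+\lambda$ and $N(\nicefrac12)=-\lambda^2/2$, the vertex location $\frac{2(1+\lambda)^2-\lambda}{4\lambda(1+\lambda)}\geq\nicefrac12$, and the endpoint analysis $p'(\lambda)=\frac{1}{2-\lambda}-\frac{1+\lambda}{2}<0$ on $(0,1)$ with $p(1)=0$; the degenerate cases $\lambda\in\{0,1\}$ are also handled (for $\lambda=1$ the prefactor $1-\lambda$ makes $G\equiv 0$, and for $\lambda=0$ the quadratic $N$ degenerates to the decreasing line $1-2a$). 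Your overall skeleton matches the paper's: both first peel off the displacement at a cost of $2\|\bg_t\|\,\|\bbeta_{t+1}-\bbeta_t\|$ (the paper via the splitting $\ln(1+x+y)=\ln(1+x)+\ln\bigl(1+\frac{y}{1+x}\bigr)$, the bound $\ln(1+z)\le z$, and $1+x\ge\nicefrac12$; you via the tangent-line bound on the convex map $x\mapsto-\ln(1+x)$ --- these are equivalent and give the same constant), and both then reduce to the same scalar inequality in $\langle\bg_t,\bbeta_t\rangle$ by Cauchy--Schwarz. Where you genuinely diverge is in proving that scalar inequality. The paper splits into the cases $0\le x\le\nicefrac12$ and $-\nicefrac12\le x<0$ and uses two unrelated techniques: a ratio-monotonicity comparison $g(x)/\phi(x)\ge g(\nicefrac12)/\phi(\nicefrac12)$ for nonnegative $x$ (where, as written, $g(x)=f(x)-h_tx$ appears to carry a sign typo for $f(x)+h_tx$, and the monotonicity-of-numerator/denominator justification of the ratio comparison is itself delicate since the numerator is negative), and a second-order Taylor expansion with an explicit bound on the Lagrange remainder for negative $x$. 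You instead give a single unified variational argument: sign analysis of the factored derivative shows $G$ can only attain its minimum over $[-\nicefrac12,\nicefrac12]$ at $a=0$ or $a=\nicefrac12$, and the endpoint value is handled by a one-variable monotonicity argument in $\lambda$. Your route buys uniformity over both signs of $a$, a fully rigorous treatment that sidesteps the paper's fragile ratio step, and an explicit identification of the tightness point ($h_t=0$, $a=\nicefrac12$), at the price of somewhat heavier algebra in computing and factoring $G'$; the paper's route, when its case-1 argument is repaired, is shorter on the negative half-interval but needs the two ad hoc cases.
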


\begin{proof}
\begin{align*}
\ln(&1-\langle \bg_t,\bbeta_{t}\rangle) - \ln(1+\langle \bg_t^+-\bg_t,\bbeta_{t+1}\rangle)\\
&\geq \ln(1-\langle \bg_t,\bbeta_{t}\rangle) - \ln(1+\langle \bg_t^+-\bg_t,\bbeta_{t}\rangle + \|\bg_t^+-\bg_t\| \|\bbeta_{t+1}-\bbeta_t\|) \\
& = \ln(1-\langle \bg_t,\bbeta_{t}\rangle) - \ln(1+\langle \bg_t^+-\bg_t,\bbeta_{t}\rangle) - \ln\left( 1+ \frac{\|\bg_t^+-\bg_t\| \|\bbeta_{t+1}-\bbeta_t\|}{1+ \langle \bg_t^+-\bg_t, \bbeta_t\rangle}\right) \\
& \geq \ln(1-\langle \bg_t,\bbeta_{t}\rangle) - \ln(1+\langle \bg_t^+-\bg_t,\bbeta_{t}\rangle) - \frac{\|\bg_t^+-\bg_t\| \|\bbeta_{t+1}-\bbeta_t\|}{1+ \langle \bg_t^+-\bg_t, \bbeta_t\rangle} \\
& \geq \ln(1-\langle \bg_t,\bbeta_{t}\rangle) - \ln(1+\langle \bg_t^+-\bg_t,\bbeta_{t}\rangle) - \frac{\|\bg_t^+-\bg_t\| \|\bbeta_{t+1}-\bbeta_t\|}{1-1/2}\\
&\geq \ln(1-\langle \bg_t,\bbeta_{t}\rangle) - \ln(1+\langle \bg_t^+-\bg_t,\bbeta_{t}\rangle) - 2\|\bg_t\| \|\bbeta_{t+1}-\bbeta_t\|,
\end{align*}
where in the last inequality we used the fact that $\bg_t^+=h_t \bg_t$, $0\leq h_t\leq1$, $\|\bbeta_t\|\leq 1/2$.
The sum of these last terms is upper bounded by a logarithmic term.

Now, considering the first two terms, we have
\[
\ln(1-\langle \bg_t,\bbeta_{t}\rangle) - \ln(1+\langle \bg_t^+-\bg_t,\bbeta_{t}\rangle)
= f(\langle \bg_t,\bbeta_{t} \rangle),
\]
where $f(x)=\ln(1-x)- \ln(1+(h_t-1)x)$ for $|x|\leq 1/2$.

To have the tightest inequality, we consider two cases separately.

\textbf{Case $0\leq x\leq 1/2$.}
We consider $g(x)=f(x)-h_t x$. We have that the first derivative of $g$ is negative for $0\leq x\leq 1/2$.
Moreover, the function $\phi(x)=x^2 (2 h_t -h_t^2)$ is increasing for $0\leq x\leq 1/2$.
Hence, for $0\leq x\leq 1/2$, we have
\[
\frac{\ln(1-x)- \ln(1+(h_t-1)x)-h_t x}{x^2 (2 h_t -h_t^2)}
=\frac{g(x)}{\phi(x)}
\geq \frac{g(1/2)}{\phi(1/2)}
=\frac{-4\ln(1+h_t)-2h_t}{2h_t -h_t^2}
\geq -1,
\]
where in the last inequality we used the fact that $0\leq h_t\leq 1$.

\textbf{Case $-1/2\leq x<0$.}
Here, we lower bound $f$ using a Taylor expansion:
\[
f(x) =  f(0) + x f'(0) + \frac{x^2}{2} f''(y),
\]
where $y$ is between $0$ and $x$.
Denoting $a=h_t-1$, we have
\begin{align*}
f(x) 
= - x h_t - \frac{x^2}{2}  \frac{(a+1)(2ay-a+1)}{(y-1)^2 (a y+1)^2}~.
\end{align*}
Now, consider the quadratic term in the above expression. Dividing and multiplying by $1-a\geq0$, we have
\[
\frac{(1-a^2)(2ay-a+1)}{(1-a)(y-1)^2 (a y+1)^2}~.
\]
We are interested in keeping the term $1-a^2$ and upper bounding the rest. Given that we are considering the case $-1/2\leq x<0$, we have $-1/2\leq y<0$.
So, we have
\[
2ay-a+1 \leq 1-a-y+ay = (1-a)(1-y)~.
\]
and
\begin{align*}
\frac{2ay-a+1}{(1-a)(y-1)^2 (a y+1)^2}&\leq \frac{(1-a)(1-y)}{(1-a)(y-1)^2 (a y+1)^2}=\frac{1}{ (1-y)  (a y+1)^2} \leq 1~.
\end{align*}

Hence, putting all together, we have
\begin{align*}
\ln(1-\langle \bg_t,\bbeta_{t}\rangle) &- \ln(1+\langle \bg_t^+-\bg_t,\bbeta_{t}\rangle)\geq - h_t \langle \bg_t, \bbeta_t\rangle - (1-a^2) (\langle \bg_t, \bbeta_t\rangle)^2 \\
&\geq - h_t \langle \bg_t, \bbeta_t\rangle - (1-a^2) \|\bg_t\|^2 \|\bbeta_t\|^2 \\
&= - \langle \bg^+_t, \bbeta_t\rangle - (\|\bg_t\|^2 - \|\bg_t\|^2(h_t-1)^2)\|\bbeta_t\|^2 \\
&= - \langle \bg^+_t, \bbeta_t\rangle - (\|\bg_t\|^2 - \|\bg_t^+ - \bg_t\|^2 )\|\bbeta_t\|^2~. 
\end{align*}
\end{proof}

Now, we present the full proof of Theorem~\ref{thm:main}. Note that a part of the proof of Theorem~\ref{thm:main_closed_form} is similar to the proof of Theorem~\ref{thm:main}. 
\begin{proof}[Proof of Theorem~\ref{thm:main}]
From the \eqref{eq:wealth}, we get
\[
\Wealth_t = \Wealth_{t-1} - \langle \bg_t, \bw_t-\bw_{t+1}\rangle - \langle \bg_t^+, \bw_{t+1}\rangle~. 
\]
Using the fact that the algorithm predicts with $\bw_t=\bbeta_t\Wealth_{t-1}$, we obtain
\[
\Wealth_t = \Wealth_{t-1}\frac{1-\langle \bg_t,\bbeta_{t}\rangle}{1+\langle \bg_t^+ - \bg_t,\bbeta_{t+1}\rangle}~.
\]
This implies that $\ln \Wealth_T =\ln \epsilon + \sum_{t=1}^T \ln(1-\langle \bg_t, \bbeta_{t}\rangle) - \ln(1+\langle \bg_t^+-\bg_t,\bbeta_{t+1}\rangle)$.

Using Lemma~\ref{lemma:one_step} and Lemma~\ref{lemma:log}, we have
\begin{align*}
\sum_{t=1}^T &\left(\ln(1-\langle \bg_t,\bbeta_{t}\rangle) - \ln(1+\langle \bg_t^+-\bg_t,\bbeta_{t+1}\rangle) \right)\\
&\geq \sum_{t=1}^T \left[-\langle \bg^+_t, \bbeta_t\rangle -  (\|\bg_t\|^2 - \|\bg_t^+ - \bg_t\|^2 )\|\bbeta_t\|^2 -2\|\bg_t\| \|\bbeta_{t+1}-\bbeta_t\|\right]\\
&\geq \sum_{t=1}^T \Bigg[- \langle \bg^+_t, \bbeta_t\rangle - 
 (\|\bg_t\|^2 - \|\bg_t^+ - \bg_t\|^2 )\|\bbeta_t\|^2 -\frac{6\|\bg_t\|\| \bg_t^+\|}{1+2\sum_{i=1}^t \|\bg_i\|\| \bg_i^+\|}\Bigg]\\
&\geq - 3 \ln\left(1+2\sum_{t=1}^T \|\bg_t\|\| \bg_t^+\|\right) -\sum_{t=1}^T f_t(\bbeta_t)~.
\end{align*}
Where $f_t$ is as defined in Corollary~\ref{corol:OGD}. Applying Corollary, and defining $\mu_t$ as in Lemma~\ref{lemma:proj}, we have 
\begin{align*}
&\geq -3/2 - 7.25\ln\left(1+2\sum_{t=1}^T \|\bg_t\|\| \bg_t^+\|\right) - \sum_{t=1}^T f_t(\bbeta^\star)\\
&\geq -3/2-7.25\ln \left(1+2\sum_{t=1}^T \|\bg_t\|\| \bg_t^+\|  \right)-\max \left\{\frac{-\left\|\sum_{t=1}^{T} \bg_t^+\right\|}{4}, \frac{-\left\|\sum_{t=1}^T \bg_t^+\right\|^2}{2\sum_{t=1}^T \mu_t} \right\}\\
&= -3/2 -7.25\ln \left(1+2\sum_{t=1}^T \|\bg_t\|\| \bg_t^+\| \right) + \min\left\{\frac{\left\|\sum_{t=1}^{T} \bg_t^+\right\|}{4}, \frac{\left\|\sum_{t=1}^T \bg_t^+\right\|^2}{2\sum_{t=1}^T \mu_t} \right\}~.
\end{align*} 
The last inequality is from Lemma~\ref{lemma:proj}.
Next, we perform a case analysis to derive the upper bound for $\Regret_T$.

\textbf{If $\min \left\{\frac{\left\|\sum_{t=1}^{T} \bg_t^+\right\|}{4}, \frac{\left\|\sum_{t=1}^T \bg_t^+\right\|^2}{2\sum_{t=1}^T \mu_t} \right\} = \frac{\left\|\sum_{t=1}^{T} \bg_t^+\right\|}{4}$}, then
\[
\Wealth_T 
\geq \frac{ e^{-3/2}}{\left(1+2\sum_{t=1}^T  \|\bg_t\|\| \bg_t^+\|  \right)^{7.25}} \exp\frac{\left\|\sum_{t=1}^{T} \bg_t^+\right\|}{4}~.
\]
By \citet[Lemma 18]{CutkoskyO18}, we have
\begin{align*}
\Regret_T(\bu) \leq 4\|\bu\| \left(\ln\frac{4\|\bu\|\left(1+2\sum_{t=1}^T  \|\bg_t\|\| \bg_t^+\|  \right)^{7.25}}{e^{-3/2}}-1\right)~.
\end{align*}

\textbf{If $\min \left\{\frac{\left\|\sum_{t=1}^{T} \bg_t^+\right\|}{4}, \frac{\left\|\sum_{t=1}^T \bg_t^+\right\|^2}{2\sum_{t=1}^T \mu_t} \right\} = \frac{\left\|\sum_{t=1}^T \bg_t^+\right\|^2}{2\sum_{t=1}^T \mu_t}$ }, then
\[
\Wealth_T 
\geq \frac{e^{-3/2}}{\left(1+2\sum_{t=1}^T \|\bg_t\|\| \bg_t^+\|  \right)^{7.25}} \exp\frac{\left\|\sum_{t=1}^T \bg_t^+\right\|^2}{2\sum_{t=1}^T \mu_t}~.
\]
By \citet[Lemma 1]{OrabonaT17}, we have
\begin{align*}
&\Regret_T(\bu) \leq - \frac{e^{-3/2}}{\left(1+2\sum_{t=1}^T  \|\bg_t\|\| \bg_t^+\|  \right)^{7.25}}\\
 &+\|\bu\| \sqrt{2\sum_{t=1}^T (2 \|\bg_t\|\| \bg_t^+\|  - \|\bg_t^+\|^2)\ln\left(1+ \frac{\left(\sum_{t=1}^T \mu_t \right) \|\bu\|^2 \left(1+2\sum_{t=1}^T  \|\bg_t\|\| \bg_t^+\|  \right)^{14.5}}{e^{-3}} \right)}~.
\end{align*}

Finally, combining the two cases result gives the regret bound
\begin{align*}
\Regret_T(\bu) &= \mathcal{O}\left(  \max\left\{ \|\bu\| \ln\left(\|\bu\|\left(1+\sum_{t=1}^T \|\bg_t\|\|\bg_t^+\| \right)\right)\right.\right. ,\\
& \left.\left.\|\bu\|\sqrt{\sum_{t=1}^T (2\|\bg_t\|\|\bg_t^+\|-\|\bg_t^+\|^2) \cdot \ln\left(1+\|\bu\|\sum_{t=1}^T (2\|\bg_t\|\|\bg_t^+\|-\|\bg_t^+\|^2)\right)} \right\}  \right). 
\end{align*}
\end{proof}

\section{Proof of Theorem~\ref{thm:main_closed_form}}

In this section, we present the proofs of Theorem~\ref{thm:main_closed_form} and the Lemmas required for its proof.

Lemma~\ref{lemma:f_h_rel} shows that the regret of $\phi_t(\bbeta_t)$ upper bounds the regret of $f_t(\bbeta_t)$, and Lemma~\ref{lemma:no_proj_OGD} gives the upper bound of the regret of running $\phi_t(\bbeta_t)$. Thus we are able to obtain similar results as in Corollary~\ref{corol:OGD}. Lemma~\ref{lemma:no_proj} proves that Algorithm~\ref{alg:imp_coin_closed_form} guarantees $\|\bbeta_t\|\leq 1/2$ which removes the projection step, and leads to the closed form updates.
\begin{lemma}
\label{lemma:no_proj_OGD}
For $t = 1,\dots, T$, $\bbeta_t$ are outputs of running OGD with stepsizes: $\eta_t = \frac{1}{1/\eta_1+\sum_{i=1}^{t-1}\mu_i}$ on the strongly convex losses $\phi_t(\bbeta)$  defined as following:
\begin{itemize}
\item If $\| \bbeta_t\| < \frac{3}{8}$, $\phi_t(\bbeta)=f_t(\bbeta)$,
\item If $\frac{3}{8} \leq \|\bbeta_t\| \leq \frac{1}{2}$, $\phi_t(\bbeta) = C\|\bg_t^+\| \|\bbeta\|^2$,
\end{itemize}
where $\bbeta_t\in B$ and $B = \{\bx : \|\bx\|\leq \frac{1}{2}\}$. $\phi_t(\bbeta)$ is $\mu_t$ strongly convex. $\bbeta_1^\star \triangleq \arg\min_{\bbeta\in B_1}\sum_{t=1}^T f_t(\bbeta)$, where $B_1 = \{\bx : \|\bx\|\leq \frac{1}{4}\}$. Then, for $C \geq 3$ we have
\begin{align*}
\sum_{t=1}^T &(\phi_t(\bbeta_t)- \phi_t(\bbeta_1^\star))\leq - \frac{5C^2}{4}  \ln(1/\eta_1-2) + \frac{1}{2\eta_1}+ \frac{5 C^2}{4}\ln\left(1/\eta_1-2+2\sum_{t=1}^{T}\|\bg_t\| \|\bg_t^+\|\right)~.
\end{align*}
\end{lemma}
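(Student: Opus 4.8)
The plan is to recognize this statement as the closed-form analogue of Corollary~\ref{corol:OGD}: it is nothing more than the strongly-convex OGD regret bound of Lemma~\ref{lemma:OGD} applied to the surrogate sequence $\phi_t$ with the comparator $\bbeta_1^\star$. Since Lemma~\ref{lemma:no_proj} guarantees $\|\bbeta_t\|\le\tfrac12$, the iterates never leave $B$, so the unprojected updates in lines~7 and~10 coincide with projected OGD and Lemma~\ref{lemma:OGD} applies (note $\bbeta_1^\star\in B_1\subset B$ is a legal comparator, and rounds with $\bg_t=\boldsymbol{0}$ are inert, so we may discard them to ensure $\mu_t>0$). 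This yields the three-term decomposition
\[
\sum_{t=1}^T(\phi_t(\bbeta_t)-\phi_t(\bbeta_1^\star))\le \frac{\|\bbeta_1-\bbeta_1^\star\|^2}{2\eta_1}+\sum_{t=1}^T\frac{\eta_t\|\bv_t\|^2}{2}+\sum_{t=1}^T\frac{\mu_t}{2}\|\bbeta_{t+1}+\bbeta_t-2\bbeta_1^\star\|\,\|\bbeta_{t+1}-\bbeta_t\|,
\]
where $\bv_t\in\partial\phi_t(\bbeta_t)$. The first term is at most $\tfrac{1}{2\eta_1}$ since $\|\bbeta_1-\bbeta_1^\star\|\le1$.

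Next I would bound the last two sums through a case analysis over the two branches of $\phi_t$, writing $b_t\triangleq\|\bg_t\|\|\bg_t^+\|$ and using $\|\bbeta_t\|\le\tfrac12$, $\|\bg_t^+\|=h_t\|\bg_t\|\le\|\bg_t\|\le1$ throughout. In the branch $\|\bbeta_t\|<\tfrac38$ I reuse the estimate of Corollary~\ref{corol:OGD}, $\|\bv_t\|^2\le5b_t$, and $\|\bbeta_{t+1}-\bbeta_t\|=\eta_t\|\bv_t\|\le3\eta_t\|\bg_t^+\|$; in the branch $\|\bbeta_t\|\ge\tfrac38$ the update is the contraction $\bv_t=2C\|\bg_t^+\|\bbeta_t$, so $\|\bv_t\|^2\le C^2\|\bg_t^+\|^2\le C^2 b_t$ and $\|\bbeta_{t+1}-\bbeta_t\|\le C\eta_t\|\bg_t^+\|$. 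For $C\ge3$ both branches therefore satisfy the uniform bounds $\|\bv_t\|^2\le C^2 b_t$ and, using $\|\bg_t^+\|^2\le b_t$ together with $\|\bbeta_{t+1}+\bbeta_t-2\bbeta_1^\star\|\le2$, the bound $\mu_t\|\bbeta_{t+1}-\bbeta_t\|\le 2C^2 b_t\,\eta_t$. The one remaining ingredient is the monotonicity fact $\mu_t\ge2b_t$, which holds in both branches (in the first, $\mu_t-2b_t=2\|\bg_t^+\|(\|\bg_t\|-\|\bg_t^+\|)\ge0$; in the second, $\mu_t=2C\|\bg_t^+\|\ge2\|\bg_t\|\|\bg_t^+\|$ because $\|\bg_t\|\le1\le C$).

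Finally I would convert the $\eta_t=1/(1/\eta_1+\sum_{i<t}\mu_i)$ denominators into the advertised logarithm. The monotonicity fact gives $1/\eta_t\ge(1/\eta_1-2)+2\sum_{i\le t}b_i$ (the shift $-2$ coming from $b_t\le1$), so each surviving summand is at most a constant multiple of $b_t/\big((1/\eta_1-2)+2\sum_{i\le t}b_i\big)$. The subgradient sum contributes a factor $\tfrac{C^2}{2}$ and the movement sum a factor $2C^2$, for a combined $\tfrac{C^2}{2}+2C^2=\tfrac52 C^2$ multiplying $\sum_t b_t/\big((1/\eta_1-2)+2\sum_{i\le t}b_i\big)$, and Lemma~\ref{lemma:int} applied with $f(x)=1/x$ and offset $a_0=(1/\eta_1-2)/2$ bounds this sum by $\tfrac12\ln\frac{1/\eta_1-2+2\sum_t b_t}{1/\eta_1-2}$. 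Multiplying gives the coefficient $\tfrac54 C^2$ and, with the $\tfrac1{2\eta_1}$ term, exactly the claimed inequality; positivity of $1/\eta_1-2$ is guaranteed by $1/\eta_1=2C\ge6$.

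The main obstacle is the two-piece definition of $\phi_t$: in the shrinking branch the strong-convexity modulus $\mu_t=2C\|\bg_t^+\|$ is inflated by the factor $C$ and is \emph{not} of order $b_t$, so one must verify simultaneously that (i) every product appearing in Lemma~\ref{lemma:OGD} is still controlled by $C^2 b_t$ and (ii) the accumulated curvature $\sum_{i<t}\mu_i$ in the stepsize denominator is nonetheless lower bounded by $2\sum_{i<t}b_i$. Reconciling these two opposite-direction requirements — keeping the numerators small while keeping the denominator large, both measured against $b_t=\|\bg_t\|\|\bg_t^+\|$ — is where the threshold $C\ge3$ and the precise constant $\tfrac54 C^2$ are pinned down; everything else is the bookkeeping of Lemma~\ref{lemma:int}.
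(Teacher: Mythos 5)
Your proof is correct and follows essentially the same route as the paper's: both apply Lemma~\ref{lemma:OGD} with comparator $\bbeta_1^\star$, bound $\|\bv_t\|^2$ and $\mu_t\|\bbeta_{t+1}-\bbeta_t\|$ branch-by-branch by multiples of $\|\bg_t\|\|\bg_t^+\|$, use $\mu_t\geq 2\|\bg_t\|\|\bg_t^+\|$ to relate the stepsize denominators to $\sum_i \|\bg_i\|\|\bg_i^+\|$, shift the index by $-2$, and finish with Lemma~\ref{lemma:int} to obtain the coefficient $\tfrac{5C^2}{4}$. The only difference is organizational: the paper packages the movement-term estimate into a separate lemma (Lemma~\ref{lemma:no_proj_log}) with constant $\max(C^2,3)$, while you inline it with a uniform $C^2$, which coincides for $C\geq 3$.
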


\begin{proof}
Applying Lemma~\ref{lemma:OGD}, we have
\begin{align*}
\sum_{t=1}^T (\phi_t(\bbeta_t) - \phi_t(\bbeta_1^\star))
\leq \frac{\|\bbeta_1 - \bbeta_1^\star\|^2}{2\eta_1} + \sum_{t=1}^T \left(\mu_t \|\bbeta_{t+1}-\bbeta_t\| + \frac{\eta_t \left\| \nabla \phi_t(\bbeta_t)\right\| ^2}{2}\right)~.
\end{align*}

We now consider the two possible cases for $h_t$.

\textbf{If} $\phi_t(\bbeta) = f_t(\bbeta)$, then 
\[
\left\| \nabla \phi_t(\bbeta_t) \right\| 
= \| \bg_t^+ + 2\bbeta_t(\|\bg_t\|^2 - \| \bg_t^+ - \bg_t\|^2) \| 
\leq 3\|\bg_t^+\|~.
\]
Hence, $\left\| \nabla \phi_t(\bbeta_t) \right\|^2 \leq 9\|\bg_t^+\|^2 \leq 9\|\bg_t\| \|\bg_t^+\|$.
Moreover, $\mu_t = 2(\|\bg_t\|^2 - \|\bg_t^+-\bg_t\|^2)\geq 2\|\bg_t\| \|\bg_t^+\|$.

\textbf{If} $\phi_t(\bbeta)=C \|\bg_t^+\| \|\bbeta_t\|^2$, then
\[
\left\| \nabla \phi_t(\bbeta_t)\right\| = \left\| 2 C\bbeta_t \|\bg_t^+\| \right\| \leq C \|\bg_t^+\|~.
\]
Hence, $\left\| \nabla \phi_t(\bbeta_t) \right\|^2 = C^2 \|\bg_t^+\|^2 \leq C^2 \|\bg_t\| \|\bg_t^+\|$.
Moreover, assuming $C\geq 1$, we have $\mu_t = 2 C\|\bg_t^+\| \geq 2\|\bg_t\| \|\bg_t^+\|$.

Consequently, with $C\geq 3$, we have
\[
\sum_{t=1}^T \frac{\eta_t \left\| \nabla \phi_t(\bbeta_t) \right\|^2}{2} 
\leq \frac{1}{2}\sum_{t=1}^T \frac{C^2 \|\bg_t\| \|\bg_t^+\|}{1/\eta_1+\sum_{i=1}^{t-1}\mu_i} 
\leq \frac{C^2}{4}\sum_{t=1}^T \frac{2 \|\bg_t\| \|\bg_t^+\|}{1/\eta_1+2\sum_{i=1}^{t-1}\|\bg_i\| \|\bg_i^+\|}~.
\]

Applying Lemma~\ref{lemma:no_proj_log} and $C\geq 3$, we thus obtain
\begin{align*}
\sum_{t=1}^T \phi_t(\bbeta_t) &- \phi_t(\bbeta_1^\star) 
\leq \frac{1}{2\eta_1} + \left(\max\left(C^2,3\right)+\frac{C^2}{4}\right) \sum_{t=1}^T \frac{2\|\bg_t\| \|\bg_t^+\|}{1/\eta_1+2\sum_{i=1}^{t-1}\|\bg_i\| \|\bg_i^+\|} \\
& \leq \frac{1}{2\eta_1} + \left(\max\left(C^2,3\right)+\frac{C^2}{4}\right) \sum_{t=1}^T \frac{2\|\bg_t\| \|\bg_t^+\|}{1/\eta_1 - 2 +2\sum_{i=1}^{t}\|\bg_i\| \|\bg_i^+\|}\\
&\leq \frac{1}{2\eta_1} + \left(\max\left(C^2,3\right)+\frac{C^2}{4}\right) \left(\ln\left(1/\eta_1-2+2\sum_{t=1}^{T}\|\bg_t\| \|\bg_t^+\|\right) - \ln(1/\eta_1-2) \right)\\
&=  \frac{5 C^2}{4}\ln\left(1 / \eta_1 -2 +2\sum_{t=1}^{T}\|\bg_t\| \|\bg_t^+\|\right) + \frac{1}{2\eta_1} - \frac{5 C^2}{4}\ln(1/\eta_1 -2)~. 
\end{align*}
\end{proof}

\begin{lemma}
\label{lemma:f_h_rel}
Under conditions and notations of Lemma~\ref{lemma:no_proj_OGD} and for any $C\geq 9$, we have
\[
f_t(\bbeta_t) - f_t(\bbeta_1^\star) \leq \phi_t(\bbeta_t) - \phi_t(\bbeta_1^\star)~.
\]

\end{lemma}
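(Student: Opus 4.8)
The plan is to prove the inequality \emph{pointwise in $t$}, splitting into the same two cases that define $\phi_t$. If $\|\bbeta_t\| < \tfrac38$, then $\phi_t\equiv f_t$ as functions of $\bbeta$, so both sides of the claim coincide and the inequality holds with equality; no hypothesis on $C$ is needed here. All the content is therefore in the branch $\tfrac38 \le \|\bbeta_t\| \le \tfrac12$, where $\phi_t(\bbeta) = C\|\bg_t^+\|\,\|\bbeta\|^2$.

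In that branch I would introduce $\psi_t(\bbeta) \triangleq f_t(\bbeta) - \phi_t(\bbeta) = \langle \bg_t^+, \bbeta\rangle + (a_t - b_t)\|\bbeta\|^2$, where $a_t \triangleq \|\bg_t\|^2 - \|\bg_t - \bg_t^+\|^2 = \|\bg_t\|^2(2h_t - h_t^2)$ and $b_t \triangleq C\|\bg_t^+\|$, so that the target inequality is exactly $\psi_t(\bbeta_t) \le \psi_t(\bbeta_1^\star)$. Using $\bg_t^+ = h_t\bg_t$ and $\|\bg_t\|\le 1$ one rewrites $a_t = \|\bg_t^+\|\,\|\bg_t\|(2 - h_t) \le 2\|\bg_t^+\|$, which yields $b_t - a_t \ge (C-2)\|\bg_t^+\| > 0$ (the degenerate case $\bg_t^+ = \boldsymbol 0$ makes $\psi_t \equiv 0$ and is trivial). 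In particular $\psi_t$ has a strictly negative quadratic coefficient, i.e.\ it is concave.

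The remaining step is to certify $\psi_t(\bbeta_t) - \psi_t(\bbeta_1^\star) \le 0$. Rearranging, this is $\langle \bg_t^+, \bbeta_t - \bbeta_1^\star\rangle \le (b_t - a_t)\big(\|\bbeta_t\|^2 - \|\bbeta_1^\star\|^2\big)$. I would bound the left-hand side by Cauchy–Schwarz and the triangle inequality, $\langle \bg_t^+, \bbeta_t - \bbeta_1^\star\rangle \le \|\bg_t^+\|\big(\|\bbeta_t\| + \|\bbeta_1^\star\|\big)$, and bound the right-hand side from below using the radius separation that this branch provides together with the definition of $B_1$: since $\|\bbeta_t\| \ge \tfrac38$ and $\|\bbeta_1^\star\| \le \tfrac14$, we have $\|\bbeta_t\|^2 - \|\bbeta_1^\star\|^2 = \big(\|\bbeta_t\| - \|\bbeta_1^\star\|\big)\big(\|\bbeta_t\| + \|\bbeta_1^\star\|\big) \ge \tfrac18\big(\|\bbeta_t\| + \|\bbeta_1^\star\|\big)$. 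Cancelling the common positive factor $\|\bbeta_t\| + \|\bbeta_1^\star\|$ reduces everything to the scalar inequality $\|\bg_t^+\| \le \tfrac18(b_t - a_t)$, which follows from $b_t - a_t \ge (C-2)\|\bg_t^+\|$ as soon as $C$ is taken large enough.

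The hard part is precisely this last constant-chasing, since the inequality closes only because the curvature $b_t = C\|\bg_t^+\|$ forced in by $\phi_t$ dominates the curvature $a_t$ of $f_t$ by a margin large enough to overcome the factor produced by the gap between the lower radius $\tfrac38$ of this branch and the radius $\tfrac14$ of $B_1$. Tracking these constants and confirming that the stated threshold $C \ge 9$ actually clears the bound is the delicate point: the crude estimate $a_t \le 2\|\bg_t^+\|$ above should be sharpened (exploiting that $a_t = \|\bg_t^+\|\,\|\bg_t\|(2-h_t)$ is small exactly when $\|\bg_t^+\|$ is, so the worst configuration is not simultaneously attainable), whereas every other step is routine.
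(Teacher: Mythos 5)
Your argument is the same as the paper's: the branch $\|\bbeta_t\|<\tfrac38$ is handled by $\phi_t\equiv f_t$, and in the branch $\tfrac38\le\|\bbeta_t\|\le\tfrac12$ both you and the paper compare the linear term $\langle \bg_t^+,\bbeta_t-\bbeta_1^\star\rangle$ (bounded via Cauchy--Schwarz by $\|\bg_t^+\|(\|\bbeta_t\|+\|\bbeta_1^\star\|)$) against the curvature gap times $\|\bbeta_t\|^2-\|\bbeta_1^\star\|^2$, and then exploit the radius separation $\|\bbeta_t\|-\|\bbeta_1^\star\|\ge \tfrac38-\tfrac14=\tfrac18$. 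The only difference is the final constant-chasing, and the gap you flagged there is genuine but \emph{cannot} be closed in the way you hope. After dividing by $\|\bg_t^+\|$ (the case $\bg_t^+=\bzero$ being trivial), the quantity that matters is the ratio $a_t/\|\bg_t^+\|=\|\bg_t\|(2-h_t)$, and this ratio is insensitive to how small $\|\bg_t^+\|$ is: taking $\|\bg_t\|=1$ and $h_t\to 0^+$ drives it to $2$ while keeping $\bg_t^+\neq\bzero$. Both sides of the target inequality scale linearly in $\|\bg_t^+\|$, so the factor $2$ survives division and your argument genuinely needs $C\ge 10$. Indeed, the per-round inequality with $C=9$ is false as a statement about arbitrary admissible points: in one dimension with $g_t=1$, $g_t^+=h_t\in(0,1)$, $\beta_t=\tfrac38$, $\beta_1^\star=-\tfrac14$, it reads $h_t\bigl(\tfrac58+(2-h_t)\tfrac{5}{64}\bigr)\le 9h_t\cdot\tfrac{5}{64}$, i.e.\ $h_t\ge 1$, which fails for every $h_t<1$.

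You should know that the paper's own proof stumbles at exactly this point: its second displayed inequality bounds $\|\bg_t^+\|\|\bg_t\|(2-h_t)\bigl(\|\bbeta_t\|^2-\|\bbeta_1^\star\|^2\bigr)$ by $\|\bg_t^+\|\bigl(\|\bbeta_t\|^2-\|\bbeta_1^\star\|^2\bigr)$, i.e.\ asserts $\|\bg_t\|(2-h_t)\le 1$, which is false in general (take $\|\bg_t\|=1$, $h_t=\tfrac12$). That invalid step is what turns the correct sufficient condition $C\ge \frac{1}{\|\bbeta_t\|-\|\bbeta_1^\star\|}+2\le 10$ into the paper's claimed $C\ge\frac{1}{\|\bbeta_t\|-\|\bbeta_1^\star\|}+1\le 9$. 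The discrepancy is benign for the paper's results: restating the lemma (and Algorithm~\ref{alg:imp_coin_closed_form}, which sets $\eta_1=\frac{1}{2C}$) with $C=10$ leaves Lemmas~\ref{lemma:no_proj_OGD}, \ref{lemma:no_proj}, \ref{lemma:no_proj_log} and the proof of Theorem~\ref{thm:main_closed_form} intact up to the numerical constants hidden in the big-O notation. So your proof, with threshold $C\ge 10$, is the correct version of this lemma; the ``delicate point'' you could not clear is a bug in the paper, not a weakness of your approach, and your proposed sharpening of $a_t\le 2\|\bg_t^+\|$ does not exist.
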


\begin{proof}
We consider the two cases for the losses $h_t$.

When $ \frac{3}{8}\leq\|\bbeta_t\| \leq \frac{1}{2}$, we have $\phi_t(\bbeta_t) =  C \| \bg_t^+\| \|\bbeta_t\|^2$.
Hence, we have
\[
\phi_t(\bbeta_t) - \phi_t(\bbeta_1^\star) 
= C \| \bg_t^+\| \left(\|\bbeta_t\|^2 - \|\bbeta^\star_1\|^2\right) ~.
\]
Moreover,
\begin{align*}
f_t(\bbeta_t) - f_t(\bbeta_1^\star) 
&= \langle\bg_t^+,\bbeta_t -\bbeta^\star_1 \rangle  + (2\|\bg_t\| \|\bg_t^+\| - \|\bg_t^+\|^2) \|\bbeta_t\|^2 -   (2\|\bg_t\| \|\bg_t^+\| - \|\bg_t^+\|^2) \|\bbeta^\star_1\|^2\\
&\leq \| \bg_t^+\| (\| \bbeta_t\| + \|\bbeta^\star_1\|) +  \|\bg_t^+\|\|\bg_t\| (2-h_t) (\|\bbeta_t\|^2 - \|\bbeta^\star_1\|^2)\\
&\leq \| \bg_t^+\| (\| \bbeta_t\| + \|\bbeta^\star_1\|) +  \|\bg_t^+\|(\|\bbeta_t\|^2- \|\bbeta^\star_1\|^2)~.
\end{align*}
Hence, we have $f_t(\bbeta_t) - f_t(\bbeta^\star_1) \leq \phi_t(\bbeta_t) - \phi_t(\bbeta^\star_1)$ iff
\[
\| \bbeta_t\| + \|\bbeta^\star_1\| + \|\bbeta_t\|^2 - \|\bbeta^\star_1\|^2\leq C \left(\|\bbeta_t\|^2 - \|\bbeta^\star_1\|^2\right),
\]
that is
\[
C
\geq \frac{\| \bbeta_t\| + \|\bbeta^\star_1\| + \|\bbeta_t\|^2-\|\bbeta^\star_1\|^2}{\|\bbeta_t\|^2 - \|\bbeta^\star_1\|^2}
= \frac{1}{\|\bbeta_t\| - \|\bbeta^\star_1\|}+1~.
\]
Using the fact that $\frac{1}{\|\bbeta_t\| - \|\bbeta^\star_1\|}+1\leq 9$, gives the stated value for C.
When $\|\bbeta_t\| \leq \frac{3}{8}$, we have $\phi_t(\bbeta) = f_t(\bbeta)$ and $f_t(\bbeta_t) - f_t(\bbeta^\star_1) \leq \phi_t(\bbeta_t) - \phi_t(\bbeta^\star_1)$ is trivially true. 
\end{proof}

\begin{lemma}
\label{lemma:no_proj}
Under the assumptions of Theorem~\ref{thm:main_closed_form} and the notation of Algorithm~\ref{alg:imp_coin_closed_form}, let $1/\eta_1\geq \max(2 C, 16)$. Then, $\|\bbeta_t\| \leq \frac{1}{2}$ for all $t = 1, \dots,T$.
\end{lemma}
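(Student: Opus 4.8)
The plan is to prove $\|\bbeta_t\|\le \tfrac12$ by induction on $t$. The base case is immediate since $\bbeta_1=\bzero$. Before treating the inductive step I would record two elementary facts that hold under the stated assumptions. First, in either branch of the update the curvature $\mu_i$ is nonnegative: in the shrinking branch $\mu_i=2C\|\bg_i^+\|\ge 0$, while in the other branch $\mu_i=2(\|\bg_i\|^2-\|\bg_i^+-\bg_i\|^2)=2\|\bg_i\|^2(2h_i-h_i^2)\ge 0$ because $h_i\in[0,1]$. Consequently $1/\eta_t=1/\eta_1+\sum_{i<t}\mu_i$ is nondecreasing, so $\eta_t\le\eta_1$, and the hypothesis $1/\eta_1\ge\max(2C,16)$ yields the two bounds $\eta_t\le \tfrac{1}{2C}$ and $\eta_t\le\tfrac1{16}$ that drive the whole argument. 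Second, since $\bg_t^+=h_t\bg_t$ with $\|\bg_t\|\le1$ and $h_t\in[0,1]$, we have $\|\bg_t^+\|\le1$ and $2\|\bg_t\|\|\bg_t^+\|-\|\bg_t^+\|^2=\|\bg_t\|^2(2h_t-h_t^2)\le1$.

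For the inductive step I would assume $\|\bbeta_t\|\le\tfrac12$ and split on the branch taken. In the branch $\tfrac38\le\|\bbeta_t\|\le\tfrac12$ the update is purely multiplicative, $\bbeta_{t+1}=(1-2C\eta_t\|\bg_t^+\|)\bbeta_t$, and the shrink factor lies in $[0,1]$: it is clearly at most $1$, and it is nonnegative because $2C\eta_t\|\bg_t^+\|\le 2C\cdot\tfrac1{2C}\cdot1=1$ using $\eta_t\le\tfrac1{2C}$ and $\|\bg_t^+\|\le1$. Hence $\|\bbeta_{t+1}\|\le\|\bbeta_t\|\le\tfrac12$, so this branch is harmless by design.

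The crux is the additive branch $\|\bbeta_t\|<\tfrac38$, where the step can only increase the norm. Here I would apply the triangle inequality to $\bbeta_{t+1}=\bbeta_t-\eta_t\bigl(\bg_t^+ + 2\bbeta_t(2\|\bg_t\|\|\bg_t^+\|-\|\bg_t^+\|^2)\bigr)$ and bound the gradient via the second elementary fact: its norm is at most $\|\bg_t^+\|+2\|\bbeta_t\|(2\|\bg_t\|\|\bg_t^+\|-\|\bg_t^+\|^2)\le 1+2\cdot\tfrac38\cdot1=\tfrac74$. Therefore $\|\bbeta_{t+1}\|\le\|\bbeta_t\|+\tfrac74\eta_t<\tfrac38+\tfrac74\cdot\tfrac1{16}=\tfrac{31}{64}<\tfrac12$, and this final numeric step is exactly where the requirement $1/\eta_1\ge16$ is used to make the additive correction small enough to keep the iterate inside the ball. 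This closes the induction. I expect no obstacle beyond the bookkeeping of matching the threshold $3/8$ against the constant $16$ so that the bound $\tfrac38+\tfrac74\eta_t$ stays below $\tfrac12$; the rest is routine.
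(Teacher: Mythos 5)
Your proof is correct and follows essentially the same route as the paper's: induction on $t$ with the same case split at $\|\bbeta_t\| = \frac{3}{8}$, bounding the multiplicative branch by showing the shrink factor lies in $[0,1]$ and the additive branch by the triangle inequality plus $\eta_t \le \eta_1 \le \frac{1}{16}$. If anything, you are slightly more careful than the paper, which does not explicitly verify the nonnegativity $2C\eta_t\|\bg_t^+\|\le 1$ needed in the shrinking branch, nor the monotonicity $\eta_t\le\eta_1$ via $\mu_i\ge 0$.
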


\begin{proof}
We prove this by induction: \textbf{Base case}: $\bbeta_0 = \boldsymbol{0}$, $\|\bbeta_0\| \leq \frac{1}{2}$ is trivially true. \\
\textbf{Induction step}: Suppose $\|\bbeta_t\|\leq \frac{1}{2}$, we prove $\|\bbeta_{t+1}\| \leq \frac{1}{2}$. We perform the following case analysis:

If $\frac{3}{8}\leq\|\bbeta_t\| \leq \frac{1}{2}$, 
\[
\|\bbeta_{t+1}\| 
= \left\| \bbeta_t - \frac{2 C \bbeta_t \|\bg_t^+\|}{1/\eta_1+\sum_{i=1}^{t-1}\mu_i} \right\|
\leq \|\bbeta_t\| \left(1-\frac{2 C\|\bg_t^+\|}{1/\eta_1+\sum_{i=1}^{t-1} \mu_i}\right)
\leq \|\bbeta_t\|\leq \frac{1}{2}~.
\] 
If $\|\bbeta_t\| < \frac{3}{8}$
\begin{align*}
\| \bbeta_{t+1}\| &= \left\| \bbeta_t - \frac{\bg_t^+ + 2\bbeta_t(2\|\bg_t\|\|\bg_t^+\| - \|\bg_t^+\|^2)}{1/\eta_1+\sum_{i=1}^{t-1}\mu_i} \right\| \leq \| \bbeta_t\| + \left\| \frac{\bg_t^+ + 2\bbeta_t(2\|\bg_t\|\|\bg_t^+\| - \|\bg_t^+\|^2)}{1/\eta_1+\sum_{i=1}^{t-1}\mu_i} \right\| \\
&\leq  \frac{3}{8}+ 2 \eta_1 \leq \frac{1}{2} ~.
\end{align*}
\end{proof}

\begin{lemma}
\label{lemma:no_proj_log}
Under the assumptions of Theorem~\ref{thm:main_closed_form} and the notation of Algorithm~\ref{alg:imp_coin_closed_form}, we have, for any $C\geq 1$, we have 
\[
\| \bbeta_{t+1}-\bbeta_{t}\|
\leq \max\left(C,3\right) \frac{\|\bg_t^+\|}{1/\eta_1+2\sum_{i=1}^{t-1} \|\bg_i\| \|\bg_i^+\|}~.
\]
and
\[
\mu_t \| \bbeta_{t+1}-\bbeta_{t}\|
\leq \max\left(C^2,3\right) \frac{2 \|\bg_t\| \|\bg_t^+\|}{1/\eta_1+2\sum_{i=1}^{t-1} \|\bg_i\| \|\bg_i^+\|}~.
\]
\end{lemma}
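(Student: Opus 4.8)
The plan is to prove both bounds by a direct case analysis mirroring the two branches of the update in Algorithm~\ref{alg:imp_coin_closed_form}, according to whether $\|\bbeta_t\| < \tfrac38$ or $\|\bbeta_t\| \ge \tfrac38$. Before splitting into cases, I would record two facts that let me replace the step size by the denominator appearing in the statement. First, since $\eta_t = 1/(1/\eta_1 + \sum_{i=1}^{t-1}\mu_i)$ and in both branches $\mu_i \ge 2\|\bg_i\|\|\bg_i^+\|$ (as already observed in the proof of Lemma~\ref{lemma:no_proj_OGD}), it follows that $\eta_t \le 1/(1/\eta_1 + 2\sum_{i=1}^{t-1}\|\bg_i\|\|\bg_i^+\|)$. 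Second, Lemma~\ref{lemma:no_proj} gives $\|\bbeta_t\| \le \tfrac12$, and $\bg_t^+ = h_t\bg_t$ with $h_t\in[0,1]$ together with $\|\bg_t\|\le 1$ gives $\|\bg_t^+\| \le \|\bg_t\| \le 1$. These are the only structural inputs; the rest is bookkeeping of constants.

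For the first inequality, in the branch $\|\bbeta_t\| < \tfrac38$ I would apply the triangle inequality to the update to get $\|\bbeta_{t+1}-\bbeta_t\| \le \eta_t(\|\bg_t^+\| + 2\|\bbeta_t\|(2\|\bg_t\|\|\bg_t^+\| - \|\bg_t^+\|^2))$, then bound $2\|\bbeta_t\| \le 1$ and $2\|\bg_t\|\|\bg_t^+\| - \|\bg_t^+\|^2 \le 2\|\bg_t\|\|\bg_t^+\| \le 2\|\bg_t^+\|$, which collapses the right-hand side to $3\eta_t\|\bg_t^+\|$. In the branch $\|\bbeta_t\|\ge\tfrac38$ the update is a pure contraction, so $\|\bbeta_{t+1}-\bbeta_t\| = 2C\eta_t\|\bg_t^+\|\|\bbeta_t\| \le C\eta_t\|\bg_t^+\|$ using $\|\bbeta_t\|\le\tfrac12$. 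Since $\max(C,3)$ dominates both prefactors, applying the $\eta_t$ bound yields the claim.

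For the second inequality I would reuse the per-case bounds on $\|\bbeta_{t+1}-\bbeta_t\|$ and multiply by the corresponding $\mu_t$. In the first branch $\mu_t = 2(\|\bg_t\|^2 - \|\bg_t^+-\bg_t\|^2) = 2\|\bg_t\|^2 h_t(2-h_t) \le 2\|\bg_t\|^2 \le 2\|\bg_t\|$ (the middle equality using $\bg_t^+=h_t\bg_t$, and $h_t(2-h_t)\le1$ on $[0,1]$); multiplying by $3\eta_t\|\bg_t^+\|$ gives $\mu_t\|\bbeta_{t+1}-\bbeta_t\| \le 6\eta_t\|\bg_t\|\|\bg_t^+\| = 3\eta_t\cdot 2\|\bg_t\|\|\bg_t^+\|$. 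In the second branch $\mu_t = 2C\|\bg_t^+\|$, so multiplying by $C\eta_t\|\bg_t^+\|$ gives $2C^2\eta_t\|\bg_t^+\|^2 \le C^2\eta_t\cdot 2\|\bg_t\|\|\bg_t^+\|$, using $\|\bg_t^+\|^2 \le \|\bg_t\|\|\bg_t^+\|$. Both prefactors are dominated by $\max(C^2,3)$, and a final application of the $\eta_t$ bound finishes the proof.

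The argument is essentially mechanical, so there is no deep obstacle; the only points requiring care are passing from $\sum_i\mu_i$ to $2\sum_i\|\bg_i\|\|\bg_i^+\|$ in the denominator via the uniform lower bound $\mu_i\ge2\|\bg_i\|\|\bg_i^+\|$, and checking that the constants produced in each branch (namely $3$ and $C$ for the first bound, and $3$ and $C^2$ for the second) are correctly absorbed into $\max(C,3)$ and $\max(C^2,3)$ respectively. Keeping $\|\bbeta_t\|\le\tfrac12$ and $\|\bg_t^+\|\le\|\bg_t\|\le1$ available throughout is what makes every constant come out clean.
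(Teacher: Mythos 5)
Your proof is correct and follows essentially the same route as the paper's: the same two-branch case analysis on $\|\bbeta_t\|$, the same gradient-norm bounds ($3\|\bg_t^+\|$ and $C\|\bg_t^+\|$), the same bounds $\mu_t \le 2\|\bg_t\|$ and $\mu_t = 2C\|\bg_t^+\|$, and the same replacement of $\sum_i \mu_i$ by $2\sum_i\|\bg_i\|\|\bg_i^+\|$ in the denominator. The constants you obtain in each branch match the paper's and are absorbed into $\max(C,3)$ and $\max(C^2,3)$ exactly as in the original argument.
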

\begin{proof}
We consider the two cases for $h_t$.
\begin{itemize}
\item If $\phi_t(\bbeta) = C\|\bg_t^+\| \|\bbeta\|^2$, we have
$\mu_t = 2 C \|\bg_t^+\|$. Assuming $C \geq 1$ we have $2 C \|\bg_t^+\| \geq 2\|\bg_t\| \|\bg_t^+\|$.
Moreover, $\left\| \nabla \phi_t(\bbeta_t) \right\| = 2 C\| \bg_t^+\| \| \bbeta_t\| \leq C\| \bg_t^+\|$.
So, we have
\[
\mu_t \|\bbeta_{t+1}-\bbeta_t \| 
\leq 2 C \|\bg_t^+\| \frac{C\|\bg_t^+\|}{1/\eta_1+\sum_{i=1}^{t-1} \mu_i}
\leq C^2 \frac{2\|\bg_t^+\|\|\bg_t\|}{1/\eta_1+2\sum_{i=1}^{t-1} \|\bg_i\| \|\bg_i^+\|} ~.
\]
\item If $\phi_t(\bbeta) = f_t(\bbeta)$, we have
\[
2 \|\bg_t\|
\geq 2 \|\bg_t\|^2 
\geq \mu_t = 2(\|\bg_t\|^2 - \|\bg_t^+ - \bg_t\|^2) 
= 2\|\bg_t\| \|\bg_t^+\| (2-h_t)
\geq 2\|\bg_t\| \|\bg_t^+\|~.
\]
Moreover, $\left\| \nabla \phi_t(\bbeta_t) \right\| \leq 3\|\bg_t^+\|$. 
So, we obtain $\mu_t \|\bbeta_{t+1}-\bbeta_t \| \leq 3 \frac{2\|\bg_t^+\|\|\bg_t\|}{1/\eta_1+2\sum_{i=1}^{t-1} \|\bg_i\| \|\bg_i^+\|}$ .

\end{itemize}
Taking the maximum of the two cases, we obtain the stated upper bounds.

\end{proof}

We can now present the proof of Theorem~\ref{thm:main_closed_form}.
\begin{proof}[Proof of Theorem~\ref{thm:main_closed_form}]
Using Lemma~\ref{lemma:one_step} in the first inequality, and Lemma~\ref{lemma:no_proj_log} in the second inequality, we have
\begin{align*}
&\sum_{t=1}^T \left(\ln(1-\langle \bg_t,\bbeta_{t}\rangle) - \ln(1+\langle \bg_t^+-\bg_t,\bbeta_{t+1}\rangle)\right) \\
&\geq \sum_{t=1}^T \left(- \langle \bg^+_t, \bbeta_t\rangle -  (\|\bg_t\|^2 - \|\bg_t^+ - \bg_t\|^2 )\|\bbeta_t\|^2 -2\|\bg_t\| \|\bbeta_{t+1}-\bbeta_t\|\right)\\
&\geq \sum_{t=1}^T \left(- \langle \bg^+_t, \bbeta_t\rangle -  (\|\bg_t\|^2 - \|\bg_t^+ - \bg_t\|^2 )\|\bbeta_t\|^2 -C\frac{2\|\bg_t\|\| \bg_t^+\|}{1/\eta_1-2+2\sum_{i=1}^t \|\bg_i\|\|\bg_i^+\|}\right)\\
&\geq - C\ln\left(1/\eta_1-2+2\sum_{t=1}^T \|\bg_t\|\|\bg_t^+\|\right) -\sum_{t=1}^T f_t(\bbeta_t)\\
&\geq -\frac{1}{2\eta_1} + \frac{5C^2}{4}\ln(1/\eta_1 -2) - \left(\frac{5C^2}{4}+C\right) \ln\left(1/\eta_1 -2+2\sum_{t=1}^T \|\bg_t\|\|\bg_t^+\|\right) - \sum_{t=1}^T f_t(\bbeta_1^\star)\\
&\geq -\left(\frac{5C^2}{4}+C\right) \ln\left(1/\eta_1 -2+2\sum_{t=1}^T \|\bg_t\|\|\bg_t^+\|\right) - \sum_{t=1}^T f_t(\bbeta_1^\star)\\
& = -110.25 \ln\left(16+2\sum_{t=1}^T \|\bg_t\|\|\bg_t^+\|\right) - \sum_{t=1}^T f_t(\bbeta_1^\star)~,
\end{align*}
The second to the last inequality is from Lemma~\ref{lemma:f_h_rel} and Lemma~\ref{lemma:no_proj_OGD}. $C=9$ gives rise to the last inequality and the last equation. 
We now use Lemma~\ref{lemma:proj} to obtain
\[
- \sum_{t=1}^T f_t(\bbeta_1^\star)
\geq -\max \left\{\frac{-\left\|\sum_{t=1}^{T}\bg_t^+\right\|}{8}, \frac{-\left\|\sum_{t=1}^T \bg_t^+\right\|^2}{2\sum_{t=1}^T \mu_t} \right\}\\
= \min \left\{\frac{\left\|\sum_{t=1}^{T}\bg_t^+\right\|}{8}, \frac{\left\|\sum_{t=1}^T \bg_t^+\right\|^2}{2\sum_{t=1}^T \mu_t} \right\}~.
\]
The rest proof is similar to the proof of Theorem~\ref{thm:main}. Finally, we obtain the same bound as in Theorem~\ref{thm:main}, up to constants hidden in big O notation.

\end{proof}

\section{Proof of Theorem~\ref{thm:coor}}

\begin{proof}
In each coordinate, we perform the regret decomposition as following:
\begin{align*}
\Regret&_T(\bu) =\sum_{t=1}^T \ell_t(\bw_t) - \ell_t(\bu) = \sum_{t=1}^T \hat{\ell}_t(\bw_t) - \hat{\ell}_t(\bw_{t+1})+\hat{\ell}_t(\bw_{t+1})-\hat{\ell}_t(\bu)\\
& \leq \sum_{t=1}^T \langle \bg_t,\bw_t - \bw_{t+1} \rangle + \langle \bg_t^+ , \bw_{t+1}-\bu \rangle = \sum_{t=1}^T \sum_{i=1}^d g_{t,i}(w_{t,i }- w_{t+1,i}) + g_{t,i}^+(w_{t+1,i}-u_i)\\
&= \sum_{i=1}^d \sum_{t=1}^T g_{t,i}(w_{t,i }- w_{t+1,i}) + g_{t,i}^+(w_{t+1,i}-u_i) \\
&= \sum_{i=1}^d  u_i\left( -\sum_{t=1}^T g_{t,i}^+ \right)  - \left( - \sum_{t=1}^T  (g_{t,i}(w_{t,i} - w_{t+1,i})  + g_{t,i}^+ w_{t+1,i}) \right)~,
\end{align*}
where $\bg_t \in \partial \ell_t(\bw_t)$, $\bg_t \in \partial \hat{\ell}_t(\bw_t)$, $ \bg_t^+ \in \partial \hat{\ell}_t(\bw_{t+1})$.

Define $\Wealth_{T,i} = \epsilon - \sum_{t=1}^T  (g_{t,i}(w_{t,i} - w_{t+1,i})  + g_{t,i}^+ w_{t+1,i})$. Suppose we obtain a bound $\Wealth_{T,i}\geq \psi_T\left( \sum_{i=1}^{T} g_{t,i}^+\right)$ for some $\psi_T$. Using the definition of Fenchel conjugate, and the lower bound on the wealth in each coordinate, we have
\begin{align*}
\Regret_T(\bu)  &\leq \sum_{i=1}^d  u_i\left( -\sum_{t=1}^T g_{t,i}^+ \right) - (\Wealth_{T,i}-\epsilon) 
\leq  \sum_{i=1}^d \epsilon + u_i\left( -\sum_{t=1}^T g_{t,i}^+ \right) - \psi_T\left( \sum_{i=1}^{T} g_{t,i}^+\right)\\
&\leq \sum_{i=1}^d \epsilon + \sup_{y} y\cdot u_i - \psi_T(y) = \sum_{i=1}^d \epsilon + \psi^*_T(u_i)~.
\end{align*}
In each coordinate, Algorithm~\ref{alg:imp_coin_coor} is a specific case of running Algorithm~\ref{alg:imp_coin_closed_form} with $d=1$. Applying the result in Theorem~\ref{thm:main_closed_form}, we obtain the final bound. 
\end{proof}

\section{List of Datasets}

In our empirical evaluation, we used 3 regression datasets and 3 classification datasets from the LIBSVM website~\citep{ChangL01} and OpenML~\citep{VanschorenVBT2013}, randomly selected among the ones with a large number of samples. For the OpenML datasets, categorical features are one-hot-encoded. A short summary of the datasets is in Table~\ref{table:dataset}.

\begin{table}[h]
\caption{Datasets in experiments.}
\label{table:dataset}
\begin{center}
\begin{small}
\begin{sc}
\begin{tabular}{lcccr}
\toprule
Dataset & Type & Number of samples & Number of features \\
\midrule
CPU-act   & classification & 8192  & 21 \\
2dPlane   & classification & 40768  & 10\\
Houses   & classification & 20640  & 8\\
Rainfall& regression     & 16755  & 3\\
Bank32nh& regression     & 8192  & 32\\
Houses-8L& regression     &22784  & 8\\
\bottomrule
\end{tabular}
\end{sc}
\end{small}
\end{center}
\end{table}

\end{document}